\newcommand \cB{{\cal B}}
\newcommand \cC{{\cal C}}
\newcommand \cF{{\cal F}}
\newcommand \cL{{\cal L}}
\newcommand \cN{{\cal N}}
\newcommand \R{{\mathbb  R}}
\newcommand \E{{\mathbb  E}}
\newcommand{\norm}[1]{\|#1\|}%
\newcommand{\bs}{\boldsymbol}
\newcommand{\FF}{\hat {\mathcal F}}
\renewcommand{\P}{\mathbb P}%
\DeclareMathOperator{\conv}{conv}
\DeclareMathOperator{\seg}{seg}
\DeclareMathOperator{\Star}{star}
\DeclareMathOperator{\diam}{diam}
\DeclareMathOperator*{\argmin}{argmin}
\DeclareMathOperator{\pen}{pen}
\newcommand{\1}{{\rm 1}\kern-0.24em{\rm I}}
\renewcommand{\hat}{\widehat}
\newtheorem{theorem}{Theorem}%
\newtheorem{corollary}{Corollary}%
\newtheorem{lemma}{Lemma}%
\theoremstyle{assumption}%
\newtheorem{assumption}{Assumption}{\bf}{\rm}%
\newtheorem{definition}{Definition}%
\newtheorem{question}{Question}%
\theoremstyle{remark}%
\newtheorem{remark}{Remark}%
\begin{document}

\newlength{\figwidth}
\newlength{\figheight}

\title{Hyper-sparse optimal aggregation}%

\author{St\'ephane Ga\"iffas$^{1, 3}$ and Guillaume Lecu\'e$^{2, 3}$}

\footnotetext[1]{Universit\'e Pierre et Marie Curie - Paris~6,
  Laboratoire de Statistique Th\'eorique et Appliqu\'ee. \emph{email}:
  \texttt{stephane.gaiffas@upmc.fr}}

\footnotetext[2] {CNRS, Laboratoire d'Analyse et Math\'ematiques
  appliqu\'ees, Universit\'e Paris-Est - Marne-la-vall\'ee
  \emph{email}: guillaume.lecue@univ-mlv.fr}

\footnotetext[3]{This work is supported by French Agence Nationale de
  la Recherce (ANR) ANR Grant \textsc{``Prognostic''}
  ANR-09-JCJC-0101-01. (\texttt{http://www.lsta.upmc.fr/prognostic/index.php})
}

\maketitle

  \begin{abstract}
    In this paper, we consider the problem of \emph{hyper-sparse
      aggregation}. Namely, given a dictionary $F = \{ f_1, \ldots,
    f_M \}$ of functions, we look for an optimal aggregation algorithm
    that writes $\tilde f = \sum_{j=1}^M \theta_j f_j$ with as many
    zero coefficients $\theta_j$ as possible. This problem is of
    particular interest when $F$ contains many irrelevant functions
    that should not appear in $\tilde{f}$. We provide an exact oracle
    inequality for $\tilde f$, where only two coefficients are
    non-zero, that entails $\tilde f$ to be an optimal aggregation
    algorithm.  Since selectors are suboptimal aggregation procedures,
    this proves that $2$ is the minimal number of elements of $F$
    required for the construction of an optimal aggregation procedures
    in every situations. A simulated example of this algorithm is
    proposed on a dictionary obtained using LARS, for the problem of
    selection of the regularization parameter of the LASSO. We also give an
    example of use of aggregation to achieve minimax adaptation over
    anisotropic Besov spaces, which was not previously known in
    minimax theory (in regression on a random design). \\
    
    \noindent%
    \emph{Keywords.} Aggregation~; Exact oracle inequality~; Empirical
    risk minimization~; Emprical process theory~; Sparsity~; Minimax
    adaptation
  \end{abstract}

\section{Introduction}
\label{sec:introduction}

\subsection{Motivations}

In this paper, we consider the problem of \emph{sparse aggregation}.
Namely, given a dictionary $F = \{ f_1, \ldots, f_M \}$ of functions,
we look for an optimal aggregation algorithm that writes $\hat f =
\sum_{j=1}^M \theta_j f_j$ with as many zero coefficients $\theta_j$
as possible. This question appears when one wants to use aggregation
procedures to construct adaptive procedures. Indeed, in practice many
elements of the dictionary appear to be irrelevant.  We
would like to remove completely these irrelevant elements of the
dictionary from the final aggregate, while keeping the optimality of
the procedure (``optimality'' is used in reference to the definition
of ``optimal aggregation procedure'' provided in
\cite{tsy:03} and \cite{LM06}). Moreover, one could imagine large dictionaries
containing many different types of estimators (kernel estimators,
projection estimators, etc.) with many different parameters (smoothing
parameters, groups of variables, etc.). Some of the estimators are
likely to be more adapted than the others, depending on the kind of
models that fits well to the data. So, we would like to construct
procedures that can adapt to different models combining only the
estimators, contained in the dictionary, that are the more adapted for
this model.

Up to now, optimal procedures are based on exponential weights
(cf. \cite{MR2458184}, \cite{dalalyan_tsybakov07}) providing
aggregation procedures with no zero coefficients, even for the worse
elements in the dictionary. An improvement going in the direction of
sparse aggregation has been made using a preselection step in
\cite{LM06}. This preselection step allows to remove all the
estimators in $F$ which performs badly on a learning subsample.

In the present work, we prove that optimal aggregation algorithms with
only two non-zero coefficients exists, see
Section~\ref{sec:ERM_finite}, Theorem~\ref{thm:aggregation}. This
means that the aggregate writes as a convex combination of only two
elements of $F$. Then, we propose an original proof of an already
known result, involving an explicit geometrical setup, of the fact
that selecting a single element of $F$ using empirical risk
minimization is a suboptimal aggregation procedure, see
Theorem~\ref{TheoWeaknessERMRegression}. Finally, we use our
``hyper-sparse'' aggregate on a dictionary ``consisting'' of penalized
empirical risk minimizers (PERM). The aim is to construct, as an
application of the previous analysis, an adaptive estimator over
anisotropic Besov balls, namely an estimator that adapts to the
unknown anisotropic smoothness of the regression function, in the
sense that it achieves the optimal minimax rate without an a priori
knowledge of the anisotropic smoothness parameters. This result was
not, as far as we know, previously proposed in minimax theory. To do
so, we use recent results by \cite{Mendelson08regularizationin} on
regularized learning together with our oracle inequality for the
hyper-spare aggregate.

\subsection{The model}

Let $\Omega$ be a measurable space endowed with a probability measure
$\mu$ and $\nu$ be a probability measure on $\Omega \times \R$ such
that $\mu$ is its marginal on $\Omega$. Assume $(X,Y)$ and $D_n :=
(X_i,Y_i)_{i=1}^n$ to be $n+1$ independent random variables
distributed according to~$\nu$. We work under the following
assumption.
\begin{assumption}
  \label{ass:model}
  We can write
  \begin{equation}
    \label{eq:model}
    Y = f_0(X) + \varepsilon,
  \end{equation}
  where $\varepsilon$ is such that $\E(\varepsilon | X) = 0$ and
  $\E(\varepsilon^2 | X) \leq \sigma_{\varepsilon}^2$ a.s. for some
  constant $\sigma_{\varepsilon} > 0$.
\end{assumption}
We will assume further that either $Y$ is bounded: $\norm{Y}_\infty <
+\infty,$ or that $\varepsilon$ is subgaussian:
$\norm{\varepsilon}_{\psi_2} := \inf \{ c > 0 : \E[ \exp( (\varepsilon
/ c)^2) ] \leq 2 \} < +\infty$, see below. We want to estimate the
regression function $f_0$ using the observations $D_n$. If $f$ is a
function, its error of prediction is given by the risk
\begin{equation*}
  R(f) = \E (f(X) - Y)^2,
\end{equation*}
and if $\hat f$ is a random function depending on the data $D_n$, the
error of prediction is the conditional expectation
\begin{equation*}
  R(\hat{f}) = \E [ (\hat{f}(X) -Y)^2 | D_n].
\end{equation*}
Given a set of functions $F$, a natural way to approximate $f_0$ is to
consider the empirical risk minimizer (ERM), that minimizes the
functional
\begin{equation*}
  R_n(f) := \frac 1n \sum_{i=1}^n (Y_i - f(X_i))^2
\end{equation*}
over $F$. This very basic principle  is at the core of the
procedures proposed in this paper. We will also commonly use the
following notations. If $f^F \in \argmin_{f \in F} R(f)$, we will
consider the excess loss
\begin{equation*}
  \mathcal L_f = \mathcal L_F(f)(X, Y) := (Y - f(X))^2 - (Y -
  f^F(X))^2,
\end{equation*}
and use the notations
\begin{equation*}
  P \mathcal L_f := \E \mathcal L_f(X, Y), \quad P_n \mathcal L_f :=
  \frac 1n \sum_{i=1}^n \mathcal L_f(X_i, Y_i).
\end{equation*}

\section{Hyper-sparse aggregation}
\label{sec:ERM_finite}

\subsection{The aggregation problem}

Assume that we are given a finite set $F = \{ f_1, \ldots, f_M \}$ of
functions (usually called a dictionary), the aggregation problem is to
construct procedures $\tilde{f}$ (usually called an aggregate)
satisfying inequalities of the form
\begin{equation}
  \label{eq:Oracle-inequality-definition}
  R(\tilde{f}) \leq c \min_{f\in F}R(f) + r(F,n),
\end{equation}
where the result holds with high probability or in
expectation. Inequalities of the form
\eqref{eq:Oracle-inequality-definition} are called oracle inequalities
and $r(F,n)$ is called the residue. We want the residue to be as small
as possible. A classical result (cf. \cite{MR2458184}) says that
aggregates with values in $F$ cannot mimic exactly (that is for $c =
1$) the oracle faster than $r(F,n) \sim ((\log M) /
n)^{1/2}$. Nevertheless, it is possible to mimic the oracle up to the
residue $(\log M) / n$ (see \cite{MR2458184} and \cite{LM06}, among
others).

An aggregate typically write as a convex combination of the elements
of $F$, namely
\begin{equation*}
  \hat f := \sum_{j=1}^M \theta_j f_j,
\end{equation*}
where $\theta := (\theta_j(D_n, F))_{j=1}^M$ is a map $\{ 1, \ldots, M
\} \rightarrow \Theta$, where
\begin{equation*}
  \Theta := \Big\{ \lambda \in (\mathbb R^+)^M : \sum_{i=1}^M \lambda_j
  = 1 \Big\}.
\end{equation*}
Popular examples of aggregation algorithms are the aggregate with
cumulated exponential weights (ACEW), see \cite{catbook:01,
  leung_barron06, MR2458184, juditsky_nazin05, audibert-2009-37},
where the weights are given by
\begin{equation*}
  \theta_j^{(\rm ACEW)} := \frac1n \sum_{k=1}^n \frac{\exp(
    -\sum_{i=1}^k (Y_i - f_j(X_i))^2 / T)}{\sum_{l=1}^M \exp(-
    \sum_{i=1}^k (Y_i - f_l(X_i))^2 / T) },
\end{equation*}
where~$T$ is the so-called temperature parameter, and the aggregate
with exponential weights (AEW), see \cite{dalalyan_tsybakov07} among
others, where
\begin{equation*}
  \theta_j^{(\rm AEW)} := \frac{\exp(-\sum_{i=1}^n (Y_i - f_j(X_i))^2
    / T)}{\sum_{l=1}^M \exp(- \sum_{i=1}^n (Y_i - f_l(X_i))^2 / T) }.
\end{equation*}
The ACEW satisfies~\eqref{eq:Oracle-inequality-definition} with $c =
1$ and $r(F, n) \sim (\log M) / n$, see references above, hence it is
optimal in the sense of \cite{tsy:03}. In these aggregates, no
coefficient equals zero, although they can be very small, depending on
the value of $R_n(f_j)$ and~$T$ [this makes in particular the choice
of $T$ of importance]. In this paper, we look for an aggregation
algorithm that shares the same property of optimality, but with as few
non-zero coefficients $\theta_j$ as possible, hence the name
\emph{hyper-sparse aggregate}. We ask for the following question:
\begin{question}
  What is the minimal number of non-zero coefficients $\theta_j$ such
  that an aggregation procedure $\sum_{j=1}^M \theta_j f_j$ is
  optimal?
\end{question}
It turns out that the answer to this question is two. Indeed, if every
coefficient is zero, excepted for one, the aggregate coincides with an
element of $F$, and we know that such a procedure can only achieve the
rate $((\log M) / n)^{1/2}$ (see \cite{MR2458184} and
Theorem~\ref{TheoWeaknessERMRegression} below where, in the particular
case of the ERM, the suboptimality of this kind of procedure can be
understood from a geometrical point of view (this differs from the
statistical point of view from \cite{MR2458184} which involves
``min-max'' type theorem)). In Definition~\ref{def:aggprocedures}, we
construct three procedures, where two of them (see~\eqref{eq:segments}
and~\eqref{eq:starified}), only have two non-zero coefficients
$\theta_j$, and we prove in Theorem~\ref{thm:aggregation} below that
these procedures are optimal. We shall assume one of the following.
\begin{assumption}
  One of the following holds.
  \begin{itemize}
  \item There is a constant $b > 0$ such that:
    \begin{equation}
      \label{ass:bounded}
      \max( \norm{Y}_\infty, \sup_{f \in F} \norm{f}_\infty) \leq b.
    \end{equation}
  \item There is a constant $b > 0$ such that:
    \begin{equation}
      \label{ass:subgaussian}
      \max( \norm{\varepsilon}_{\psi_2}, \norm{\sup_{f \in F} |f(X) -
        f_0(X)|}_{\psi_2} ) \leq b.
    \end{equation}
  \end{itemize}
\end{assumption}
Note that Assumption~\eqref{ass:subgaussian} allows an unbounded
dictionary $F$. The results given below differ a bit depending on the
considered assumption (there is an extra $\log n$ term in the
subgaussian case given by (\ref{ass:subgaussian})). To simplify the
notations, we assume from now that we have $2n$ observations from a
sample $D_{2n} = (X_i,Y_i)_{i=1}^{2n}$. Let us define our aggregation
procedures.

\begin{definition}[Aggregation procedures]
  \label{def:aggprocedures}
  Follow the following steps:
  \begin{description}
  \item [(0. Initialization)] Choose a confidence level $x >
    0$. If~\eqref{ass:bounded} holds, define
    \begin{equation*}
      \phi = \phi_{n, M}(x) = b \sqrt{\displaystyle \frac{\log M +
          x}{n}}.
    \end{equation*}
    If~\eqref{ass:subgaussian} holds, define
    \begin{equation*}
      \phi = \phi_{n, M}(x) = (\sigma_{\varepsilon} + b)
      \sqrt{\displaystyle \frac{(\log M + x) \log n}{n}}.
    \end{equation*}
  \item [(1. Splitting)] Split the sample $D_{2n}$ into $D_{n, 1} =
    (X_i,Y_i)_{i=1}^{n}$ and $D_{n, 2} = (X_i,Y_i)_{i=n+1}^{2n}$.
  \item [(2. Preselection)] Use $D_{n, 1}$ to define a random subset
    of $F:$
    \begin{equation}
      \label{eq:random-set}
      \hat{F}_1 = \Big\{ f \in F : R_{n, 1} (f) \leq R_{n,
        1}( \hat f_{n, 1}) + c \max \big( \phi \norm{\hat f_{n, 1} -
        f}_{n, 1}, \phi^2 \big) \Big\},
    \end{equation}
    where $\norm{f}_{n, 1}^2 = n^{-1} \sum_{i=1}^n f(X_i)^2,$ $R_{n,
      1}(f) = n^{-1} \sum_{i=1}^n (f(X_i) - Y_i)^2,$ $\hat f_{n, 1}
    \in \argmin_{f \in F} R_{n, 1}(f)$.
  \item [(3. Aggregation)] Choose $\FF$ as one of the following sets:
    \begin{align}
      \label{eq:convex_hull}
      \FF &= \conv(\hat F_1) = \text{ the convex hull of }
      \hat F_1 \\
      \label{eq:segments}
      \FF &= \seg(\hat F_1) = \text{ the segments between the
        functions in } \hat F_1 \\
      \label{eq:starified}
      \FF &= \Star(\hat f_{n, 1}, \hat F_1) = \text{ the segments
        between } \hat f_{n, 1} \text{ with the elements of } \hat
      F_1,
    \end{align}
    and return the ERM relative to $D_{n, 2}:$
    \begin{equation*}
      \tilde{f} \in \argmin_{g \in \FF} R_{n, 2}(g),
    \end{equation*}
    where $R_{n, 2}(f) = n^{-1} \sum_{i=n+1}^{2n} (f(X_i) - Y_i)^2.$
  \end{description}
\end{definition}

These algorithms are illustrated in Figures~\ref{fig:aggregates}
and~\ref{fig:aggsimu}. In Figure~\ref{fig:aggregates} we summarize the
aggregation steps in the three cases. In Figure~\ref{fig:aggsimu} we
give a simulated illustration of the preselection step, and we show
the value of the weights of the AEW for a comparison. As mentioned
above, the Step~3 of the algorithm returns, when $\FF$ is given
by~\eqref{eq:segments} or~\eqref{eq:starified}, a function which is a
convex combination of only two functions in $F$, among the ones
remaining after the preselection step. The preselection step was
introduced in \cite{LM06}, with the use of~\eqref{eq:convex_hull} in
the aggregation step.

\begin{figure}[htbp]
  \centering
  \includegraphics[width=12cm]{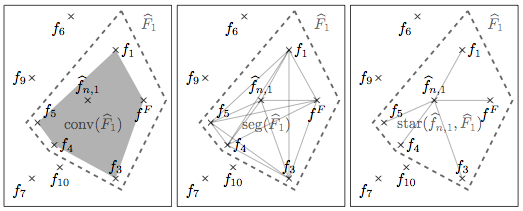}
  \caption{Aggregation algorithms: ERM over $\conv(\hat F_1),$
    $\seg(\hat F_1),$ or $\Star(\hat f_{n, 1}, \hat F_1)$.}
  \label{fig:aggregates}
\end{figure}

Each of the three procedures proposed in
Definition~\ref{def:aggprocedures} are optimal in view of
Theorem~\ref{thm:aggregation} below. From the computational point of
view, procedure~\eqref{eq:starified} is the most appealing: an ERM in
$\Star(\hat f_{n, 1}, \hat F)$ can be computed in a fast and explicit
way, see Algorithm~\ref{alg:star-shaped} below. The next Theorem
proves that each of these aggregation procedures are optimal.

\begin{theorem}
  \label{thm:aggregation}
  Let $x > 0$ be a confidence level, $F$ be a dictionary with
  cardinality~$M$ and $\tilde f$ be one of the aggregation procedure
  given in Definition~\ref{def:aggprocedures}. If 
  \begin{equation*}
    \max( \norm{Y}_\infty, \sup_{f \in F} \norm{f}_\infty) \leq b,
  \end{equation*}
  we have, with $\nu^{2n}$-probability at least $1 - 2 e^{-x}:$
  \begin{equation*}
    R(\tilde{f}) \leq \min_{f \in F} R(f) + c_b \frac{(1 +
      x) \log M}{n},
  \end{equation*}
  where $c_b$ is a constant depending on $b,$ and where we recall that
  $R(\tilde f) = \E[ (Y - \tilde f(X))^2 | (X_i,Y_i)_{i=1}^{2n} ]$. If
  \begin{equation*}
    \max( \norm{\varepsilon}_{\psi_2}, \norm{\sup_{f \in F} |f(X) -
      f_0(X)|}_{\psi_2} ) \leq b,
  \end{equation*}
  we have, with $\nu^{2n}$-probability at least $1 - 4 e^{-x}:$
  \begin{equation*}
    R(\tilde{f}) \leq \min_{f \in F} R(f) + c_{\sigma_{\varepsilon},
      b} \frac{(1 + x)\log M \log n}{n}.
  \end{equation*}
\end{theorem}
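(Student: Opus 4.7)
The plan is to condition on the first subsample $D_{n, 1}$, so that the preselected set $\hat F_1$, the ERM $\hat f_{n, 1}$, and hence $\FF$ become deterministic, and then analyze the aggregation step as an ERM over $\FF$ based on the fresh sample $D_{n, 2}$. This decoupling reduces the proof to two independent tasks: controlling what the preselection keeps, and running a localized empirical risk analysis on a now-fixed class in which the oracle already sits.

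For the preselection step, I would show that, with $\nu^n$-probability at least $1 - e^{-x}$, any oracle $f^F \in \argmin_{f \in F} R(f)$ belongs to $\hat F_1$. Writing
\[
R_{n, 1}(f^F) - R_{n, 1}(\hat f_{n, 1}) = (P_n - P)\mathcal L_{f^F} + P \mathcal L_{f^F} \leq (P_n - P)\mathcal L_{f^F},
\]
the right-hand side is controlled, via a Bernstein or Talagrand type inequality together with a peeling argument on the shells $\{ f \in F : \norm{f - \hat f_{n, 1}}_{n, 1} \simeq 2^k \phi \}$ and the margin estimate $\var(\mathcal L_f) \lesssim b^2 \norm{f - \hat f_{n, 1}}_{L^2(\mu)}^2$, by $c \max(\phi \norm{f^F - \hat f_{n, 1}}_{n, 1}, \phi^2)$. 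This is exactly the criterion defining $\hat F_1$ in~\eqref{eq:random-set}, so $f^F \in \hat F_1 \subset \FF$. A byproduct of the same peeling is a localization estimate: every $f \in \hat F_1$ satisfies $\norm{f - \hat f_{n, 1}}_{n, 1} \lesssim \phi + \sqrt{R(f) - R(\hat f_{n, 1})}$, which restricts the effective diameter of $\FF$ to the scale $\phi$.

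Conditionally on $D_{n, 1}$, I would then apply the standard exact-oracle analysis of ERM over $\FF$ with the independent sample $D_{n, 2}$. Since $f^F \in \FF$, the Bernstein/margin condition $\var(\mathcal L_g) \leq c \norm{g - f^F}_{L^2(\mu)}^2$ on a convex class with fixed minimizer $f^F$ controls the gap $R(\tilde f) - R(f^F)$ at the rate $\phi^2 \simeq (\log M)/n$. For $\FF = \conv(\hat F_1)$ this is precisely the argument of \cite{LM06}. For $\FF = \seg(\hat F_1)$ and $\FF = \Star(\hat f_{n, 1}, \hat F_1)$, the class is a union of at most $\binom{M}{2}$, respectively $M$, one-dimensional segments; by a union bound it suffices to analyze ERM on each segment separately, where the squared loss restricted to $\{(1 - \lambda) g_0 + \lambda g_1 : \lambda \in [0, 1]\}$ is a quadratic polynomial in the scalar $\lambda$, handled by a one-dimensional concentration argument. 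The union bound contributes only an extra $\log M$, preserving the overall rate.

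The most delicate point is the transfer of localization between subsamples: the preselection is expressed through $\norm{\cdot}_{n, 1}$ while the ERM analysis on $\FF$ requires control through $\norm{\cdot}_{L^2(\mu)}$ and $\norm{\cdot}_{n, 2}$. On an event of comparable probability one establishes the equivalence of these three norms on $\hat F_1 - \hat f_{n, 1}$ at the scale $\phi$, typically via a ratio-type concentration inequality applied to $f \mapsto (f - \hat f_{n, 1})^2$. In the subgaussian setting this step additionally requires an initial truncation of $\varepsilon$ and of $\sup_{f \in F} |f(X) - f_0(X)|$ at a level of order $\sqrt{\log n}$, which is where the extra $\log n$ factor appears in the second bound; the unlikely tail event is then discarded using the subgaussian tail assumption directly.
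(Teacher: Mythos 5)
Your overall scaffolding (split the sample, show the preselection keeps the oracle, analyze ERM over $\FF$ conditionally on $D_{n,1}$) matches the paper, and the treatment of the $\conv(\hat F_1)$ case by citing \cite{LM06} is what the paper does as well. However, the way you handle the two genuinely new cases, $\seg(\hat F_1)$ and $\Star(\hat f_{n,1}, \hat F_1)$, contains a gap that is exactly where the paper's main idea lives.

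Your argument for these cases is ``the class is a union of $O(M^2)$ (resp.\ $M$) one-dimensional segments; union bound the one-dimensional fast rates, pay only an extra $\log M$.'' This does not go through. ERM over a \emph{union} of sets is implicitly a model-selection problem: $\tilde f$ picks the segment with smallest empirical minimum, not the one with smallest population minimum, and nothing in the one-dimensional analysis controls the gap between these two choices. For a finite union of point sets this is precisely the selector problem, whose rate is $\sqrt{(\log M)/n}$, and replacing points by segments does not automatically remove that barrier. Concretely, the localized fast-rate machinery needs the margin inequality $\E[\mathcal L^{\FF}_f]\gtrsim \norm{f-f^{\FF}}^2_{L^2(\mu)}$, which holds when $f^{\FF}$ is a projection onto a \emph{convex} set; for the nonconvex star or union-of-segments class there can be $f\in\FF$ on a different branch with $R(f)\approx R(f^{\FF})$ but $\norm{f-f^{\FF}}$ of order the diameter $d$ of $\hat F_1$, so the condition fails. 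Relatedly, your claimed localization ``$\norm{f-\hat f_{n,1}}_{n,1}\lesssim\phi+\sqrt{R(f)-R(\hat f_{n,1})}$, which restricts the effective diameter of $\FF$ to scale $\phi$'' is not correct as a diameter bound: $R(f)-R(\hat f_{n,1})$ need not be small for $f\in\hat F_1$, and $d$ can be of constant order (e.g., when $\hat F_1$ straddles $f_0$ in $L^2(\mu)$ on opposite sides).

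The paper handles exactly this difficulty by \emph{not} attempting a fast-rate localization on $\FF$. It uses a crude, diameter-scaled uniform bound $\sup_{f\in\FF}|(P_n-P)\mathcal L^{\FF}_f|\lesssim \max(d\phi,\phi^2)$ (Corollary~\ref{cor:high-prob-est}) and then compensates the unfavorable $d\phi$ term by a geometric argument: the midpoint $f_2=(\hat f_{n,1}+f_1)/2$, where $f_1$ is the farthest point of $\hat F_1$ from $\hat f_{n,1}$, lies in $\FF$ (in all three constructions), and the parallelogram identity gives $R(f_2)\le \tfrac12 R(\hat f_{n,1})+\tfrac12 R(f_1)-d^2/16$. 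Combined with Lemma~\ref{lemma:prop-random-set} this yields $R(f^{\FF})\le R(f^F)+c\max(\phi d,\phi^2)-c\,d^2$, and the case split $d\lessgtr c\phi$ kills the $d\phi$ slack. This quadratic gain from the midpoint is the structural reason a two-function aggregate is enough, and it is the ingredient missing from your proposal; without it, or an equivalent argument, the $\seg$/$\Star$ cases do not reach the $\phi^2$ rate.
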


\begin{remark}
  Note that the definition of the set $\hat{F}_1$, and thus $\tilde
  f$, depends on the confidence $x$ through the factor $\phi_{n,
    M}(x)$.
\end{remark}

\begin{remark}
  To simplify the proofs, we don't give the explicit values of the
  constants. However, when~\eqref{ass:bounded} holds, one can choose
  $c = 4(1+9b)$ in~\eqref{eq:random-set} and $c =c_1(1+b)$
  when~\eqref{ass:subgaussian} holds (where $c_1$ is the absolute
  constant appearing in Theorem~\ref{thm:adamczak}). Of course, this
  is not likely to be the optimal choice.
\end{remark}

\setlength{\figwidth}{4.5cm} \setlength{\figheight}{5cm}

\begin{figure}[htbp]
  \centering
  \includegraphics[width=\figwidth,height=\figheight]{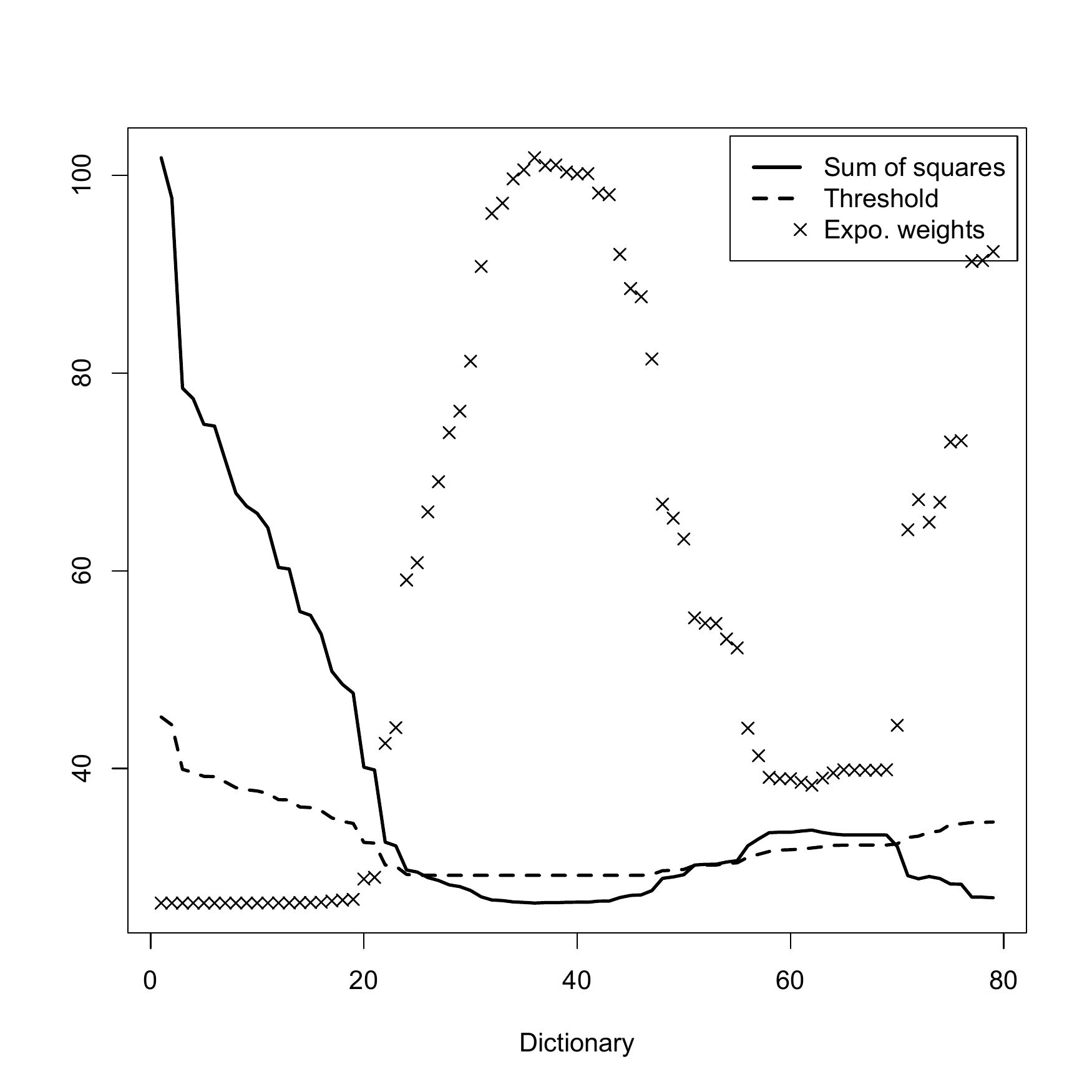}%
  \includegraphics[width=\figwidth,height=\figheight]{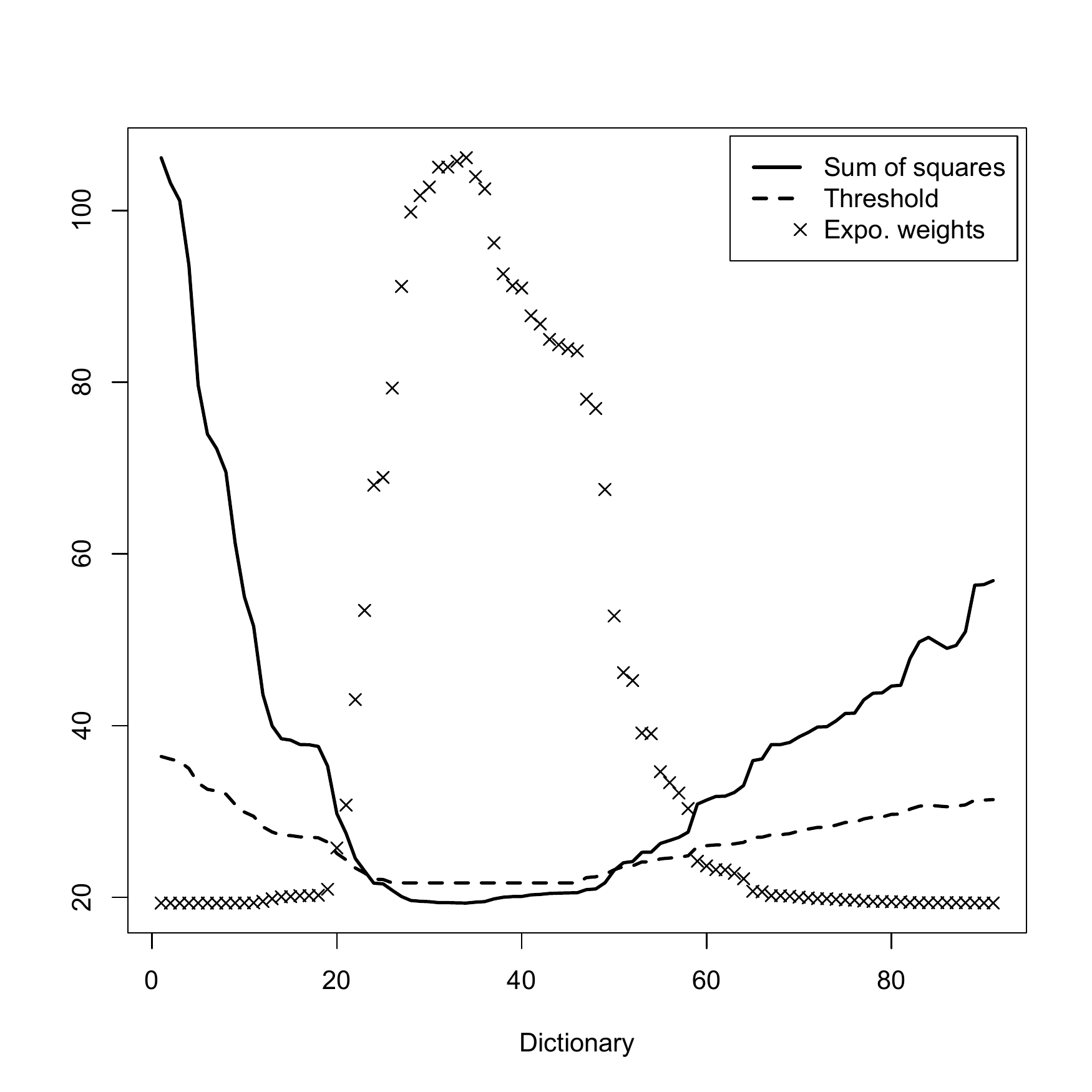}%
  \includegraphics[width=\figwidth,height=\figheight]{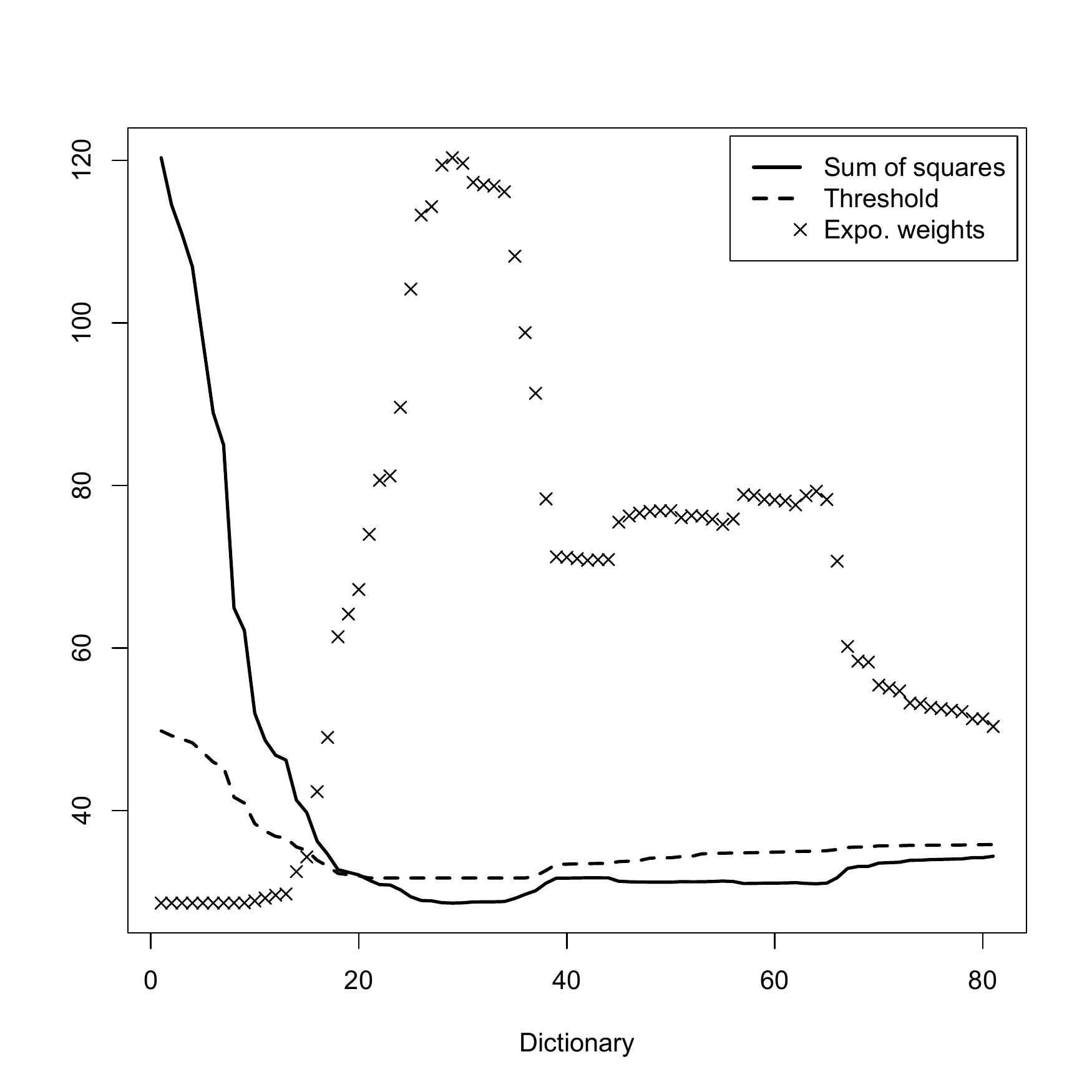}
  \caption{Empirical risk $R_{n, 1}(f)$, value of the threshold $R_{n,
      1}( \hat f_{n, 1}) + 2 \max( \phi \norm{\hat f_{n, 1} - f}_{n,
      1}, \phi^2 )$ and weights of the AEW (that we rescaled for
    illustration purpose) for $f \in F$, where $F$ is a dictionary
    obtained using LARS, see Section~\ref{sec:simu} below. Only the
    elements of $F$ with an empirical risk smaller than the threshold
    are kept from the dictionary, see
    Definition~\eqref{def:aggprocedures}. The first and third examples
    correspond to a case where an aggregate with preselection step
    improves upon AEW, while in the second example, both procedures
    behaves similarly.}
  \label{fig:aggsimu}
\end{figure}

\subsection{The star-shaped aggregate}

In this section we give details for the computation of the star-shaped
aggregate, namely the aggregate $\tilde f$ given by
Definition~\ref{def:aggprocedures} when $\FF$ is~\eqref{eq:starified}.
Indeed, if $\lambda \in [0, 1]$, we have
\begin{equation*}
  R_{n,2}(\lambda f + (1 - \lambda) g) = \lambda R_{n,2}(f) + (1 - \lambda)
  R_{n,2}(g) - \lambda (1 - \lambda) \norm{f - g}_{n,2}^2,
\end{equation*}
so the minimum of $\lambda \mapsto R_{n,2}(\lambda f + (1 - \lambda) g)$
is achieved at
\begin{equation*}
  \lambda_{n,2}(f, g) = 0 \vee \frac 12 \Big( \frac{R_{n,2}(g) - R_{n,2}(f)}{\norm{f -
      g}_{n,2}^2} + 1 \Big) \wedge 1,
\end{equation*}
where $a \vee b = \max(a, b)$, $a \wedge b = \min(a, b)$ and $\min_{\lambda \in [0, 1]} R_{n,2}(\lambda f + (1 - \lambda) g)$ is thus equal to $R_{n,2}(\lambda_{n,2}(f,g) f + (1 - \lambda_{n,2}(f,g)) g)$ given by
\begin{equation*}
   \left\{
    \begin{array}{cc}
R_{n,2}(f) & \mbox{ if } R_{n,2}(f)-R_{n,2}(g)\geq\norm{f-g}_{n,2}^2\\
  \frac{R_{n,2}(f) + R_{n,2}(g)}{2} - \frac{(R_{n,2}(f) -
    R_{n,2}(g))^2}{4 \norm{f - g}_{n,2}^2} - \frac{\norm{f - g}_{n,2}^2}{4}& \mbox{ if }|R_{n,2}(f)-R_{n,2}(g)|\leq\norm{f-g}_{n,2}^2\\
R_{n,2}(g) & \mbox{ otherwise.}
\end{array}\right.
\end{equation*}
This leads to the following algorithm for the computation of $\tilde
f$.

\begin{algorithm}[H]
  \SetLine%
  \KwIn{dictionary $F$, data $(X_i, Y_i)_{i=1}^{2n}$, and a confidence
    level $x > 0$}
  
  \KwOut{star-shaped aggregate $\tilde f$}%
  
  Split $D_{2n}$ into two samples $D_{n, 1}$ and $D_{n, 2}$

  \ForEach{$j \in \{ 1, \ldots, M \}$}{ Compute $R_{n, 1}(f_j)$ and
    $R_{n, 2}(f_j)$, and use this loop to find $\hat f_{n, 1} \in
    \argmin_{f \in F} R_{n, 1}(f)$}%

  \ForEach{$j \in \{ 1, \ldots, M \}$}{ Compute $\norm{f_j - \hat
      f_{n, 1}}_{n, 1}$ and $\norm{f_j - \hat f_{n, 1}}_{n, 2}$}

  Construct the set of preselected elements
    \begin{equation*}
      \hat{F}_1 = \Big\{ f \in F : R_{n, 1} (f) \leq R_{n,
        1}( \hat f_{n, 1}) + c \max \big( \phi \norm{\hat f_{n, 1} -
        f}_{n, 1}, \phi^2 \big) \Big\},
    \end{equation*}    
    where $\phi$ is given in Definition~\ref{def:aggprocedures}.%
  
  \ForEach{$f \in \hat F_1$}{compute
    \begin{equation*}
      R_{n,2}(\lambda_{n,2}(\hat f_{n,1},f) \hat f_{n,1} + (1 - \lambda_{n,2}(\hat f_{n,1},f)) f)
    \end{equation*}
    and keep the element $f_{\hat \jmath} \in \hat F_1$ that minimizes this
    quantity}
  
  \Return
  \begin{equation*}
    \tilde f=\lambda_{n, 2}(\hat f_{n,1},f_{\hat \jmath}) \hat f_{n, 1} + (1 - \lambda_{n, 2}(\hat f_{n,1},f_{\hat \jmath})) f_{\hat j},
  \end{equation*}
  \caption{Computation of the star-shaped aggregate.}
  \label{alg:star-shaped}
\end{algorithm}

\subsection{Suboptimality of Penalized ERM}

In this section, we prove that minimizing the empirical risk
$R_n(\cdot)$ (or a penalized version, called PERM from now on) on
$F(\Lambda)$ is a suboptimal aggregation procedure both in expectation
and deviation. According to \cite{tsy:03}, the optimal rate of
aggregation in the gaussian regression model is $(\log M) /n$. This
means that it is the minimum price one has to pay in order to mimic
the best function among a class of $M$ functions with $n$
observations. This rate is achieved by the aggregate with cumulative
exponential weights, see~\cite{catbook:01}, \cite{MR1762904}
and~\cite{MR2458184}.  In Theorem~\ref{TheoWeaknessERMRegression}
below, we prove that the usual PERM procedure cannot achieve this rate
and thus, that it is suboptimal compared to the aggregation methods
with exponential weights. The lower bounds for aggregation methods
appearing in the literature (see~\cite{tsy:03, MR2458184, LecJMLR:06})
are usually based on minimax theory arguments. In particular, in
\cite{tsy:03}, it is proved that a selector (that is an aggregation
procedure taking its values in the dictionnary itself) cannot mimic
the oracle faster than $\sqrt{(\log M)/n}$. This result implies the
one that we have here, but, it doesn't provide an explicit setup for
which a given selector performs poorly. The result in \cite{MR2458184}
says that whatever the selector is, there exists a probability measure
and a dictionnary for which it cannot mimic the oracle faster than
$\sqrt{(\log M)/n}$. The proof of this result does not tell
explicitely which probabilistic setup is bad for this selector. In the
present result, we are interested in a particular type of selector:
the PERM for some penalty. We can provide an explicit framework
(dictionnary+probabilistic setup) because the argument considered here
is based on some geometric considerations (in the same spirit as the
lower bound obtained in \cite{lbw:96} and \cite{m:08}).  The explicit
example that makes the PERM fail is the following Gaussian regression
model with uniform design:
\begin{assumption}[G]
  Assume that $\varepsilon$ is standard Gaussian and that $X$ is
  univariate and uniformly distributed on $[0, 1]$.
\end{assumption} The dictionary is constructed as follow:

\begin{figure}[htbp]
  \centering
  \includegraphics[width=5cm]{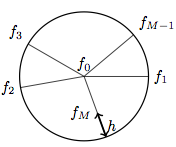}
  \caption{Example of a setup in which ERM performs badly. The set
    $F(\Lambda) = \{f_1, \ldots, f_M \}$ is the dictionary from which
    we want to mimic the best element and $f_0$ is the regression
    function.}
  \label{fig:badsetup}
\end{figure}
For the regression function we take
  \begin{equation}
    \label{FunctionBasisRegression}
    f_0(x) =
    \begin{cases}
      \; 2h &\text{ if } x^{(M)} = 1 \\
      \; h & \text{ if } x^{(M)} = 0,
    \end{cases}
  \end{equation}
  where $x$ has the dyadic decomposition $x=\sum_{k \geq 1}
  x^{(k)}2^{-k}$ where $x^{(k)} \in \{ 0, 1 \}$ and
  \begin{equation*}
    h=\frac{C}{4}\sqrt{\frac{\log M}{n}}.
  \end{equation*}
  We consider the dictionary of functions $F_M = \{f_1, \ldots, f_M\}$
  \begin{equation}
    \label{FunctionBasisRegression}
    f_j(x) = 2x^{(j)}-1, \quad \forall j\in\{1,\ldots,M\},
  \end{equation}
  where again $(x^{(j)} : j \geq 1)$ is the dyadic decomposition of $x
  \in [0,1]$.

\begin{theorem}
  \label{TheoWeaknessERMRegression}
  There exists an absolute constant $c_0>0$ such that the following
  holds.  Let $M \geq 2$ be an integer and assume that \textup{(G)}
  holds. 
  We can find a regression function $f_0$ and a family $F(\Lambda)$ of
  cardinality $M$ such that, if one considers a penalization
  satisfying $|\pen(f)| \leq C \sqrt{(\log M)/n}, \forall f \in
  F(\Lambda)$ with $0\leq C <\sigma (24\sqrt{2}c^*)^{-1}$ \textup($c^*$ is
  an absolute constant from the Sudakov minorization, see
  Theorem~\ref{TheoSudakov} in
  Appendix~\ref{sec:appendix_proba}\textup), the PERM procedure
  defined by
  \begin{equation*}
    \tilde{f}_n \in \argmin_{f \in F(\Lambda)}( R_n(f) + \pen(f))
  \end{equation*}
  satisfies, with probability greater than $c_0$,
  \begin{equation*}
    \| \tilde{f}_n - f_0 \|^2 \geq \min_{f \in
      F(\Lambda)} \| f - f_0 \|^2 + C_3 \sqrt{\frac{\log
        M}{n}}
  \end{equation*}
  for any integer $n \geq 1$ and $M\geq M_0(\sigma)$ such that $n^{-1}
  \log[(M-1)(M-2)] \leq 1/4$ where $C_3$ is an absolute constant.
\end{theorem}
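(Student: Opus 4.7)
The plan is to exploit the (essentially) orthonormal structure of the dictionary under $\mu$ and then invoke Sudakov minorization on the Gaussian noise. The signal $f_0$ correlates only with $f_M$, and only by an amount of order $h\sim\sqrt{(\log M)/n}$, whereas the maximum of the $M$ centred Gaussian variables $G_j:=n^{-1}\sum_i\varepsilon_if_j(X_i)$ is of the same order; once $C$ is small, the noise supremum dominates both the signal gap and the admissible penalty slack, so the PERM is forced to pick a wrong element.

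Concretely, the dyadic digits $X^{(j)}$ are i.i.d.\ Bernoulli$(1/2)$, hence $\E f_j(X)=0$, $\E[f_j(X)f_k(X)]=\delta_{jk}$, $\E[f_0(X)f_j(X)]=0$ for $j\neq M$ and $\E[f_0(X)f_M(X)]=h/2$, so $\|f_j-f_0\|^2-\|f_M-f_0\|^2=h$ for every $j\neq M$. Using $f_j^2\equiv 1$, one has $R_n(f_j)-R_n(f_M)=-2T_j$ with $T_j:=n^{-1}\sum_iY_i(f_j-f_M)(X_i)$, so the PERM fails to select $f_M$ as soon as some $j\neq M$ satisfies $T_j>(\pen(f_j)-\pen(f_M))/2$, an inequality implied under the hypothesis on $\pen$ by $T_j>C\sqrt{(\log M)/n}$. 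Writing $T_j=A_j+B_j$ with $A_j=n^{-1}\sum_if_0(X_i)(f_j-f_M)(X_i)$ (signal) and $B_j=n^{-1}\sum_i\varepsilon_i(f_j-f_M)(X_i)=G_j-G_M$ (noise), a Hoeffding bound with a union bound over $j\neq M$ gives $A_j\in[-3h/4,-h/4]$ uniformly with overwhelming probability, so the task reduces to showing that $\max_{j\neq M}B_j>(9C/8)\sqrt{(\log M)/n}$ with positive probability.

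For this I would condition on $X$ and apply Sudakov minorization to the centred Gaussian process $(G_j)_{j\neq M}$, whose canonical pseudometric is $n^{-1/2}\|f_j-f_k\|_n$ with $\|f_j-f_k\|_n^2=2(1-S_{jk})$ and $S_{jk}:=n^{-1}\sum_if_j(X_i)f_k(X_i)$. For $j\neq k\neq M$ the products $f_j(X_i)f_k(X_i)$ are i.i.d.\ Rademacher in $i$, so Hoeffding yields $\P(S_{jk}>1/\sqrt 2)\leq e^{-n/4}$; combined with the union bound over the $\binom{M-1}{2}$ pairs, the hypothesis $\log[(M-1)(M-2)]/n\leq 1/4$ secures $\min_{j\neq k}\|f_j-f_k\|_n^2\geq 2-\sqrt 2$ with probability at least $1/2$ over $X$. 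On this event Theorem~\ref{TheoSudakov} delivers $\E_\varepsilon[\max_{j\neq M}G_j\mid X]\geq c^*\sqrt{(2-\sqrt 2)\log(M-1)/n}$, and the Borell--Sudakov--Tsirelson inequality (conditional supremum-variance $1/n$) transfers this to a high-probability lower bound on $\max_{j\neq M}G_j$ itself, while $|G_M|=O(1/\sqrt n)$ by a one-line Gaussian tail. Under $C<\sigma(24\sqrt 2 c^*)^{-1}$ the factor $c^*\sqrt{2-\sqrt 2}$ strictly exceeds $9C/8$, so $\max_{j\neq M}B_j>(9C/8)\sqrt{(\log M)/n}$ with probability at least some absolute $c_0>0$. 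On this event the PERM selects some $f_j$ with $j\neq M$, whose excess risk over $f_M$ is exactly $h=(C/4)\sqrt{(\log M)/n}$, giving the statement with $C_3=C/4$.

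The main technical obstacle is the constant bookkeeping: the Sudakov lower bound must strictly beat the sum of the signal gap $h/2$, the worst-case penalty slack $C\sqrt{(\log M)/n}$ and the concentration slack absorbed via Borell, and this is exactly the budget that the constraint $C<\sigma(24\sqrt 2 c^*)^{-1}$ leaves for $C$; the loss $\sqrt{(2-\sqrt 2)/2}$ in usable pairwise distances, caused by the union bound on $S_{jk}$ under the sharp hypothesis $\log[(M-1)(M-2)]/n\leq 1/4$, is responsible for the $\sqrt 2$ in the denominator. A secondary, easier point is checking that the condition $M\geq M_0(\sigma)$ is enough to make the Borell slack negligible compared to $\sqrt{(\log M)/n}$.
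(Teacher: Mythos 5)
Your scheme is close to the paper's and the geometry is right: both you and the authors reduce the claim to showing $\P[\tilde f_n = f_M]<1$ with margin, and both get the noise supremum from Sudakov applied conditionally on the design, with the hypothesis $n^{-1}\log[(M-1)(M-2)]\le 1/4$ feeding into the bound on the pairwise correlations $S_{jk}$. You take a genuinely different route through the concentration, though: you restrict to the high-probability event over $X$ on which $\min_{j\neq k}\|f_j-f_k\|_n$ is large and then use Borell--Sudakov--Tsirelson conditionally, whereas the paper lower-bounds the \emph{unconditional} expectation $\E[\bar N_{M-1}]$ via the maximal inequality and then needs an extra Einmahl--Mason concentration step to bring $\E[\bar N_{M-1}\mid\boldsymbol\zeta]$ back near $\E[\bar N_{M-1}]$. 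Your event-based version avoids that extra concentration layer and is cleaner in that respect; the price is only bookkeeping, which the constraint on $C$ absorbs either way.

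There is, however, a real gap in your treatment of the signal part $A_j$. You claim that Hoeffding plus a union bound over $j\neq M$ gives $A_j\in[-3h/4,-h/4]$ uniformly with ``overwhelming probability.'' But $A_j=\frac{h}{2n}\sum_i\bigl(3(\zeta_i^{(j)}-\zeta_i^{(M)})+\zeta_i^{(j)}\zeta_i^{(M)}-1\bigr)$ is an average of $O(1)$-bounded terms scaled by $h$, so the deviation probability at scale $h/4$ is $\exp(-cn)$ with a small absolute $c$ (of order $1/300$ or worse). The union bound over $M-1$ indices is therefore $\sim M e^{-cn}$, which is \emph{not} small in the regime $n\asymp\log[(M-1)(M-2)]$ explicitly permitted by the theorem. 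So this step fails exactly where the hypothesis is tight. The paper sidesteps this by bounding the signal term \emph{deterministically}: since each summand lies in $\{-8,0,4\}$, one gets $A_j\ge -4h$ pointwise, i.e.\ a loss of $4h=C\sqrt{(\log M)/n}$, which is what produces the factor $4C/\sigma$ in the paper's inequality $\frac{\sqrt n}{2\sigma}(\|Y-f_M\|_n^2-\|Y-f_j\|_n^2)\ge N_j-N_M-\tfrac{4C}{\sigma}\sqrt{\log M}$. Your argument is repaired by the same observation, at the cost of changing your threshold $(9C/8)\sqrt{(\log M)/n}$ (which should anyway be $19C/16$ under your own constants) to $2C\sqrt{(\log M)/n}$; the budget $C<\sigma(24\sqrt2 c^*)^{-1}$ still covers it with room to spare. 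Also note a sign/placement slip: Sudakov gives $\E[\max_j G_j\mid X]\ge \sigma\sqrt{(2-\sqrt2)\log(M-1)/n}\,/\,c^*$, i.e.\ $c^*$ in the denominator, not the numerator.
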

This result tells that, in some particular cases, the PERM cannot
mimic the best element in a class of cardinality $M$ faster than
$((\log M)/n)^{1/2}$. This rate is very far from the optimal one
$(\log M)/n$. Of course, one can say that the PERM fails to achieve
the optimal rate only in the very particular framework that we have
constructed here. Nevertheless, this approach can be generalized (we
refer the reader to \cite{LM2} for instance). Finally, remark that
classical penalty functions are of the order [Complexity of the class]
divided by $n$, which is in our aggregation setup of the order of
$(\log M)/n$. Thus, the restriction that we have on the penalty
function covers the classical cases that one can meet in the
litterature on penalization methods.

Let $F(\Lambda)$ be the set that we consider in the proof of
Theorem~\ref{TheoWeaknessERMRegression} (see
Section~\ref{sec:proof_main_results} below), and take $\pen(f) = 0$.
Using Monte-Carlo (we do $5000$ loops), we compute the excess risk $E
\| \tilde{f}_n - f_0 \|^2 - \min_{f \in F(\Lambda)} \| f - f_0 \|^2$
of the ERM. In Figure~\ref{fig:subERM} below, we compare the excess
risk and the bound $((\log M) / n)^{1/2}$ for several values of $M$
and $n$. It turns out that, for this set $F(\Lambda)$, the lower bound
$((\log M) / n)^{1/2}$ is indeed accurate for the excess
risk. Actually, by using the classical symmetrization argument and the
Dudley's entropy integral (or Pisier's inequality), it is easy to
obtain an upper bound for the excess risk of the ERM of the order of
$((\log M) / n)^{1/2}$ for any class $F(\Lambda)$ of cardinality $M$.

\begin{figure}[htbp]
  \centering
  \includegraphics[width=4.3cm]{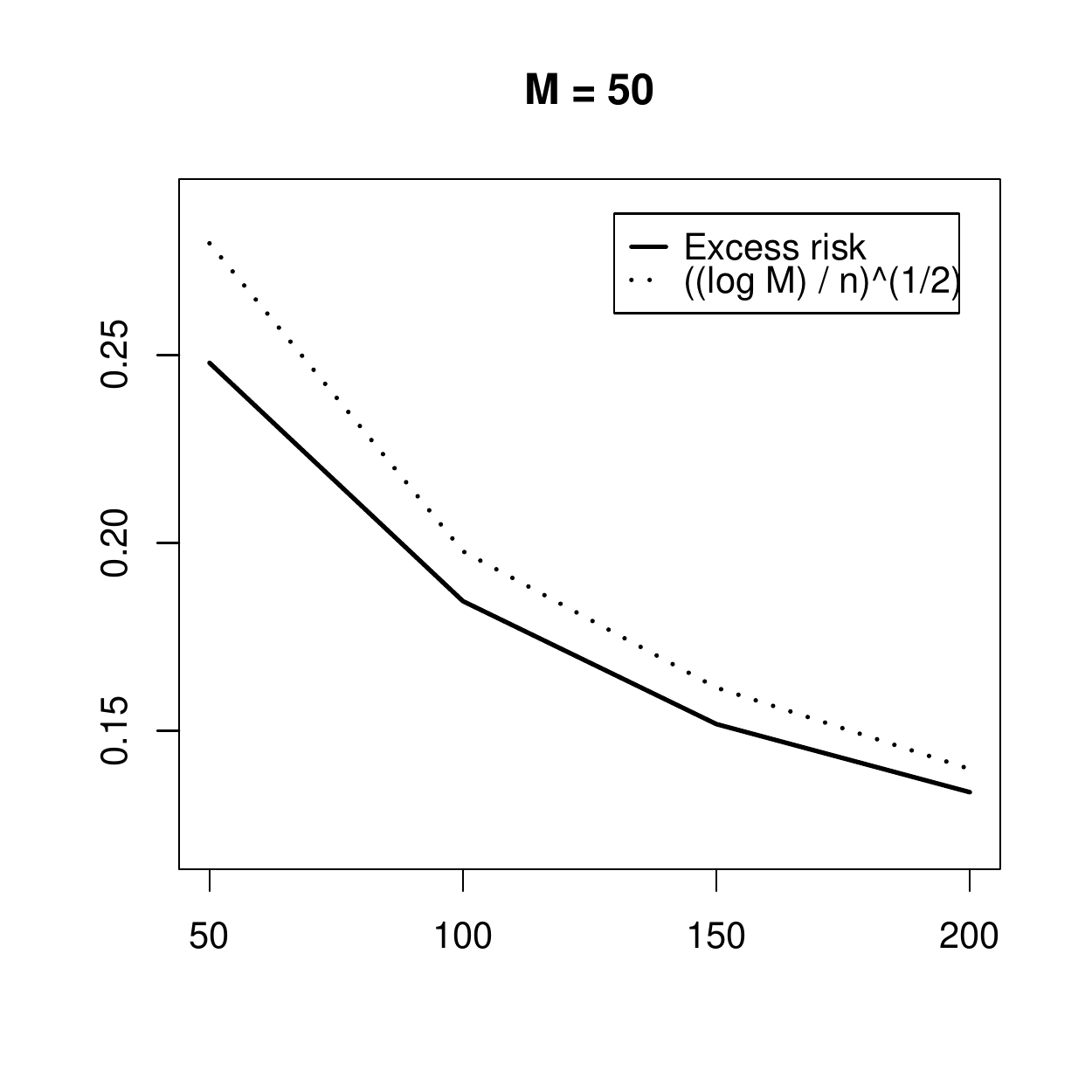}%
  \includegraphics[width=4.3cm]{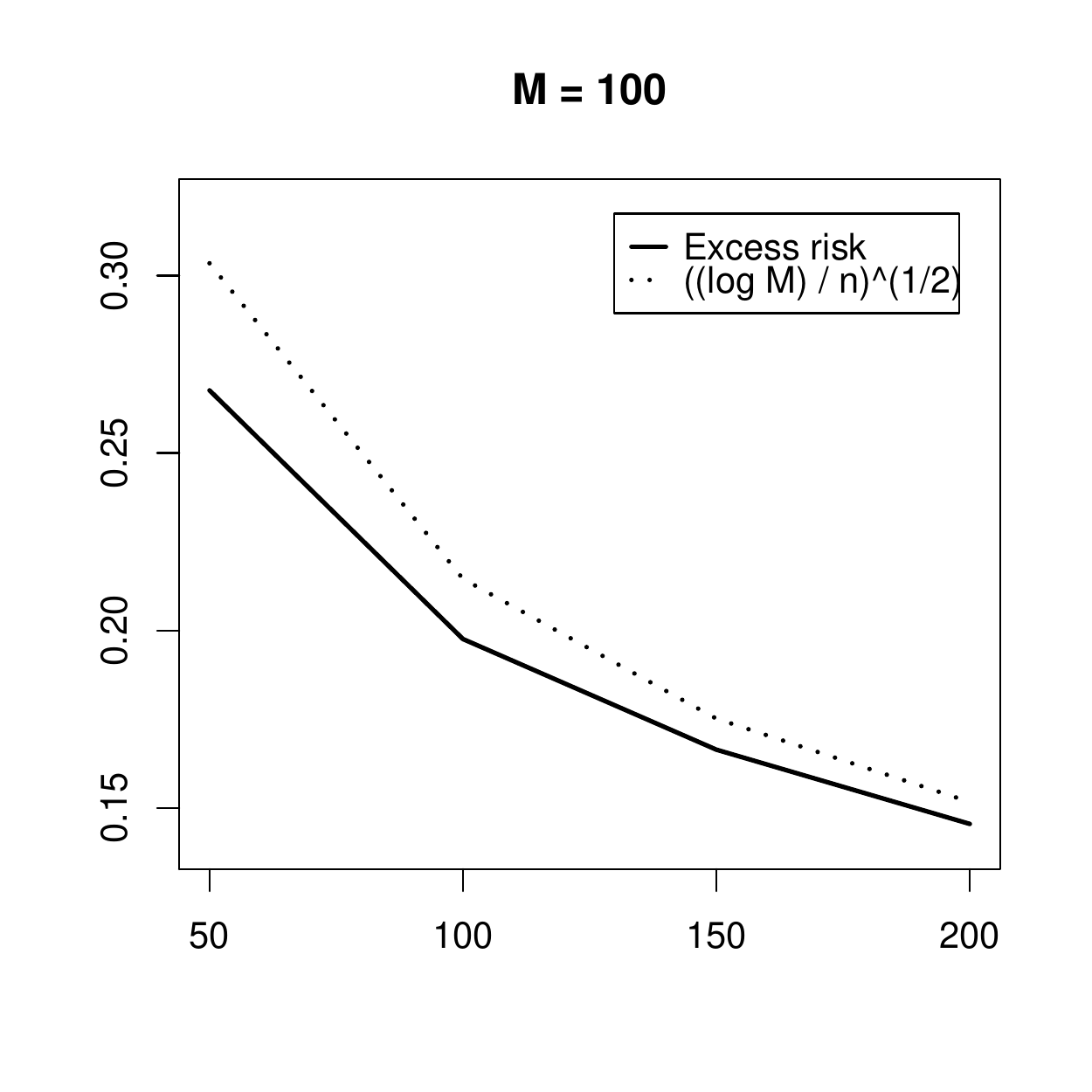}%
  \includegraphics[width=4.3cm]{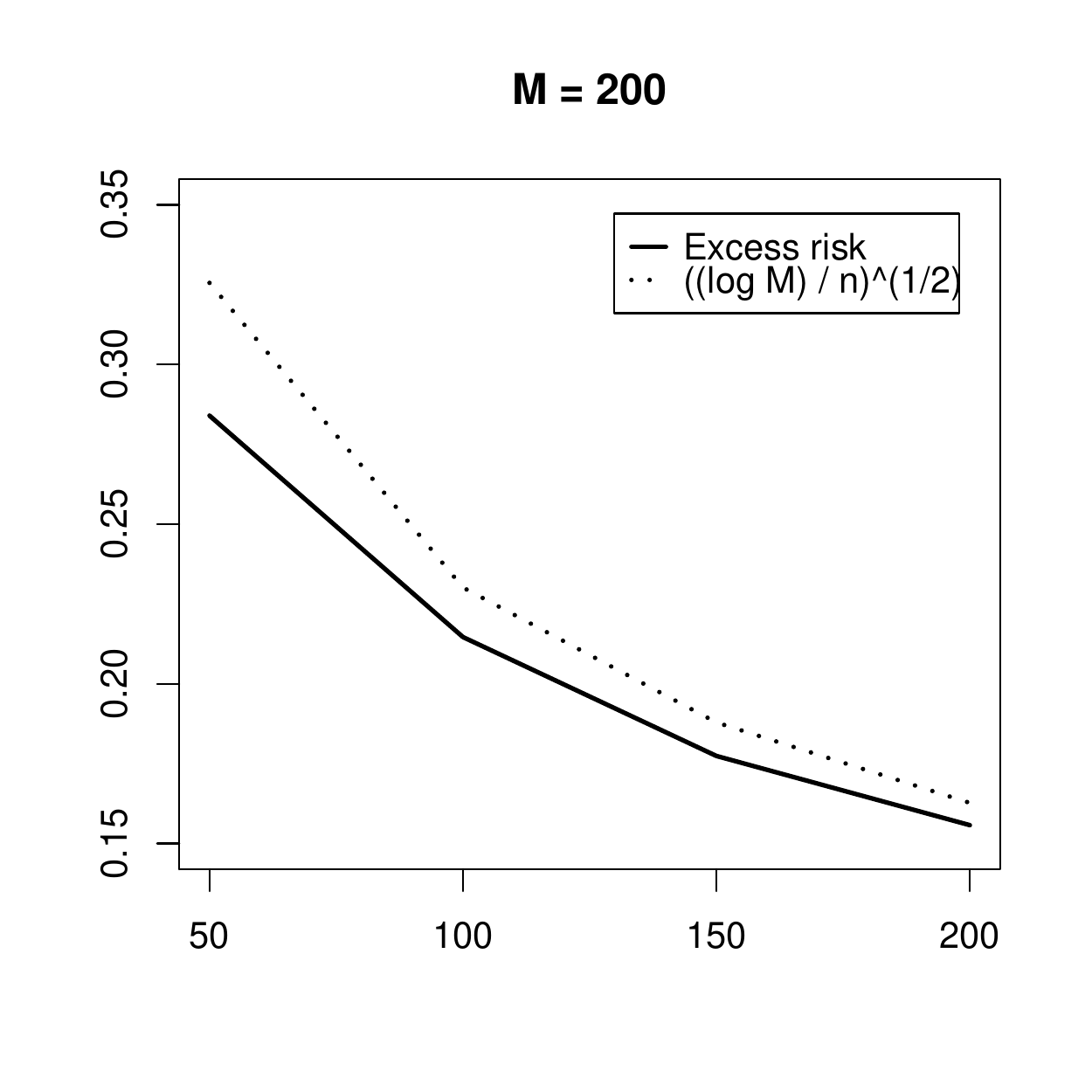}%
  \caption{The excess risk of the ERM compared to $((\log M) /
    n)^{1/2}$ for several values of $M$ and $n$
    \textup($x$-axis\textup)}
  \label{fig:subERM}
\end{figure}

As an application of the aggregation algorithm
\ref{def:aggprocedures}, we consider the problem of adaptation to the
regularization parameter of a penalized empirical risk minimization
procedure, denoted for short PERM in what follows.

\section{An example of dictionary: Penalized ERM}
\label{sec:pena_least_squares}

\subsection{Definition and tools}

Let us fix a function space $\mathcal F$, endowed with a seminorm
$|\cdot|_{\mathcal F}$. The set $\mathcal F$ is a space of functions,
such as a Sobolev, Besov or Reproducing Kernel Hilbert Space (RHKS),
the latter being a common example in regularized learning, see
\cite{cucker_smale02}. A simple example (in the one dimensional case)
is the Sobolev space $W_2^s$ of functions such that $|f|_{\mathcal
  F}^2 = \int f^{(s)}(t)^2 dt < +\infty$, which corresponds to the
so-called \emph{smoothing splines} estimator, see \cite{wahba90}]. A
PERM (which stands for penalized empirical risk minimization)
minimizes the functional
\begin{equation}
  \label{eq:PERM}
  \frac 1n \sum_{i=1}^n (Y_i - f(X_i))^2 + \pen(f)
\end{equation}
over $\mathcal F$, where $\pen(f)$ is a quantity measuring the
smoothness (or ``roughness'') of $f \in \mathcal F$. Typically, the
penalization term writes $\pen(f) = h^2 |f|_{\mathcal F}^2$ (see
\cite{van_de_geer00} and \cite{kohler02} among others), where $h > 0$
is a regularization parameter.

In~\cite{Mendelson08regularizationin}, sharp error bounds for the PERM
are established, in the general context of a so-called \emph{ordered
  and parametrized hierachy} $\{ \mathcal F_r : r > 0\}$. An example
of such an ordered and parametrized hierachy is
\begin{equation*}
  \mathcal F_r = r \mathcal F_1, \text{ where } \mathcal F_1 = \{ f
  \in \mathcal F : |f|_{\mathcal F} \leq 1 \}.
\end{equation*}
In the latter paper, a very sharp analysis is conducted when $\mathcal
F$ is a RKHS, allowing for penalizations less than quadratic in the
RKHS norm. In this section, we use the tools proposed in
\cite{Mendelson08regularizationin} to derive an error bound for the
PERM using the standard penalty $\pen(f) = h^2 |f|_{\mathcal F}^2$,
but when $\mathcal F$ is a Besov space. In nonparametric estimation
literature, Besov spaces are of particular interest since they include
functions with \emph{inhomogeneous smoothness}, for instance functions
with rapid oscillations or bumps. Moreover, since the design random
variable $X$ is eventually multivariate, the question of anisotropic
smoothness naturally arises.  Anisotropy means that the smoothness of
the regression function $f_0$ differs in each direction.  As far as we
know, adaptive estimation of a multivariate curve with anisotropic
smoothness was previously considered only in Gaussian white noise or
density models, see~\cite{kerk_lepski_picard01},
\cite{hoffmann_lepski02}, \cite{kerk_lepski_picard07},
\cite{neumann00}. There is no result concerning the adaptive
estimation of the regression with anisotropic smoothness on a general
random design $X$. In order to simplify the definition of the
anisotropic Besov space, we shall assume from now that $\Omega =
\mathbb R^d$. Let us consider the following compactness assumption on
the unit ball $\mathcal F_1$. It uses metric entropy, which is a
standard measure of the compactness in learning theory, see
\cite{cucker_smale02} for instance. Recall that $\norm{f}_{\infty} =
\sup_{x \in \mathbb R^d} |f(x)|$, and denote by $C(\mathbb R^d)$ the
set of continuous functions on $\mathbb R^d$, endowed with the
$L^\infty$-norm. If $\mathcal F_1 \subset C(\mathbb R^d)$, we
introduce $H_\infty( \mathcal F_1, \delta) = \log N_\infty( \mathcal
F_1, \delta )$, where $N_\infty( \mathcal F_1, \delta )$ is the
minimal number of $L^\infty$-balls with radius $\delta$ needed to
cover $\mathcal F_1$.
\begin{assumption}[$C_\beta$]
  \label{ass:entropy}
  Assume that $\mathcal F$ embeds continuously in $C(\mathbb R^d),$
  and that there is a number $\beta \in (0, 2)$ such that for any
  $\delta > 0$, the unit ball of $\mathcal F$ satisfies:
  \begin{equation}
    H_\infty ( \delta, \mathcal F_1 ) \leq c \delta^{-\beta},
  \end{equation}
  where $c > 0$ is independent of $\delta$.
\end{assumption}
This assumption entails $H_\infty ( \delta, \mathcal F_r ) \leq c (r /
\delta )^{\beta}$ for any $r > 0$. Moreover, the continuous embedding
gives that $\norm{f}_\infty \leq c |f|_{\mathcal F}$ for any $f \in
\mathcal F$. Assumption~\ref{ass:entropy} is satisfied by barely all
the smoothness spaces considered in nonparametric literature (at least
when the smoothness of the space is large enough compared to the
dimension, see below). Let us give an example. Let $B_{p, q}^{\bs s}$
be the anisotropic Besov space with smoothness $\bs s = (s_1, \ldots,
s_d)$. This space is precisely defined in
Appendix~\ref{sec:appendix_approximation}. Each $s_i$ corresponds to
the smoothness in the $i$-th coordinate. The computation of the
entropy of $\mathcal F_1$ when $\mathcal F = B_{p, q}^{\bs s}$ is done
in Theorem~5.30 from~\cite{triebel06}. Namely, if $\bs {\bar s}$ is
the harmonic mean of $\bs s$, given by
\begin{equation}
  \label{eq:harmonic_mean}
  \frac{1}{\bs {\bar s}} := \frac{1}{d} \sum_{i=1}^d
  \frac{1}{s_i},
\end{equation}
then the unit ball $\mathcal F_1$ of $\mathcal F = B_{p, q}^{\bs s}$
satisfies Assumption~\ref{ass:entropy} with $\beta = d / \bs {\bar
  s}$, given that $\bs {\bar s} > d / p$, which is the usual condition
to have the embedding in $C(\mathbb R^d)$.


\subsection{Local complexity using entropy}

Now, we have in mind to use Theorem~2.5 from
\cite{Mendelson08regularizationin}, in order to derive a risk bound
for the PERM. For this, we need a control on the local complexity of
$\mathcal F_r$, for any $r > 0$. The complexity is measured in this
paper by the expectation $\E\norm{P - P_n}_{V_{r, \lambda}}$, where
for $r, \lambda > 0$, $V_{r, \lambda}$ is the class of excess losses
\begin{equation*}
  V_{r, \lambda} := \{ \alpha \mathcal L_{r, f} : 0 \leq \alpha \leq
  1, f \in \mathcal F_r, \E (\alpha\mathcal L_{r, f}) \leq \lambda \},
\end{equation*}
where
\begin{equation*}
  \mathcal L_{r, f} := (Y - f(X))^2 - (Y - f_r^*(X))^2
\end{equation*}
and $f_r^* \in \argmin_{f \in \mathcal F_r} \E(Y - f(X))^2$. The next
Lemma (a proof is given in Appendix A.3) gives a bound on this measure
of the complexity under Assumption~\ref{ass:entropy}.
\begin{lemma}
  \label{lem:complexity}
  Assume that $\norm{Y}_\infty < +\infty$ and grant
  Assumption~\ref{ass:entropy}. One has, for any $r, \lambda > 0$:
  \begin{equation*}
    \E\norm{P - P_n}_{V_{r, \lambda}} \leq c \max\Big[ r^2 n^{-1
      / (1 + \beta /2)}, \frac{r^{1 + \beta / 2} \lambda^{(1 -
        \beta / 2) / 2}}{\sqrt n} \Big],
  \end{equation*}
  where $c = c_{\beta, \norm{Y}_\infty}$.
\end{lemma}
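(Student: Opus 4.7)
My plan is to prove the bound via symmetrization, followed by a Dudley chaining argument at the correct localization radius, with a fixed-point observation explaining the $\max$ structure. To begin with, symmetrization gives $\E\|P - P_n\|_{V_{r,\lambda}} \leq 2\,\E \sup_{h \in V_{r,\lambda}}|n^{-1}\sum_{i=1}^n \varepsilon_i h(X_i,Y_i)|$ where the $\varepsilon_i$ are i.i.d. Rademacher, and two structural facts drive the rest. First, by the continuous embedding in Assumption~$C_\beta$, every $f \in \mathcal F_r$ satisfies $\|f\|_\infty \leq cr$, so writing $\mathcal L_{r,f} = (f - f_r^*)(f + f_r^* - 2Y)$ with $\|f + f_r^* - 2Y\|_\infty \leq Cr$, every $h = \alpha \mathcal L_{r,f}$ is bounded by $Cr^2$ and obeys the $L^\infty$-Lipschitz estimate $|\alpha_1 \mathcal L_{r,f_1} - \alpha_2 \mathcal L_{r,f_2}| \leq Cr^2|\alpha_1 - \alpha_2| + Cr \|f_1 - f_2\|_\infty$. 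Second, since $\mathcal F_r = r\mathcal F_1$ is convex, first-order optimality at $f_r^*$ yields the Bernstein-type inequality $\E \mathcal L_{r,f} \geq \|f - f_r^*\|_{L^2(P)}^2$, which converts the constraint $\E h \leq \lambda$ into the variance bound $\E h^2 \leq Cr^2 \alpha^2 \|f-f_r^*\|_{L^2(P)}^2 \leq Cr^2 \alpha \cdot \E h \leq Cr^2 \lambda$.

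Next, combining the Lipschitz estimate with the bound $H_\infty(\delta, \mathcal F_r) \leq c(r/\delta)^\beta$ from Assumption~$C_\beta$ gives $\log N_\infty(V_{r,\lambda}, \eta) \leq C(r^2/\eta)^\beta$ for small $\eta$ (the contribution of the one-dimensional parameter $\alpha$ is only logarithmic and is absorbed). A Dudley integral, localised at the deterministic $L^2(P)$-radius $\sigma = Cr\sqrt\lambda$ coming from the variance bound, then yields
\begin{equation*}
  \E\sup_{h \in V_{r,\lambda}} \Big|\tfrac{1}{n}\sum_{i=1}^n \varepsilon_i h(X_i,Y_i)\Big| \;\leq\; \frac{C}{\sqrt n}\int_0^\sigma \sqrt{\log N_\infty(V_{r,\lambda}, \eta)}\,d\eta \;\leq\; \frac{C\, r^{1+\beta/2}\,\lambda^{(1-\beta/2)/2}}{\sqrt n},
\end{equation*}
which converges because $\beta < 2$, and which is exactly the second argument of the $\max$ in the lemma.

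Finally, the first argument comes out by monotonicity of the local complexity in $\lambda$: setting $\lambda^\star := r^2 n^{-1/(1+\beta/2)}$, a direct computation shows that the right-hand side above evaluated at $\lambda^\star$ equals $\lambda^\star$ itself. Hence for every $\lambda \leq \lambda^\star$, the inclusion $V_{r,\lambda} \subset V_{r,\lambda^\star}$ gives $\E\|P-P_n\|_{V_{r,\lambda}} \leq \E\|P-P_n\|_{V_{r,\lambda^\star}} \leq C\lambda^\star = Cr^2 n^{-1/(1+\beta/2)}$, which produces the first argument of the $\max$. The main obstacle I anticipate is the Dudley step itself: to rigorously truncate the chaining integral at the deterministic radius $Cr\sqrt\lambda$ one needs either a concentration bound on the random $L^2(P_n)$-diameter of $V_{r,\lambda}$ or a short peeling argument, and care is required to propagate the polynomial entropy of $\mathcal F_r$ into the product class $V_{r,\lambda}$ without picking up extraneous factors of $\|Y\|_\infty$, $\alpha$, or $\log n$ that would degrade the claimed rate.
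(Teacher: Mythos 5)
Your proposal mirrors the paper's proof at the strategic level: symmetrization, the quadratic minoration $\E\mathcal L_{r,f}\ge\|f-f_r^*\|^2_{L^2(\mu)}$ coming from convexity of $\mathcal F_r$, the entropy estimate from Assumption~$(C_\beta)$, a Dudley integral, and a fixed-point observation to produce the two branches of the $\max$. Your direct treatment of the star-hull — bounding $\log N_\infty(V_{r,\lambda},\eta)\lesssim(r^2/\eta)^\beta$ so that the extra parameter $\alpha\in[0,1]$ only contributes a logarithmic term — together with the monotonicity-in-$\lambda$ argument for the first branch, is a legitimate alternative to the paper's use of the peeling lemma (Lemma~4.6 of Mendelson and Neeman), and your variance estimate $\E h^2\le Cr^2\lambda$ is correct.

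However, the issue you flag at the end is not a technicality — it is exactly where the paper does the real work, and your second display is not established without it. Dudley's bound for the (conditionally subgaussian) symmetrized process runs up to the \emph{empirical} $L^2(P_n)$-diameter of $V_{r,\lambda}$, which is a random quantity; your variance bound only controls the $L^2(P)$-radius $Cr\sqrt\lambda$. If instead you integrate up to the deterministic $L^\infty$-diameter ($\asymp r^2$) you only recover the trivial global rate $r^2/\sqrt n$, so the localization must come from somewhere. The paper obtains it by a second Gin\'e--Zinn symmetrization plus a contraction argument applied to the \emph{squared} class, giving the self-bounding inequality $\E_X[\Delta^2]\lesssim (r+1)E+2^{i+1}\lambda$ where $E$ is the Gaussian complexity being bounded; solving this implicit relation in $E$ is precisely what produces the two regimes of the $\max$. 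In other words, the $\max$ structure is a by-product of the self-bounding step, not something that can be extracted afterward by monotonicity — you still need the contraction/self-bounding argument to establish the displayed Dudley estimate before your monotonicity trick has anything to anchor on. Until that step is carried out, the proof does not close.
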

This Lemma, although probably not optimal, is sufficient to provide a
satisfactory risk bound for the PERM with a penalization of the form
$\pen(f) = h^2 |f|_{\mathcal F}^2$. It is close in spirit to a bound
proposed in \cite{loustau09} (see Theorem~1) for the problem of
classification framework using a Besov penalization, with an extra
assumption on the inputs $X_i$, since the proof involves a
decomposition on a wavelet basis. Here we use only the entropy
condition, together with some basic tools from empirical process
theory, see the proof in Appendix~\ref{sec:lemmas}.

\subsection{A risk bound for the PERM using entropy}

Now, we can derive a risk bound for the PERM using
Lemma~\ref{lem:complexity} and the results from
\cite{Mendelson08regularizationin}. First, note that 
\begin{equation*}
  \lambda / 8 \geq c \max\Big[ r^2 n^{-1
    / (1 + \beta /2)}, \frac{r^{1 + \beta / 2} \lambda^{(1 -
      \beta / 2) / 2}}{\sqrt n} \Big]
\end{equation*}
if and only if $\lambda \geq c r^2 n^{-1 / (1 + \beta /2)}$. So, any
$\lambda \geq c r^2 n^{-1 / (1 + \beta /2)}$ satisfies, using
Lemma~\ref{lem:complexity}, that $\lambda / 8 \geq \E\norm{P -
  P_n}_{V_{r, \lambda}}$, and consequently, using the ``isomorphic
coordinate projection'' [see Theorem~2.2 in
\cite{Mendelson08regularizationin}], we have that for any $f \in
\mathcal F_r$, the following holds w.p. larger than $1 - 2e^{-x}$:
\begin{equation*}
  \frac 12 P_n \mathcal L_{r, f} - \rho_n(r, x) \leq P \mathcal L_{r, f}
  \leq 2 P_n \mathcal L_{r, f} + \rho_n(r, x),
\end{equation*}
where
\begin{equation}
  \label{eq:rho_n}
  \rho_n(r, x) := c \Big( r^2 n^{-1 / (1 + \beta / 2)} + \frac{(1 + r^2)
    x}{n} \Big). 
\end{equation}
This explains the shape of the usual quadratic penalization $\pen(f) =
h^2 |f|_{\mathcal F}^2$, where $h = c n^{-1 / (2 + \beta)}$ (up to the
other term, which is of smaller order $1/n$), and this entails the
following.
\begin{theorem}
  \label{thm:PERM}
  Assume that $\norm{Y}_\infty \leq b$, and grant
  Assumption~\ref{ass:entropy}. Let $\rho_n(r, x)$ be given
  by~\eqref{eq:rho_n} and define for $r, y > 0$:
  \begin{equation*}
    \theta(r, y) = y + \log(\pi^2 / 6) + 2 \log(1 + cn + \log
    r),
  \end{equation*}
  where $c = c_{\beta, b}$. Then, for any $x > 0$, with probability at
  least $1 - 2\exp(-x)$, any $\bar f \in \mathcal F$ that minimizes the
  functional
  \begin{equation*}
    P_n \ell_f + c_1 \rho_n( 2 |f|_{\mathcal F}, \theta(|f|_{\mathcal
      F}, x) )
  \end{equation*}
  over $\mathcal F$ also satisfies
  \begin{equation*}
    P \ell_{\bar f} \leq \inf_{f \in \mathcal
      F} \Big( P \ell_f + c_2 \rho_n(2 |f|_{\mathcal F},
    \theta(|f|_{\mathcal F}, x) ) \Big).
  \end{equation*}
\end{theorem}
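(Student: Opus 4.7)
The plan is to combine Lemma~\ref{lem:complexity} with the ``isomorphic coordinate projection'' (Theorem~2.2 of \cite{Mendelson08regularizationin}) already cited just above the statement, upgrading the resulting fixed-radius two-sided comparison
\begin{equation*}
\tfrac12 P_n \mathcal L_{r,f} - \rho_n(r,x) \leq P\mathcal L_{r,f} \leq 2 P_n \mathcal L_{r,f} + \rho_n(r,x), \quad f \in \mathcal F_r,
\end{equation*}
into a statement that holds simultaneously for every $r$ from a suitably chosen grid, then running a standard PERM comparison at a radius large enough to contain both $\bar f$ and the candidate $f$.

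The first step is a peeling over the radius. I would apply the fixed-radius isomorphic bound at dyadic radii $r_k = 2^k$ with confidence level $x_k = x + \log(\pi^2/6) + 2\log(1 + cn + k\log 2)$, so that the total failure probability $\sum_k 2 e^{-x_k}$ is controlled by $2 e^{-x}$ (the $\pi^2/6$ piece absorbs the series $\sum k^{-2}$, which is exactly why $\theta(r,y)$ has the form $y + \log(\pi^2/6) + 2\log(1 + cn + \log r)$). Restricting $k$ to the relevant range $|\log r_k| \lesssim \log(cn)$ is harmless because smaller radii force $\bar f = 0$ and larger radii give a trivial penalty. On the resulting good event $\Omega_x$ of probability at least $1 - 2e^{-x}$, the two-sided inequality holds at every $r_k$ with the residue $\rho_n(r_k, \theta(r_k, x))$.

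On $\Omega_x$, fix an arbitrary $f \in \mathcal F$ and choose the smallest dyadic radius $r_\star \in [2\max(|f|_{\mathcal F}, |\bar f|_{\mathcal F}),\, 4\max(|f|_{\mathcal F}, |\bar f|_{\mathcal F})]$, so that both $f$ and $\bar f$ lie in $\mathcal F_{r_\star}$. Subtracting $P_n \ell_{f_{r_\star}^*}$ in the defining inequality of $\bar f$ turns the losses into excess losses $\mathcal L_{r_\star, \cdot}$, giving
\begin{equation*}
P_n \mathcal L_{r_\star, \bar f} + c_1 \rho_n(2|\bar f|_{\mathcal F}, \theta(|\bar f|_{\mathcal F}, x)) \leq P_n \mathcal L_{r_\star, f} + c_1 \rho_n(2|f|_{\mathcal F}, \theta(|f|_{\mathcal F}, x)).
\end{equation*}
I then use the upper isomorphic bound to replace $P_n \mathcal L_{r_\star, \bar f}$ by $\tfrac12 P \mathcal L_{r_\star, \bar f} - \tfrac12 \rho_n(r_\star, \theta(r_\star, x))$ from below, and the lower isomorphic bound to replace $P_n \mathcal L_{r_\star, f}$ by $2 P \mathcal L_{r_\star, f} + 2 \rho_n(r_\star, \theta(r_\star, x))$ from above. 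Adding $P \ell_{f_{r_\star}^*}$ back converts $P\mathcal L_{r_\star, \cdot}$ into $P\ell_\cdot$, and because $r_\star \leq 4\max(|f|_{\mathcal F}, |\bar f|_{\mathcal F})$ and $\rho_n$ is affine in $r^2$, one has $\rho_n(r_\star, \theta(r_\star, x)) \lesssim \rho_n(2|f|_{\mathcal F}, \theta(|f|_{\mathcal F}, x)) + \rho_n(2|\bar f|_{\mathcal F}, \theta(|\bar f|_{\mathcal F}, x))$.

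The main obstacle is the \emph{absorption step}: after these substitutions, a multiple of $\rho_n(2|\bar f|_{\mathcal F}, \theta(|\bar f|_{\mathcal F}, x))$ appears on the right-hand side and must be moved back to the left, where it is dominated by the penalty $c_1 \rho_n(2|\bar f|_{\mathcal F}, \theta(|\bar f|_{\mathcal F}, x))$ provided $c_1$ is taken strictly larger than the implicit constant produced by the isomorphic bound; the choice of the factor $2$ inside $\rho_n(2|\cdot|_{\mathcal F}, \cdot)$ in the penalty is precisely what guarantees $r_\star \leq 2 \cdot 2|\bar f|_{\mathcal F}$ and hence this domination. Once this is resolved, rearranging yields $P\ell_{\bar f} \leq P\ell_f + c_2 \rho_n(2|f|_{\mathcal F}, \theta(|f|_{\mathcal F}, x))$, and taking the infimum over $f \in \mathcal F$ gives the stated oracle inequality. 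The rest is routine bookkeeping of the constants $c_1, c_2$.
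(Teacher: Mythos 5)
Your plan reconstructs exactly the mechanism that the paper delegates to the black-box citation: the paper's proof of Theorem~\ref{thm:PERM} is a two-line pointer to Theorem~2.5 and Theorem~3.7 of \cite{Mendelson08regularizationin}, and that cited theorem is itself proved by dyadic peeling of the fixed-radius isomorphic coordinate projection (Theorem~2.2 there), with a union bound over confidence levels of precisely the form you write down, which is why $\theta(r,y)$ carries the $\log(\pi^2/6)$ and $2\log(1+cn+\log r)$ pieces. So you are following the same route as the paper, just making explicit what the paper outsources. One remark worth flagging: in the absorption step the naive application of the two-sided bound tends to leave a multiplicative constant in front of $P\ell_f$ (or equivalently in front of $P\mathcal L_{r_\star,f}$) rather than the clean additive form $P\ell_f + c_2\rho_n(\cdot)$ claimed in the theorem; closing that gap requires either the sharper pointwise form of the isomorphic bound or a short case split on whether $P\mathcal L_{r_\star,\bar f}$ exceeds $\rho_n(r_\star,\cdot)$. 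This is a genuine piece of bookkeeping that your sketch glosses over, but it is the same detail the paper itself leaves inside the reference, so it does not distinguish your approach from the authors'.
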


\begin{proof}
  The conditions of Theorem~2.5 in \cite{Mendelson08regularizationin}
  are satisfied with $\rho_n(r, x)$ given by~\eqref{eq:rho_n}. The
  statement of the Theorem easily follows from it, using the same
  arguments as in the proof of Theorem~3.7 herein.
\end{proof}

Let us rewrite the result of Theorem~\ref{thm:PERM}. For any $x > 0$,
if $\bar f$ is the PERM at level $x$, one has, with
$\nu^n$-probability larger than $1 - 2e^{-x}$:
\begin{align*}
  P \ell_{\bar f} \leq \inf_{r > 0} \Big\{ P \ell_{f_r} + c_1 r^2
  n^{-\frac{1}{1 + \beta/2}} + \frac{c_2(1 + r^2)}{n} \big(x +
  \log(\frac{\pi^2}{6}) + \log( 1 + c_3 n + \log r ) \big) \Big\},
\end{align*}
where we recall that $P \ell_{\bar f} = \E[ (Y - \bar f(X))^2 | X_1,
\ldots, X_n ]$ and $f_r \in \argmin_{f \in \mathcal F_r} R(f)$. This
inequality proves that $\bar f$ adapts to the radius $|f|_{\mathcal
  F}$ of $f$ in $\mathcal F$. The leading term in the right hand side
of this inequality is $r^2 n^{-2 / (2 + \beta)}$. If $\mathcal F =
B_{p, \infty}^{\bs s}$, it becomes $r^2 n^{-2 \bar {\bs s} / (2 \bar
  {\bs s} + d)}$, which is the minimax optimal rate of convergence
over anisotropic Besov space, see \cite{kerk_lepski_picard07} for
instance.

\subsection{Adaptive estimation over anisotropic Besov space}

What we have in mind now is the application of
Theorems~\ref{thm:aggregation} and~\ref{thm:PERM} to the problem of
adaptive estimation over a collection of anisotropic Besov
space. Consider two vectors $\bs s^{\min}$ and $\bs s^{\max}$ in
$\mathbb R_+^d$ with positive coordinates and harmonic means $\bar
{\bs s}^{\min}$ and $\bar {\bs s}^{\max}$ respectively, satisfying
$\bs s^{\min} \leq {\bs s}^{\max}$ ($s_i^{\min} \leq s_i^{\max}$ for
any $i \in \{ 1, \ldots, d \}$) and $\bar {\bs s}^{\min} > d / \min(p,
2)$. Consider the collection of anisotropic Besov space
\begin{equation}
  \label{eq:besov-collection}
  ( B_{p, \infty}^{\bs s} : \bs s \in \bs S ), \text{ where }
  \bs S := \prod_{i=1}^d [s_i^{\min}, s_i^{\max}].
\end{equation}
The strategy is to aggregate a dictionary of PERM, corresponding to a
discretization of $\bs S$, in order to adapt to the anisotropic
smoothness of $f_0$. The steps are the following. We shall assume to
simplify that we have $2n$ observations.
\begin{definition}[Adaptive estimator]
  \label{def:adaptive-est}
  \
  \begin{enumerate}
  \item Split (at random) the whole sample $(X_i, Y_i)_{i=1}^{2n}$
    into a training sample $(X_i, Y_i)_{i=1}^{n}$ and a learning
    sample $(X_i, Y_i)_{i=n+1}^{2n}$. Fix a confidence level $x > 0$.
  \item Compute the uniform discretization of $\bs S$ with step $(\log
    n)^{-1}$:
    \begin{equation}
      \label{eq:discr_smoothness_cube}
      \bs S_n := \prod_{i=1}^d  \big\{ s_i^{\min}
      + k (\log n)^{-1} :1\leq k \leq [ (s_i^{\max} - s_i^{\min}) \log n ]
      \big\}.
    \end{equation}
    Then, for each $\bs s \in \bs S_n$, take $\bar f_s$ as a minimizer
    of the functional
    \begin{equation*}
      \frac{1}{n} \sum_{i=1}^n (Y_i - f(X_i))^2 + \pen_{\bs s}(f, x),
    \end{equation*}
    where
    \begin{align*}
      \pen_{\bs s}(f, x) &= c_1 n^{ -2 \bar {\bs s} / (2 \bar {\bs
          s} + d)} |f|_{B_{p, \infty}^{\bs s}}^2  \\
      &+ \frac{c_2(1 + |f|_{B_{p, \infty}^{\bs s}}^2)}{n} \big(x +
      \log(\frac{\pi^2}{6}) + \log( 1 + c_3 n + \log |f|_{B_{p,
          \infty}^{\bs s}} ) \big).
    \end{align*}
    If $b$ is such that $\norm{Y}_\infty \leq b$, consider the
    dictionary of truncated PERM
    \begin{equation*}
      F^{\rm PERM} = \{ -b \vee \bar f_s \wedge b : s \in \bs S_n \}.
    \end{equation*}
  \item Using the learning sample $(X_i, Y_i)_{i=n+1}^{2n}$, compute
    one of the aggregates $\tilde f$ given in
    Definition~\ref{def:aggprocedures} using the dictionary $F^{\rm
      PERM}$.
  \end{enumerate}
\end{definition}

The next Theorem, which is an immediate consequence of
Theorems~\ref{thm:aggregation} and~\ref{thm:PERM}, proves that the
aggregate $\tilde f$ is minimax adaptative over the collection of
anisotropic Besov spaces~\eqref{eq:besov-collection}.

\begin{theorem}
  \label{thm:adaptive}
  Let $\tilde f$ be the aggregated estimator given in
  Definition~\ref{thm:adaptive}. Assume that $\max( \norm{Y}_\infty,
  \norm{f_0})_{\infty} \leq b$ and that $f_0 \in B_{p, \infty}^{\bs
    s_0}$ for some $\bs s_0 \in \bs S$, where $(B_{p, \infty}^{\bs s}
  : \bs s \in \bs S)$ is the collection given
  by~\eqref{eq:besov-collection} that satisfies $\bar {\bs s}^{\min} >
  d / p$. Then, with $\nu^{2n}$-probability larger than $1 - 4
  e^{-x}$, we have:
  \begin{align*}
    \norm{&\tilde f - f_0}_{L^2(\mu)}^2 \\
    &\leq c_1 r_0^2 n^{-\frac{2 \bar{\bs s}_0}{2 \bar{\bs s}_0 + d}} +
    c_2 \frac{1 + r_0^2 + \log \log n}{n} \big(x + \log(\pi^2 / 6) + c
    \log( 1 + c_3 n + \log r_0 ) \big) \Big\},
  \end{align*}
  where $r_0 = |f_0|_{B_{p, \infty}^{\bs s}}$ and
  \begin{equation*}
    \frac{1}{\bar{\bs s}_0 } = \frac 1d \sum_{i=1}^d \frac{1}{s_{0,
        i}}.
  \end{equation*}
\end{theorem}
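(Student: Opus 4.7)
The plan is to combine the oracle inequality of Theorem~\ref{thm:aggregation} applied to the dictionary $F^{\rm PERM}$ with the risk bound of Theorem~\ref{thm:PERM} applied to a single, well-chosen PERM whose smoothness grid point $\bs s_n \in \bs S_n$ is close to the true smoothness $\bs s_0$. Since $F^{\rm PERM}$ consists of truncations of $\bar f_{\bs s}$ in $[-b,b]$ and $\norm{f_0}_\infty \leq b$, the truncation can only decrease the $L^2(\mu)$ error to $f_0$, so it is harmless. Its cardinality is bounded by $M = |\bs S_n| \leq \prod_{i=1}^d (s_i^{\max}-s_i^{\min})\log n \leq c(\log n)^d$, hence $\log M \leq d\log\log n + c$. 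Since $\norm{Y}_\infty \leq b$ and the dictionary is uniformly bounded by $b$, the bounded case of Theorem~\ref{thm:aggregation} applies and gives, with $\nu^{2n}$-probability at least $1-2e^{-x}$,
\begin{equation*}
\norm{\tilde f - f_0}_{L^2(\mu)}^2 \;\leq\; \min_{f \in F^{\rm PERM}} \norm{f - f_0}_{L^2(\mu)}^2 + c_b \frac{(1+x)(\log\log n + 1)}{n}.
\end{equation*}

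The second step is to bound the minimum over $F^{\rm PERM}$ by the risk of one particular truncated PERM. Choose $\bs s_n \in \bs S_n$ componentwise smaller than (or equal to) $\bs s_0$ and at distance at most $(\log n)^{-1}$ in each coordinate; such a point exists provided $n$ is large enough. Because $\bs s_n \leq \bs s_0$, the continuous embedding $B_{p,\infty}^{\bs s_0} \hookrightarrow B_{p,\infty}^{\bs s_n}$ yields $r_n := |f_0|_{B_{p,\infty}^{\bs s_n}} \leq c|f_0|_{B_{p,\infty}^{\bs s_0}} = c r_0$. Applying Theorem~\ref{thm:PERM} to $\mathcal F = B_{p,\infty}^{\bs s_n}$ (which satisfies Assumption~$C_\beta$ with $\beta = d/\bar{\bs s}_n$ by the entropy computation recalled after~\eqref{eq:harmonic_mean}) with $f = f_0$ taken as the comparison function inside the infimum on the right hand side gives, with probability $1-2e^{-x}$,
\begin{equation*}
\norm{\bar f_{\bs s_n} - f_0}_{L^2(\mu)}^2 \;\leq\; c_1 r_n^2\, n^{-\frac{2\bar{\bs s}_n}{2\bar{\bs s}_n + d}} + \frac{c_2(1+r_n^2)}{n}\bigl(x + \log(\pi^2/6) + \log(1 + c_3 n + \log r_n)\bigr).
\end{equation*}

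Third, I compare the exponent at $\bar{\bs s}_n$ with the one at $\bar{\bs s}_0$. Since $|\bar{\bs s}_0 - \bar{\bs s}_n| = O((\log n)^{-1})$, a direct computation shows
\begin{equation*}
\left|\frac{2\bar{\bs s}_0}{2\bar{\bs s}_0 + d} - \frac{2\bar{\bs s}_n}{2\bar{\bs s}_n + d}\right| = O\bigl((\log n)^{-1}\bigr),
\end{equation*}
so that $n^{-2\bar{\bs s}_n/(2\bar{\bs s}_n+d)} \leq e^{O(1)} n^{-2\bar{\bs s}_0/(2\bar{\bs s}_0+d)}$, meaning that the discretization step $(\log n)^{-1}$ is precisely what is needed to absorb the rate mismatch into a constant factor. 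Replacing $r_n$ by $c r_0$ in the bound above produces the leading term $c_1 r_0^2 n^{-2\bar{\bs s}_0/(2\bar{\bs s}_0+d)}$ and the remainder stated in the theorem.

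Finally, I union-bound the two probability-$1-2e^{-x}$ events (aggregation and PERM bound) to get total probability $1-4e^{-x}$, and add the residual $c_b(1+x)\log\log n /n$ from the aggregation step to the remainder coming from Theorem~\ref{thm:PERM}. The $\log\log n$ term in the statement comes exactly from $\log M \sim \log\log n$. The main obstacle I anticipate is the third step: verifying carefully that a discretization of the smoothness cube with step $(\log n)^{-1}$ is fine enough so that $n^{-2\bar{\bs s}_n/(2\bar{\bs s}_n + d)}$ and $n^{-2\bar{\bs s}_0/(2\bar{\bs s}_0 + d)}$ differ by at most a constant factor, and that the Besov norm $|f_0|_{B_{p,\infty}^{\bs s_n}}$ is controlled by $|f_0|_{B_{p,\infty}^{\bs s_0}}$. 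The rest is essentially a bookkeeping combination of Theorems~\ref{thm:aggregation} and~\ref{thm:PERM}.
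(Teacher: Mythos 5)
Your proposal follows essentially the same route as the paper's own proof: apply the (bounded-case) oracle inequality from Theorem~\ref{thm:aggregation} to the truncated PERM dictionary of cardinality $O((\log n)^d)$, pick a grid point $\bs s_*$ in $\bs S_n$ lying componentwise just below $\bs s_0$, apply Theorem~\ref{thm:PERM} over $\mathcal F = B_{p,\infty}^{\bs s_*}$, absorb the discretization error in the exponent into a constant factor, and union-bound. The only cosmetic difference is that you control $|f_0|_{B_{p,\infty}^{\bs s_*}}$ by $c\,r_0$ via a general continuous embedding, whereas the paper uses the cleaner monotonicity $|f_0|_{B_{p,\infty}^{\bs s_*}} \leq |f_0|_{B_{p,\infty}^{\bs s_0}} = r_0$ (immediate from the $B_{p,\infty}^{\bs s}$ seminorm since $\bs s_* \leq \bs s_0$), and the paper records the explicit constant $e^{d/2}$ in the comparison of the rates at $\bs s_*$ and $\bs s_0$, which your step 3 proves more loosely as $e^{O(1)}$.
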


The dominating term in the right hand side is of order $n^{-\frac{2
    \bar{\bs s}_0}{2 \bar{\bs s}_0 + d}}$, which is the minimax
optimal rate of convergence over anisotropic Besov space (a minimax
lower bound over $B_{p, q}^{\bs s}$ can be easily obtained using
standard arguments, such as the ones from~\cite{tsybakov03}, together
with Bernstein estimates over $B_{p, \infty}^{\bs s}$ (that can be
found in~\cite{triebel06} for instance). Note that there is no regular
or sparse zone here, since the error of estimation is measured with
$L^2(\mu)$ norm. The result obtained here is stronger than the ones
usually obtained in minimax theory, where one only gives an upper
bound for $\E \norm{\tilde f - f_0}_{L^2(\mu)}^2$, while here is given
a concentration inequality for $\norm{\tilde f - f_0}_{L^2(\mu)}^2$.

\section{Simulation study}
\label{sec:simu}

In this section, we propose a simulation study for the problem of
selection of the smoothing parameter of the LASSO, see
\cite{MR1379242, MR2060166}. We simulate i.i.d. data
\begin{equation*}
  Y_i = \beta_0^\top X_i + \varepsilon_i,
\end{equation*}
where $\beta_0$ is a vector of size $p = 91$ given by
\begin{equation*}
  \beta_0 = (3, 1.5, 0^{30}, 2, -6, 4, 0^{25}, -4, 0^{15}, 2.5,
  3, 0^{10}, 3, 1, -2)
\end{equation*}
where $0^n$ is the vector in $\mathbb R^n$ with each coordinate set to
zero. The noise $\varepsilon_i$ is centered Gaussian with variance
$\sigma^2$. The vector $X = (X^1, \ldots, X^d)$ is a centered Gaussian
vector such that the correlation between $X^i$ and $X^j$ is
$2^{-|i-j|}$ (following the examples from \cite{MR1379242}). Using the
\texttt{lars} routine from
\texttt{R}\footnote{\texttt{www.r-project.org}}, we construct a
dictionary $F$ made of the entire sequence of LASSO type estimators for various regularization parameters coming out of the LARS algorithm.  Then we compare the prediction error $| \bs X(\hat
\beta - \beta_0) |_2$ and the estimation error $| \hat \beta - \beta_0 |_2$ where $\hat
\beta$ is:
\begin{itemize}
\item $\hat \beta^{(C_p)} = $ the LASSO with regularization parameter
  selected using Mallows-$C_p$ selection rule, see \cite{MR2060166}
\item $\hat \beta^{(\rm AEW)} = $ The aggregate with exponential
  weights computed on $F$ with temperature parameter $4 \sigma^2$, see
  for instance~\cite{dalalyan_tsybakov07}
\item $\hat \beta^{(\rm star)} = $ the star-shaped aggregate, see
  Algorithm~\ref{alg:star-shaped}, with constant $c = 2$.
\end{itemize}
We compute the errors $| \bs X(\hat \beta - \beta_0) |_2$ and $|\hat
\beta - \beta_0|_2$ using $100$ simulations for several values of $n$
and $\sigma^2$. The splits taken are chosen at random with size $n/2$
for training and $n/2$ for learning for both the AEW and star-shaped
aggregate (we don't split the learning sample). For both aggregates we
do some jackknife: instead of using a single aggregate, we compute a
mean of~10 aggregates obtained with several splits chosen at
random. This makes the final aggregates less dependent on the
split. In order to make the oracle and the Mallows-$C_p$ errors
comparable to the error of the aggregates (that need to split the
data, while Mallows-$C_p$ doesn't), we compute the weights of
aggregation using splitting, then we compute the aggregate using a
dictionary $F$ computed using the whole sample.

The conclusion is that, for this example, the star-shaped does a
better job than both the AEW and the $C_p$ in most cases. When the
noise level is not too high ($\sigma=2$, which corresponds to a RSNR
of~5), see the errors given in Figure~\ref{fig:sigma2}, the
star-shaped is always the best. When the noise level is high
($\sigma=5$, RSNR=2) and $n$ is small, see Figure~\ref{fig:sigma5},
the story is different: the AEW is better than the star-shaped. In
such an extreme situation the AEW takes advantage of the averaging
(recall that no coefficient is zero in the AEW). However, when $n$
becomes larger than $p$, the star-shaped improves again upon AEW.

\setlength{\figwidth}{4.5cm}
\setlength{\figheight}{6cm}

\begin{figure}[htbp]
  \centering
  \includegraphics[width=\figwidth,height=\figheight]{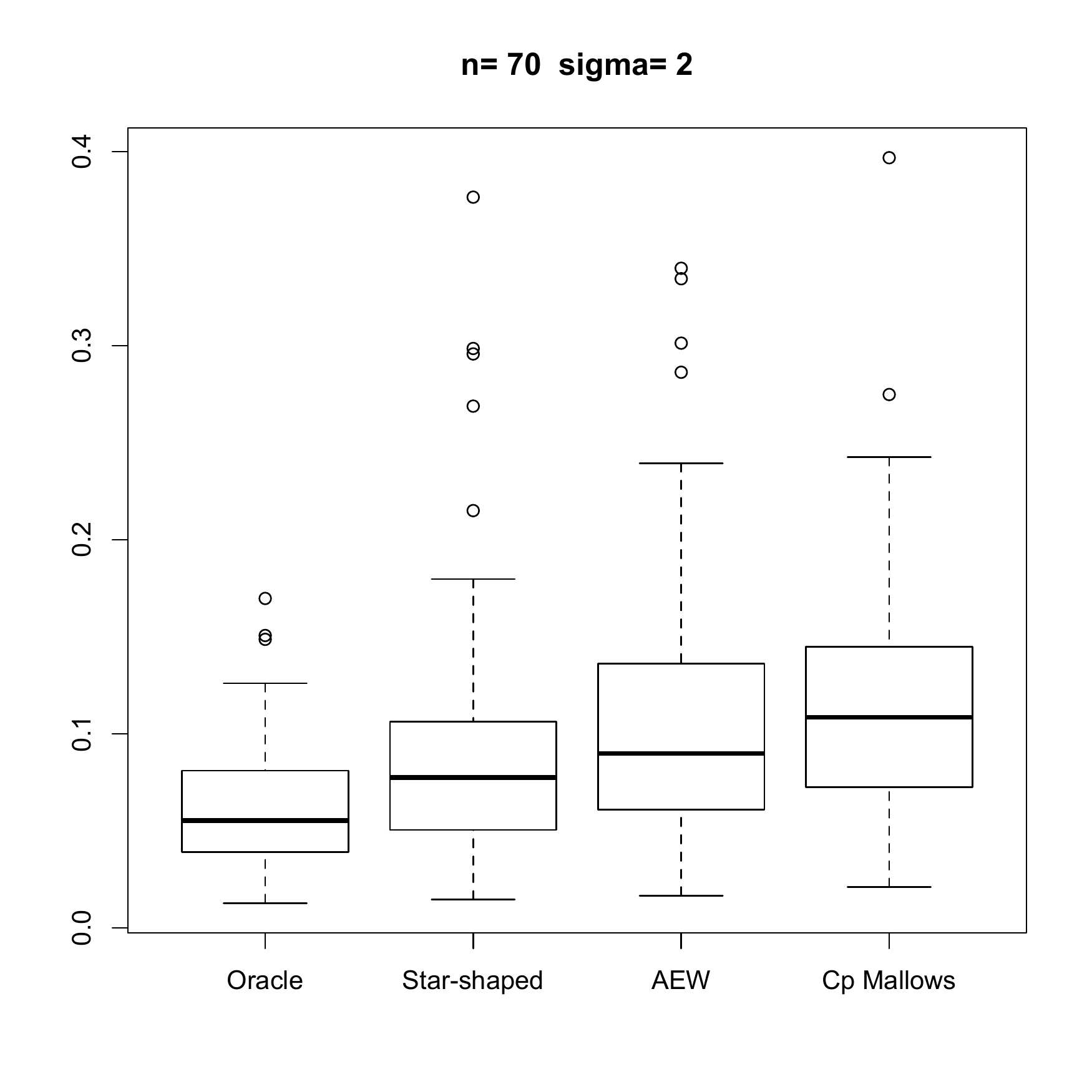}%
  \includegraphics[width=\figwidth,height=\figheight]{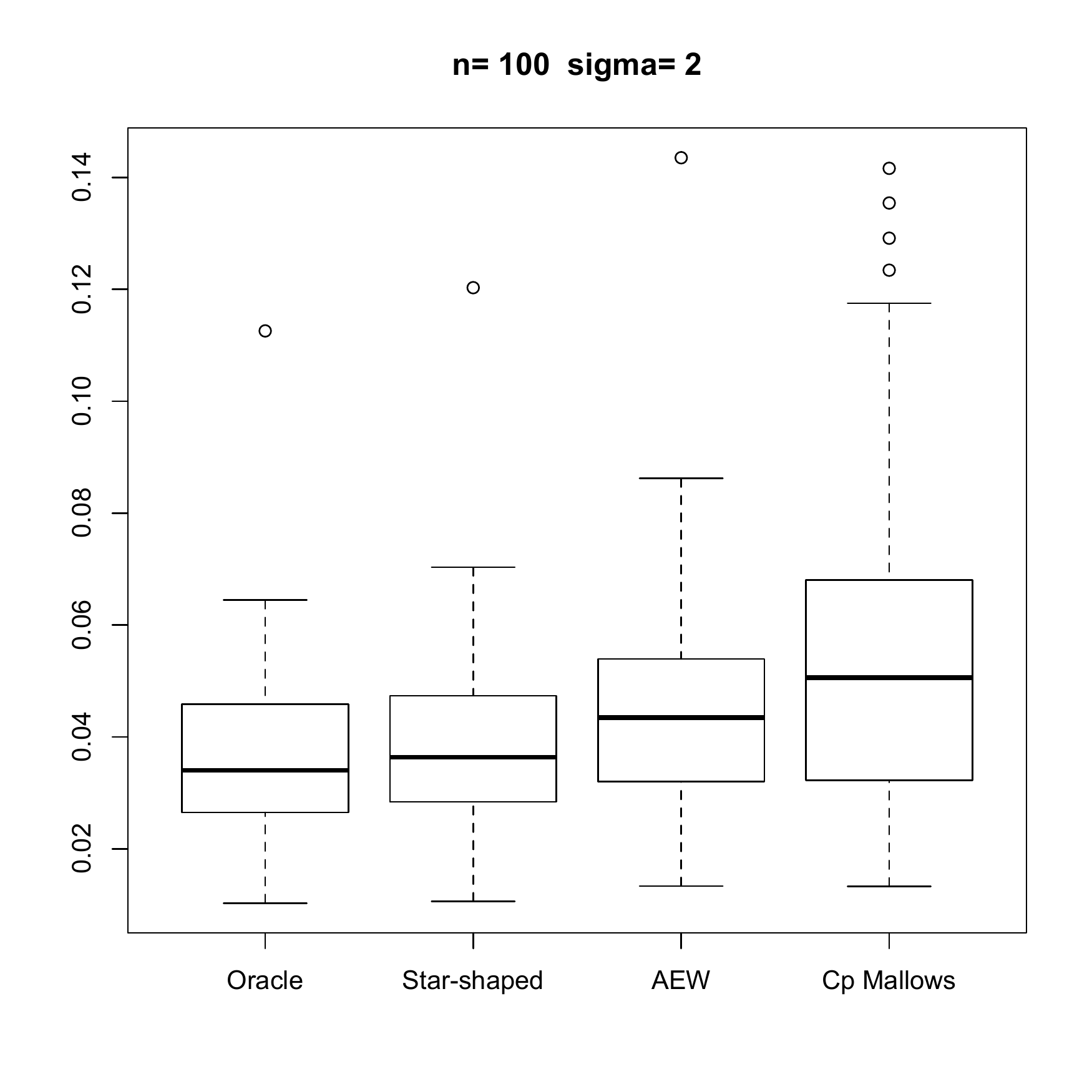}%
  \includegraphics[width=\figwidth,height=\figheight]{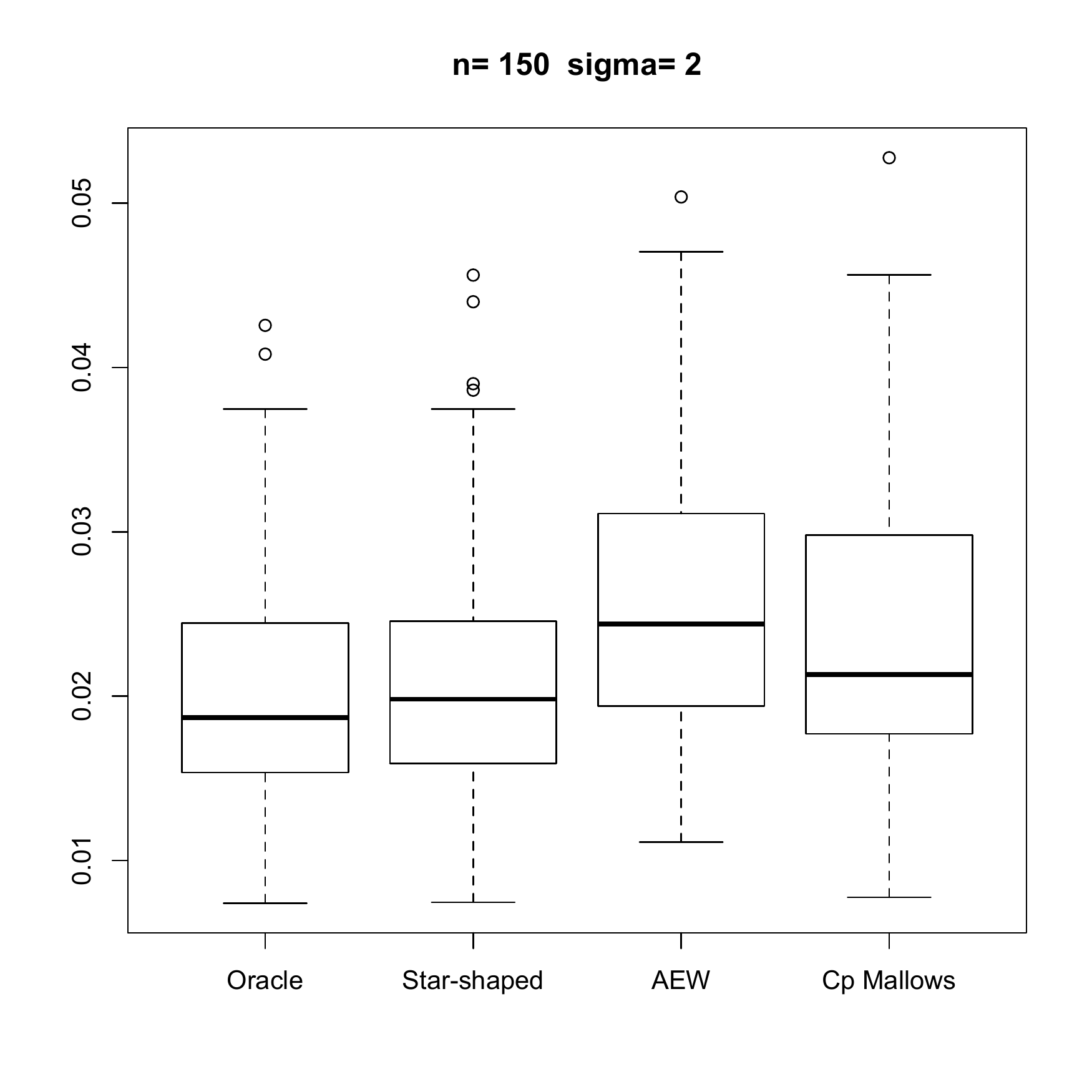} \\
  \includegraphics[width=\figwidth,height=\figheight]{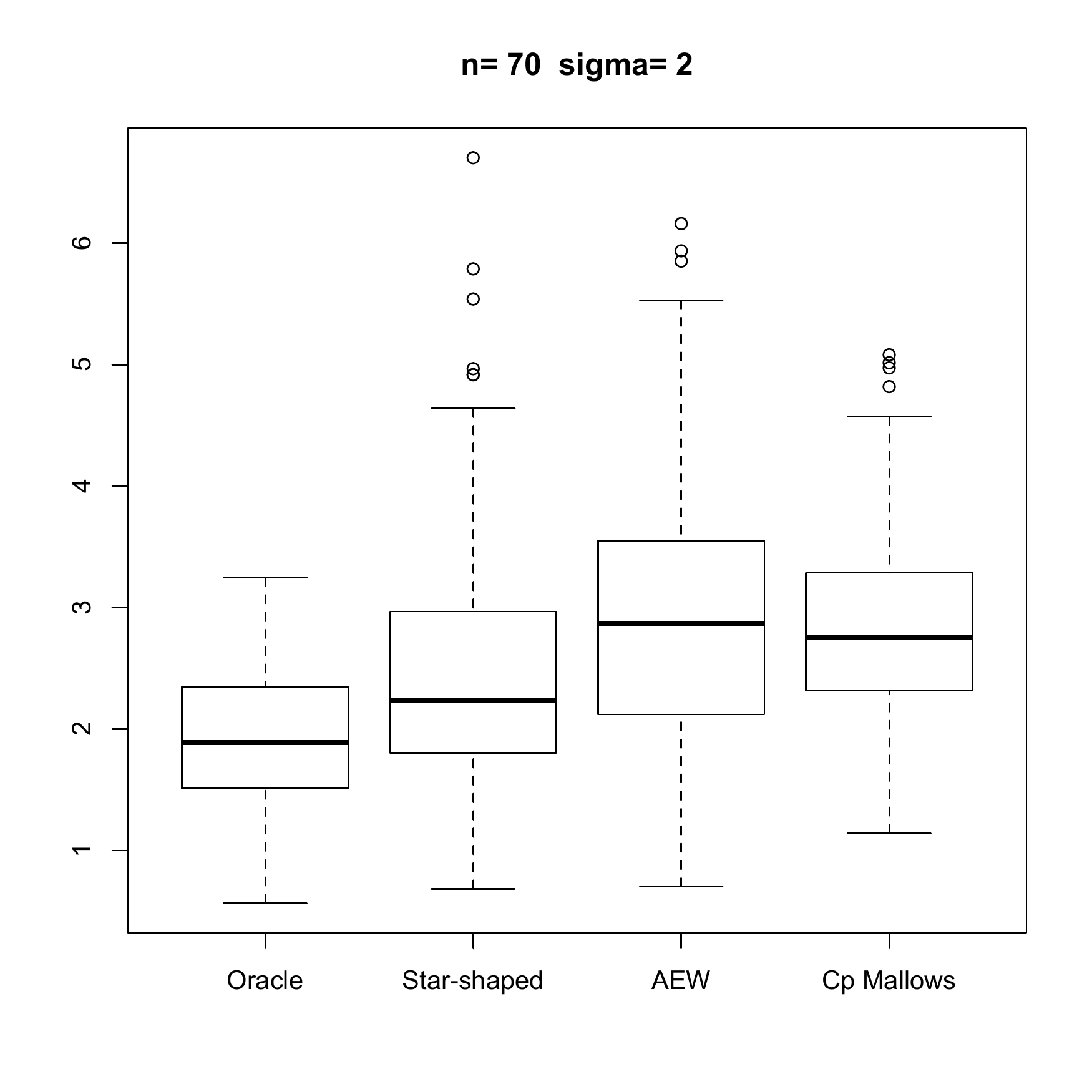}%
  \includegraphics[width=\figwidth,height=\figheight]{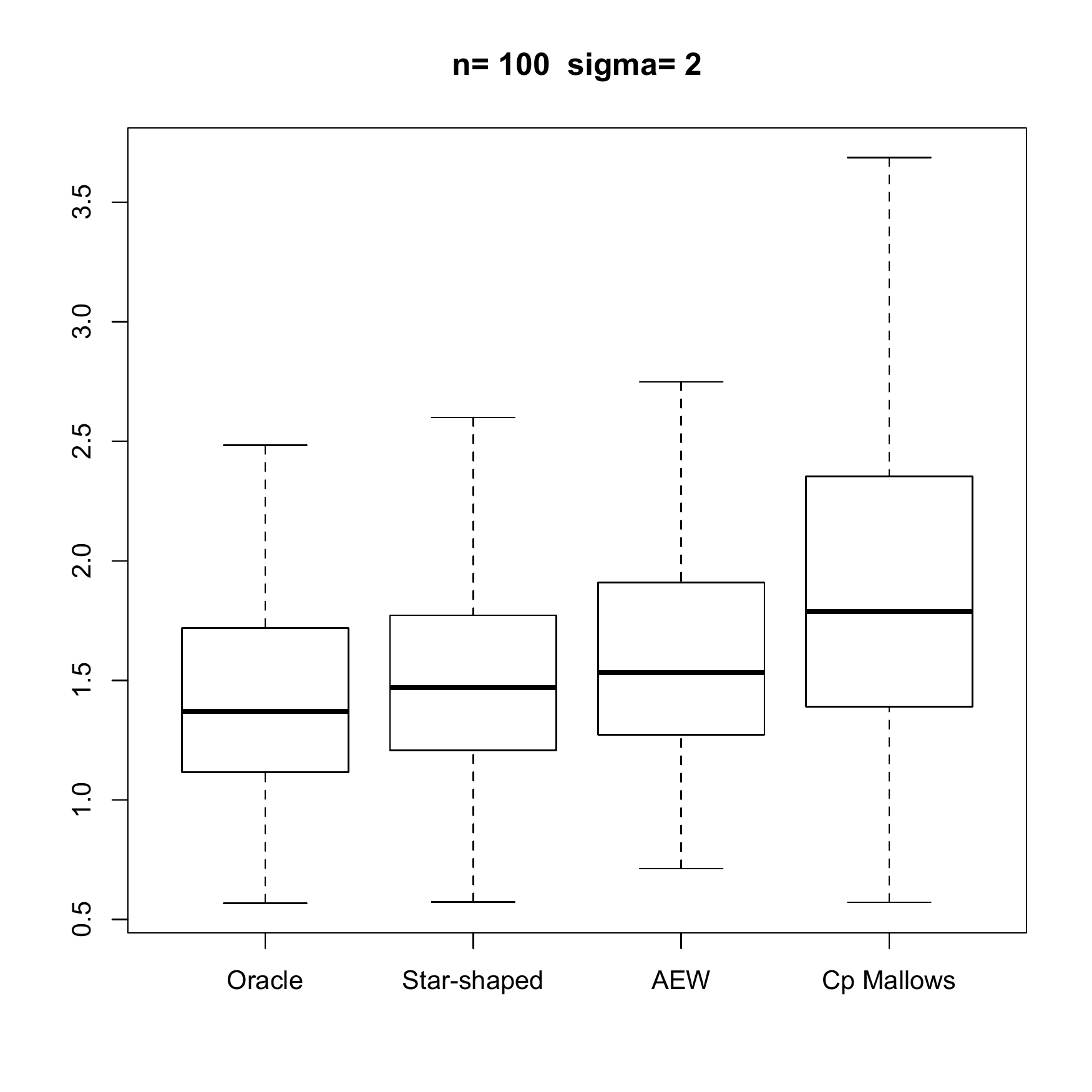}%
  \includegraphics[width=\figwidth,height=\figheight]{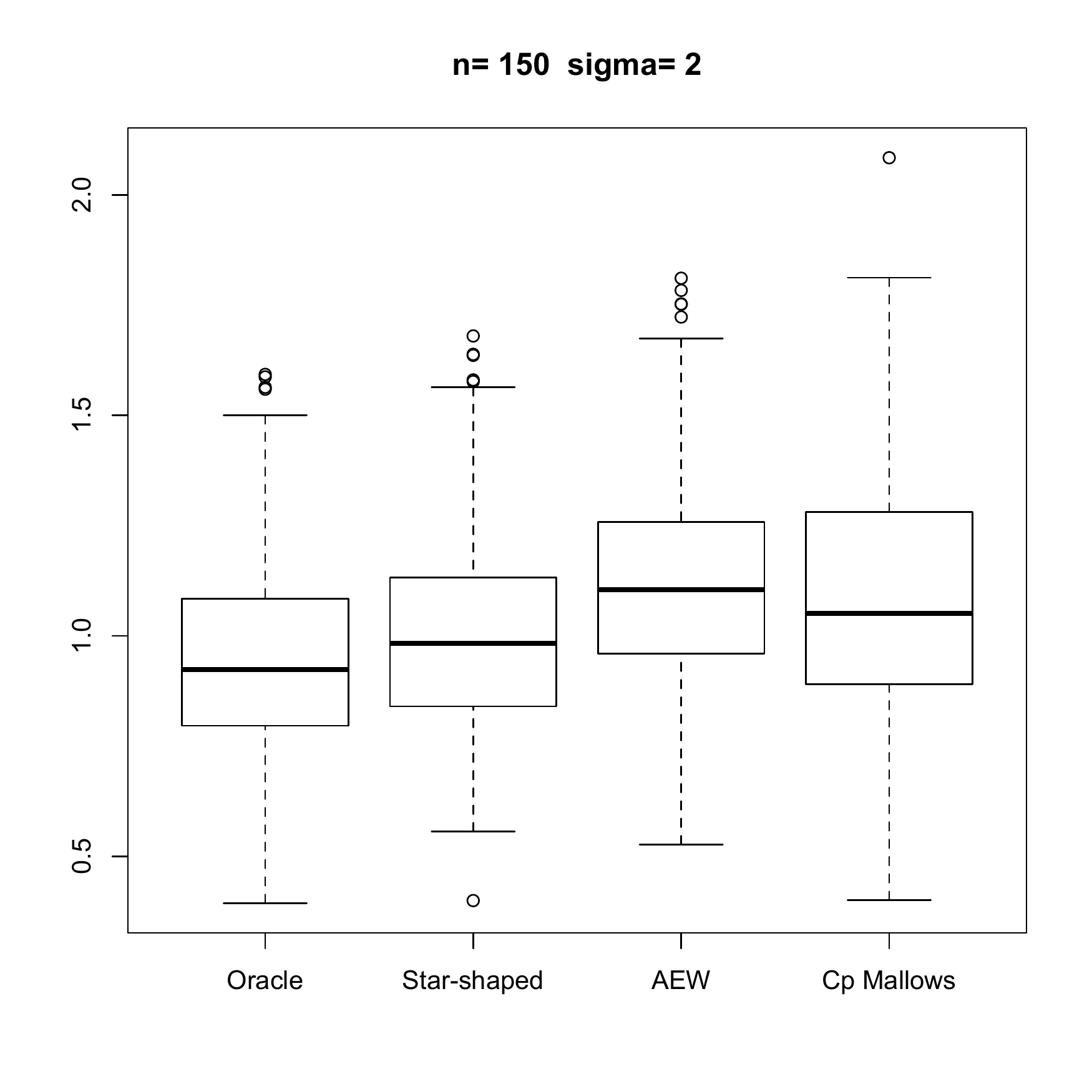}%
  \caption{Errors $| \hat \beta - \beta_0|_2$ (first row) and $|\bs
    X(\hat \beta - \beta_0)|_2$ (second row) for $\sigma=2$ and $n=70,
    100, 150$.}
  \label{fig:sigma2}
\end{figure}

\begin{figure}[htbp]
  \centering
  \includegraphics[width=\figwidth,height=\figheight]{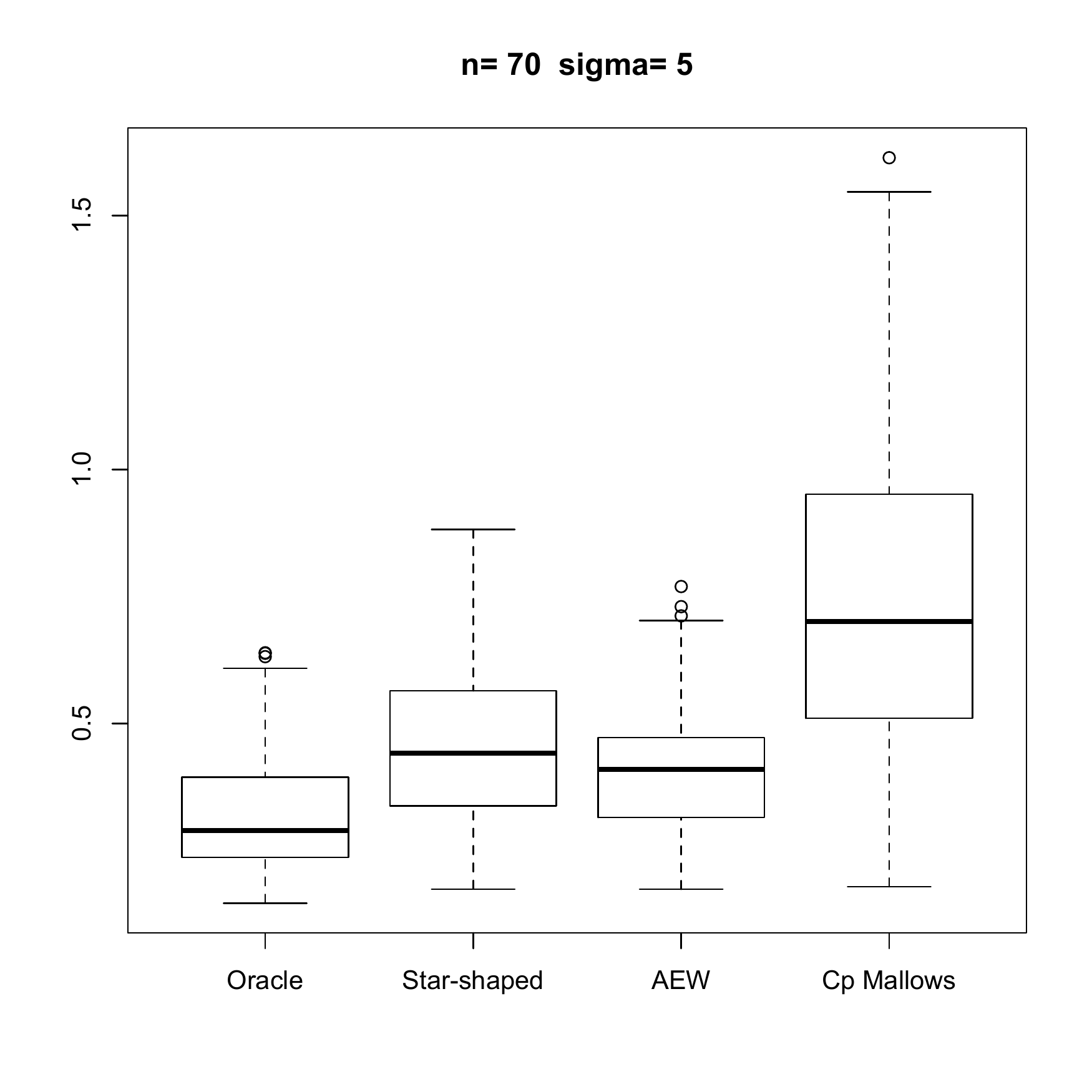}%
  \includegraphics[width=\figwidth,height=\figheight]{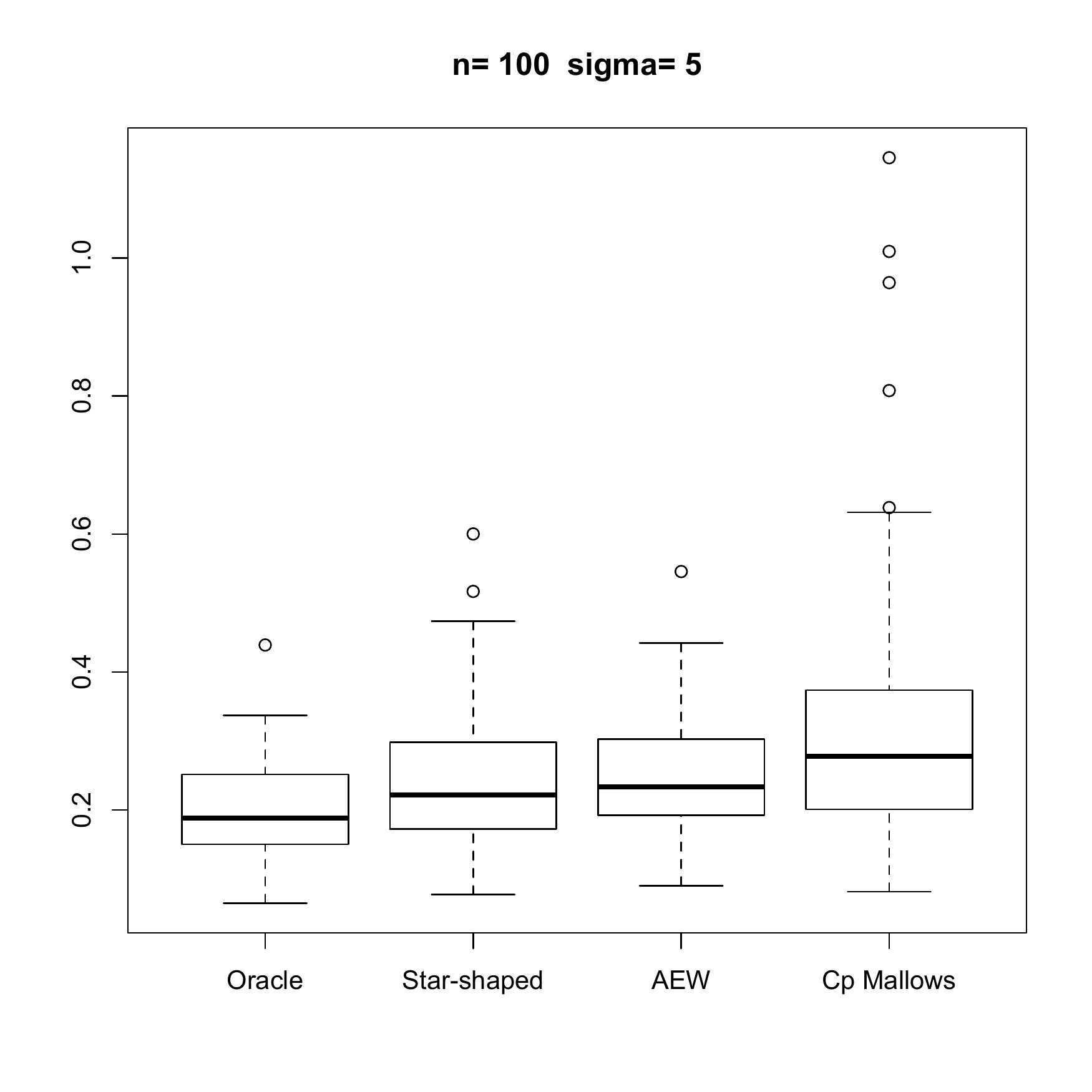}%
  \includegraphics[width=\figwidth,height=\figheight]{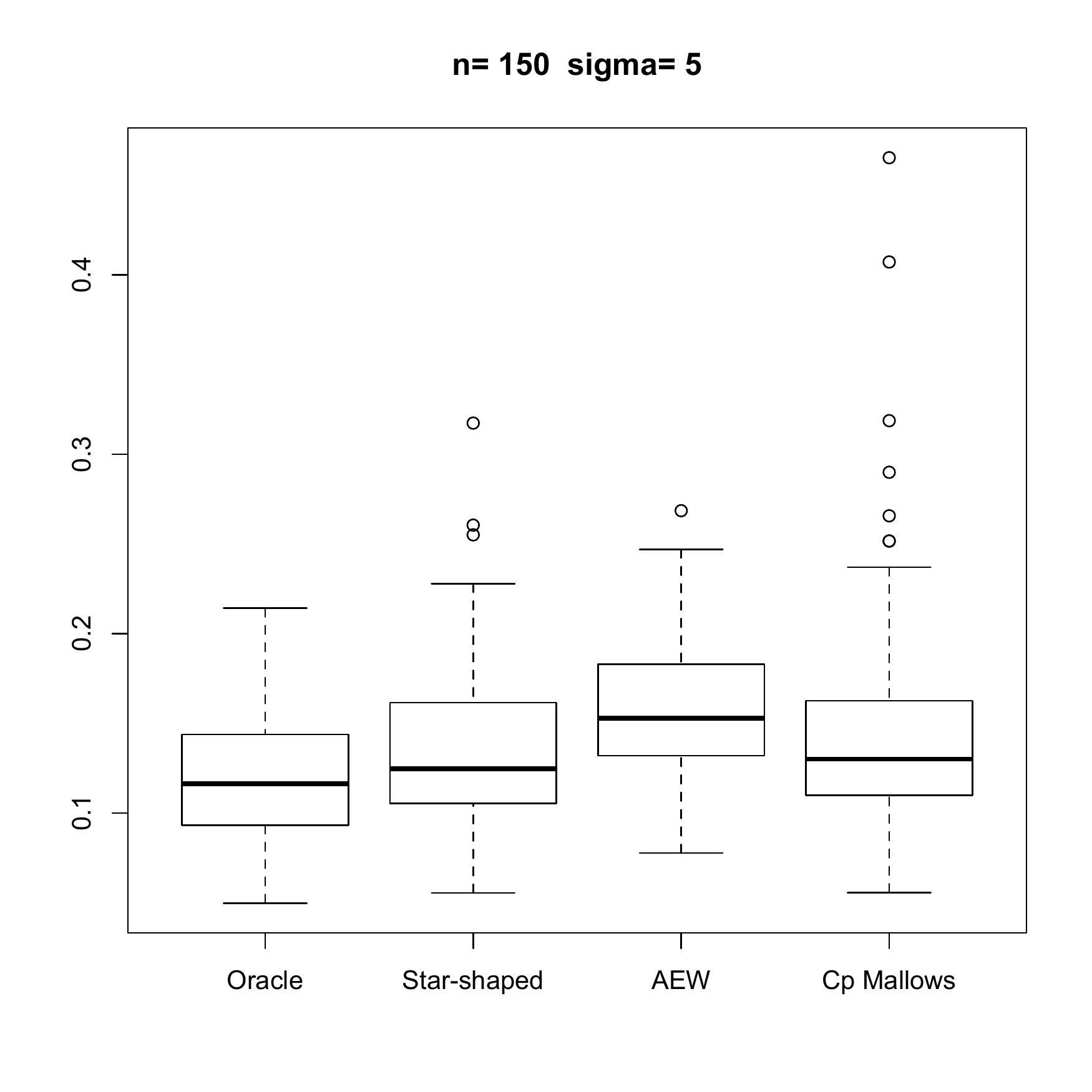} \\
  \includegraphics[width=\figwidth,height=\figheight]{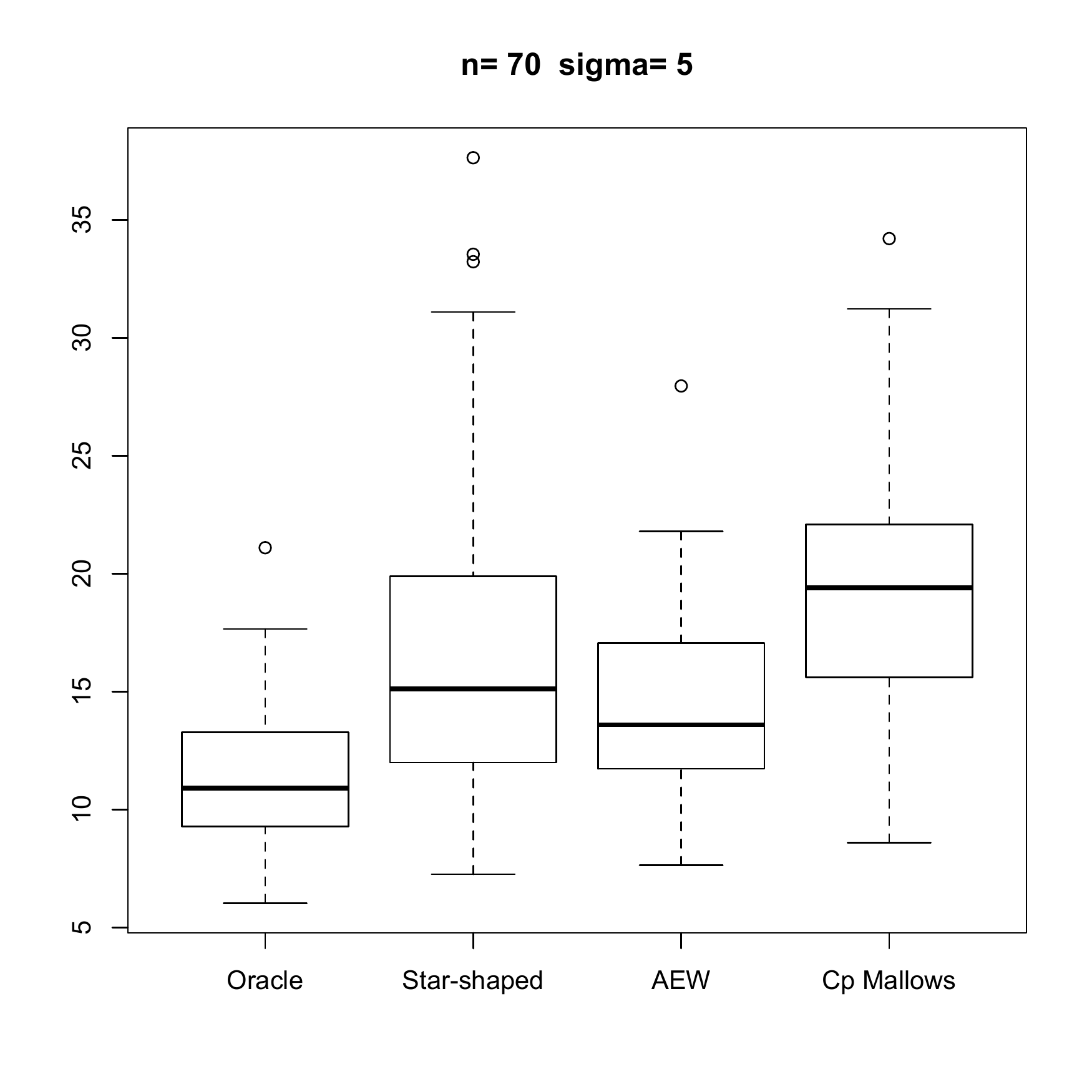}%
  \includegraphics[width=\figwidth,height=\figheight]{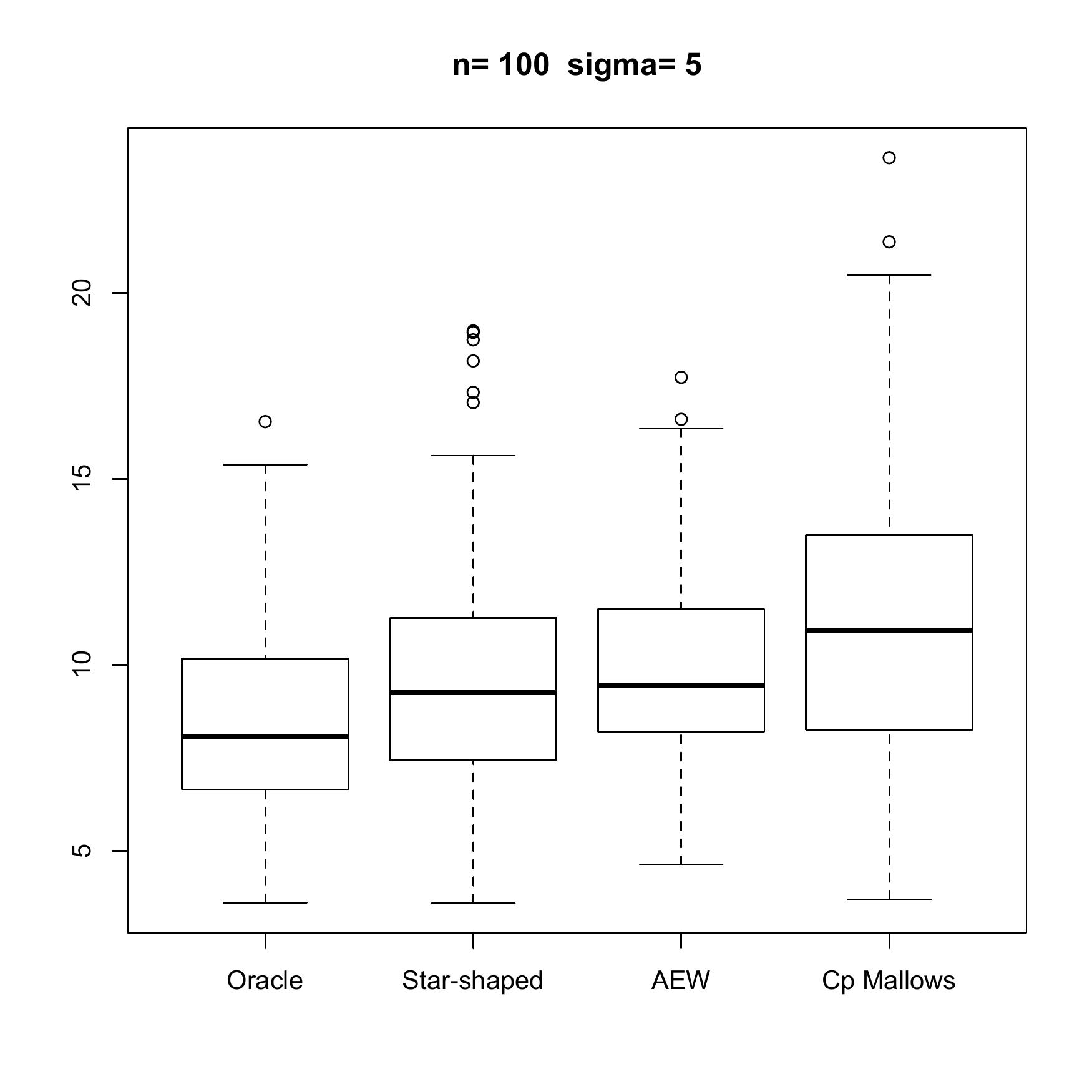}%
  \includegraphics[width=\figwidth,height=\figheight]{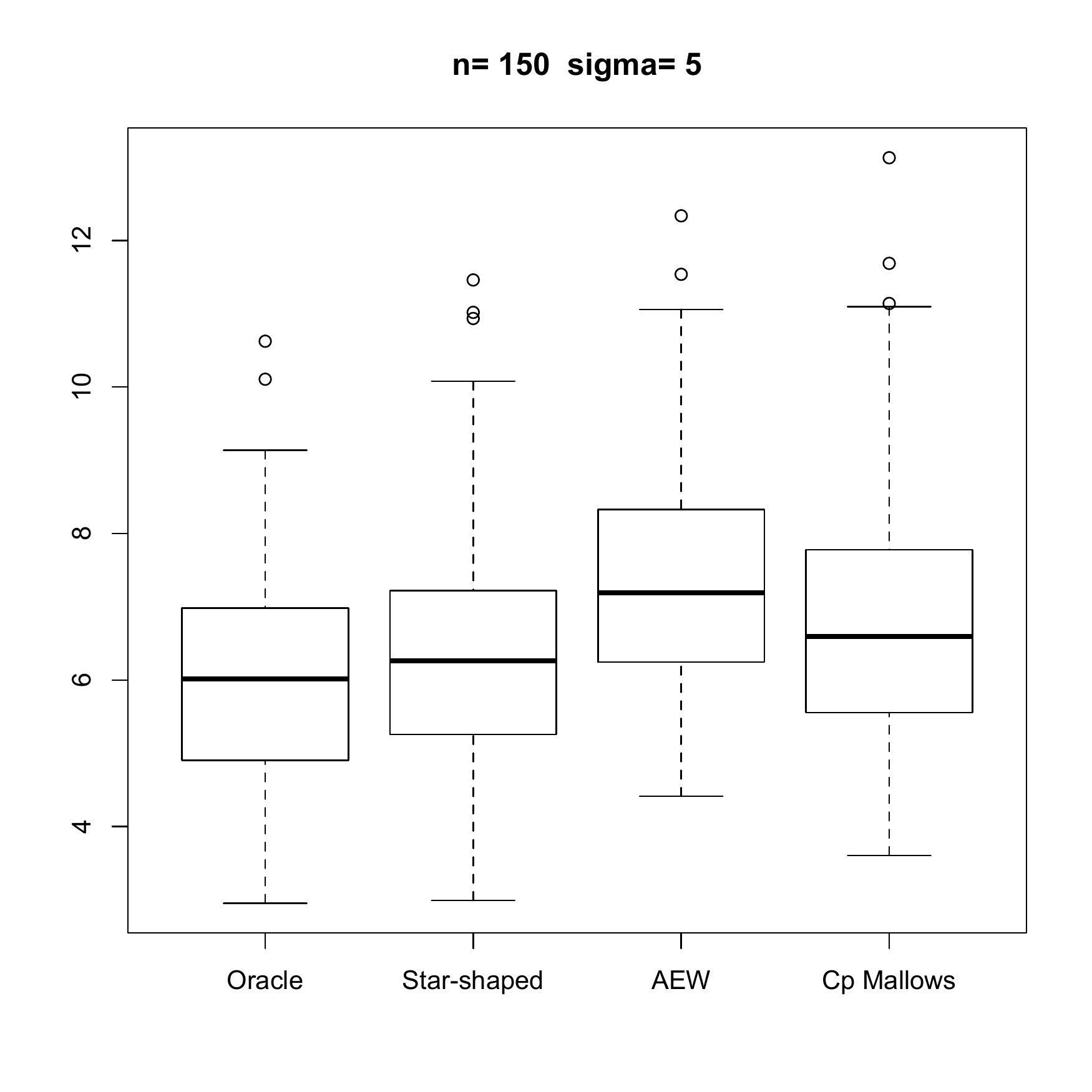}%
  \caption{Errors $|\hat \beta - \beta_0|_2$ (first row) and $|\bs
    X(\hat \beta - \beta_0)|_2$ (second row) for $\sigma=5$ and $n=70,
    100, 150$.}
  \label{fig:sigma5}
\end{figure}


\section{Proofs of the main results}
\label{sec:proof_main_results}

\subsection{Proof of Theorem~\ref{thm:aggregation}}

\begin{proof}[Proof of Theorem~\ref{thm:aggregation}]
  Let us prove the result in the $\psi_2$ case, the other case is
  similar. Fix $x>0$ and let $\FF$ be either \eqref{eq:convex_hull},
  \eqref{eq:segments} or \eqref{eq:starified}. Set $d :=
  \diam(\hat{F}_1, L_2(\mu))$. Consider the second half of the sample
  $D_{n, 2} = (X_i,Y_i)_{i=n+1}^{2n}$. By Corollary
  \ref{cor:high-prob-est} (see
  Appendix~\ref{sec:tools_empirical_process} below), with probability
  at least $1-4\exp(-x)$ (relative to $D_{n, 2}$), we have for every $f
  \in \FF$
  \begin{equation*}
    \Big| \frac{1}{n}\sum_{i=1+n}^{2n} {\cal L}_{\FF}(f)(X_i,Y_i) -
    \E\big({\cal L}_{\FF}(f)(X,Y) | D_{n, 1} \big) \Big| \leq c
    (\sigma_\varepsilon + b) \max(d \phi, b \phi^2 ),
  \end{equation*}
  where ${\cal L}_{\FF}(f)(X,Y) := (f(X) - Y)^2 - (f^{\FF}(X) - Y)^2$
  is the excess loss function relative to $\FF$, $f^{\FF}\in{\rm Arg}\min_{f\in\FF}R(f)$ and where $\phi =
  \sqrt{((\log M + x) \log n) / n}$. By definition of $\tilde{f}$, we
  have $\frac{1}{n} \sum_{i=n+1}^{2n} {\cal
    L}_{\FF}(\tilde{f})(X_i,Y_i) \leq 0$, so, on this event (relative to $D_{n,2}$)
  \begin{align}
    \label{eq:empirical}
    R(\tilde{f}) & \leq R(f^{\FF}) + \E \big({\cal L}_{\FF}
    (\tilde{f}) | D_{n, 1} \big) -\frac{1}{n} \sum_{i=n+1}^{2n} {\cal
      L}_{\FF} (\tilde{f})(X_i,Y_i) \\
    \nonumber &\leq R(f^{\FF}) + c (\sigma_{\varepsilon} + b) \max ( d
    \phi, b \phi^2 ) \\
    \nonumber & = R(f^F) + \Big(c (\sigma_{\varepsilon} + b) \max ( d
    \phi, b \phi^2
    ) - \big( R(f^F) - R(f^{\FF}) \big) \Big) \\
    \nonumber & =: R(f^F) + \beta,
  \end{align}
  and it remains to show that
  \begin{equation*}
    \beta \leq c_{b, \sigma_{\varepsilon}} \frac{(1 + x) \log M \log n}{n}.
  \end{equation*}
  When $\FF$ is given by~\eqref{eq:convex_hull} or
  \eqref{eq:segments}, the geometrical configuration is the same as in
  \cite{LM06}, so we skip the proof.
  Let us turn out to the situation where $\FF$ is given by
  \eqref{eq:starified}. Recall that $\hat f_{n, 1}$ is the ERM on
  $\hat F_1$ using $D_{n, 1}$. Consider $f_1$ such that $\norm{\hat
    f_{n, 1} - f_1}_{L^2(\mu)} = \max_{f \in \hat F_1} \norm{\hat
    f_{n, 1} - f}_{L^2(\mu)}$, and note that $\norm{\hat f_{n, 1} -
    f_1}_{L^2(\mu)} \leq d \leq 2 \norm{\hat f_{n, 1} -
    f_1}_{L^2(\mu)}$.  The mid-point $f_2 := (\hat f_{n, 1} + f_1) /
  2$ belongs to $\Star(\hat f_{n, 1}, \hat F_1)$. Using the
  parallelogram identity, we have for any $u, v \in L_2(\nu)$:
  \begin{equation*}
    \E_\nu \Big(\frac{u + v}{2} \Big)^2 \leq \frac{\E_\nu
      (u^2) + \E_\nu (v^2)}{2} - \frac{ \|u - v\|_{L_2(\nu)}^2}{4},
  \end{equation*}
  where for every $h \in L_2(\nu)$, $\E_\nu(h) = \E h(X,Y)$. In
  particular, for $u(X,Y) = \hat f_{n, 1} - Y$ and $v(X,Y) = f_1(X) -
  Y$, the mid-point is $(u(X,Y) + v(X,Y)) / 2 = f_2(X) - Y$. Hence,
  \begin{align*}
    R(f_2) & = \E (f_2(X) - Y)^2 = \E \Big( \frac{\hat f_{n, 1}(X) +
      f_1(X)}{2} - Y \Big)^2 \\
    & \leq \frac{1}{2} \E (\hat f_{n, 1}(X) - Y)^2 + \frac{1}{2} \E
    (f_1(X) - Y)^2 - \frac 14 \| f_{n, 1} - f_1 \|_{L_2(\mu)}^2 \\
    & \leq \frac{1}{2} R(\hat f_{n, 1}) +\frac{1}{2} R(f_1) -
    \frac{d^2}{16},
  \end{align*}
  where the expectations are taken conditioned on $D_{n, 1}$. By Lemma
  \ref{lemma:prop-random-set} (see
  Appendix~\ref{sec:tools_empirical_process} below), since $\hat f_{n,
    1}, f_1 \in \hat F_1$, we have
  \begin{equation*}
    \frac{1}{2} R(\hat f_{n, 1}) + \frac{1}{2} R(f_1) \leq R(f^F) + c
    (\sigma_{\varepsilon} + b)
    \max ( \phi d, b \phi^2 ),
  \end{equation*}
  and thus, since $f_2 \in \FF$
  \begin{equation*}
    R(f^{\FF}) \leq R(f_2) \leq R(f^F) + c (\sigma_{\varepsilon} + b)
    \max (\phi d, b \phi^2 ) - c d^2.
  \end{equation*}
  Therefore,
  \begin{align*}
    \beta & = c (\sigma_{\varepsilon} + b) \max ( d \phi, b \phi^2 )
    - \big( R(f^F) - R(f^{\FF}) \big) \\
    & \leq c (\sigma_{\varepsilon} + b) \max ( \phi d, b \phi^2 ) - c
    d^2.
  \end{align*}
  Finally, if $d \geq c_{\sigma_{\varepsilon}, b} \phi$ then $\beta
  \leq 0$, otherwise $\beta \leq c_{\sigma_{\varepsilon}, b} \phi^2$.
\end{proof}

\begin{proof}[Proof of Theorem \ref{TheoWeaknessERMRegression}]
 The dictionary $F_M$ is chosen so that we have, for any
  $j \in \{ 1, \ldots ,M-1 \}$
  \begin{equation*}
    \| f_j - f_0 \|_{L^2([0,1])}^2 = \frac{5 h^2}{2} + 1 \;\text{ and }\;
    \|f_M - f_0 \|_{L^2([0,1])}^2 = \frac{5h^2}{2} - h + 1.
  \end{equation*}
  Thus, we have
  \begin{equation*}
    \min_{j=1,\ldots,M} \|f_j - f_0 \|_{L^2([0,1])}^2 = \|f_M - f_0
    \|_{L^2([0,1])}^2 = \frac{5h^2}{2} -h + 1.
  \end{equation*}
  This geometrical setup for $F(\Lambda)$, which is a unfavourable
  setup for the ERM, is represented in Figure~\ref{fig:badsetup}. For
  \begin{equation*}
    \hat{f}_n := \tilde{f}_n^{\rm PERM} \in \argmin_{f \in F_M}
    \big(R_n(f) + \pen(f) \big),
  \end{equation*}
  where we take $R_n(f) = \frac{1}{n} \sum_{i=1}^n (Y_i-f(X_i))^2 =\|
  Y - f \|^2_n$, we have
  \begin{equation}
    \label{InegGaussian}
    \E \|\hat{f}_n - f_0 \|_{L^2([0,1])}^2 =
    \min_{j=1,\ldots,M} \|f_j - f_0 \|_{L^2([0,1])}^2 + h
    \P[\hat{f}_n\neq f_M].
  \end{equation}
  Now, we upper bound $\P[ \hat{f}_n= f_M]$.  We consider the dyadic
  decomposition of the design variable $X$:
    \begin{equation}
    \label{EquaDyadicRegression}
    X = \sum_{k = 1}^{+\infty} X^{(k)} 2^{-k},
  \end{equation}
  where $(X^{(k)} : k \geq 1)$ is a sequence of i.i.d. random
  variables following a Bernoulli $\cB(1/2,1)$ with parameter $1/2$
  (because $X$ is uniformly distributed on $[0,1]$).  If we define
  \begin{equation*}
    N_j := \frac{1}{\sqrt{n}} \sum_{i=1}^n\zeta_i^{(j)}
    \varepsilon_i \text{ and } \zeta_i^{(j)} := 2X_i^{(j)}-1,
  \end{equation*}
  we have by the definition of $h$ and since $\zeta_i^{(j)} \in \{ -1,
  1\}$:
  \begin{align*}
    \frac{\sqrt{n}}{2 \sigma} (\norm{Y - f_M}_n^2 &- \norm{Y -
      f_j}_n^2) \\
    & = N_j - N_M + \frac{h}{2 \sigma \sqrt{n}} \sum_{i=1}^n
    (\zeta_i^{(j)} \zeta_i^{(M)} + 3(\zeta_i^{(j)} - \zeta_i^{(M)}) -
    1) \\
    &\geq N_j - N_M - \frac{4C}{\sigma} \sqrt{\log M}.
  \end{align*}
  This entails, for $\bar N_{M-1} := \max_{1 \leq j \leq N-1} N_j$,
  that
  \begin{align*}
    \P[ \hat{f}_n= f_M] &= P \Big[ \bigcap_{j=1}^{M-1} \Big\{ \norm{Y -
      f_M}_n^2 - \norm{Y - f_j}_n^2 \leq \pen(f_j) - \pen(f_M) \Big\}
    \Big] \\
    &\leq \P\Big[ N_M \geq \bar N_{M-1} - \frac{6C}{\sigma} \sqrt{\log
      M} \Big].
  \end{align*}
  It is easy to check that $N_1, \ldots, N_M$ are $M$ normalized
  standard gaussian random variables uncorrelated (but dependent). We
  denote by $\boldsymbol{\zeta}$ the family of Rademacher variables
  $(\zeta_i^{(j)} : i=1,\ldots,n ; j=1,\ldots,M)$. We have for any
  $6C/\sigma <\gamma< (2\sqrt{2}c^*)^{-1}$ ($c^*$ is the ``Sudakov
  constant'', see Theorem~\ref{TheoSudakov}),
  \begin{align}
    \label{EquaSudakov}
    \P[\hat{f}_n = f_M] &\leq \E \Big[ \P\Big( N_M \geq \bar N_{M-1} -
    \frac{6C}{\sigma}\sqrt{\log M} \Big| \boldsymbol{\zeta} \Big)
    \Big] \nonumber \\
    &\leq \P \big[ N_M \geq - \gamma \sqrt{\log M}
    +  \E(\bar N_{M-1} | \boldsymbol{\zeta} ) \big] \\
    &+ \E \Big[ \P\Big\{ \E( \bar N_{M-1} | \boldsymbol{\zeta} ) -
    \bar N_{M-1} \geq (\gamma - \frac{6C}{\sigma}) \sqrt{\log M} \Big|
    \boldsymbol{\zeta} \Big\} \Big]. \nonumber
  \end{align}
  Conditionally to $\boldsymbol{\zeta}$, the vector
  $(N_1,\ldots,N_{M-1})$ is a linear transform of the Gaussian vector
  $(\varepsilon_1, \ldots, \varepsilon_n)$. Hence, conditionally to
  $\boldsymbol{\zeta}$, $(N_1,\ldots,N_{M-1})$ is a gaussian
  vector. Thus, we can use a standard deviation result for the
  supremum of Gaussian random vectors (see for
  instance~\cite{massart03}, Chapter~3.2.4), which leads to the
  following inequality for the second term of the RHS
  in~\eqref{EquaSudakov}:
  \begin{align*}
    \P \Big\{ \E( \bar N_{M-1} | \boldsymbol{\zeta} ) - \bar N_{M-1}
    \geq (\gamma &- \frac{6C}{\sigma}) \sqrt{\log M} \Big|
    \boldsymbol{\zeta}
    \Big\} \\
    &\leq \exp(-(3C/\sigma-\gamma/2)^2\log M).
  \end{align*}
  Remark that we used $\E[ N_j^2 | \boldsymbol{\zeta}] = 1$ for any $j
  = 1, \ldots, M-1$. For the first term in the RHS
  of~\eqref{EquaSudakov}, we have
  \begin{align}
    \label{EquaIerTermSudakov}
    \P &\Big [N_M \geq - \gamma \sqrt{\log M}
    + \E( \bar N_{M-1} | \boldsymbol{\zeta} ) \Big] \nonumber\\
    &\leq \P \Big[N_M \geq - 2 \gamma \sqrt{\log M}
    + \E(\bar N_{M-1}) \Big] \\
    &+\P \Big[ - \gamma\sqrt{\log M} + \E(\bar N_{M-1}) \geq \E(\bar
    N_{M-1} | \boldsymbol{\zeta}) \Big]. \nonumber
  \end{align}
  Next, we use Sudakov's Theorem (cf. Theorem \ref{TheoSudakov} in
  Appendix~\ref{sec:appendix_proba}) to lower bound $\E( \bar
  N_{M-1})$. Since $(N_1,\ldots,N_{M-1})$ is, conditionally to
  $\boldsymbol{\zeta}$, a Gaussian vector and since for any $1 \leq j
  \neq k \leq M$ we have
  \begin{equation*}
    \E[(N_k-N_j)^2 | \boldsymbol{\zeta}] = \frac{1}{n}
    \sum_{i=1}^n (\zeta_i^{(k)} - \zeta_i^{(j)})^2
  \end{equation*}
  then, according to Sudakov's minoration
  (cf. Theorem~\ref{TheoSudakov} in the Appendix), there exits an
  absolute constant $c^* > 0$ such that
  \begin{equation*}
    c^* \E[\bar N_{M-1} | \boldsymbol{\zeta}] \geq
    \min_{1 \leq j \neq k \leq M-1} \Big(\frac{1}{n}\sum_{i=1}^n
    (\zeta_i^{(k)} - \zeta_i^{(j)})^2\Big)^{1/2} \sqrt{\log M}.
  \end{equation*}
  Thus, we have
  \begin{align*}
    \label{EquaSudak3}
    c^* \E[\bar N_{M-1}] &\geq \E\Big[ \min_{j \neq k}
    \Big(\frac{1}{n} \sum_{i=1}^n (\zeta_i^{(k)} - \zeta_i^{(j)})^2
    \Big)^{1/2} \Big] \sqrt{\log M} \\
    &\geq \sqrt{2} \Big(1 - \E \Big[ \max_{j\neq k} \frac{1}{n}
    \sum_{i=1}^n \zeta_i^{(k)} \zeta_i^{(j)} \Big] \Big) \sqrt{\log
      M},
  \end{align*}
  where we used the fact that $\sqrt{x} \geq x/\sqrt{2}, \forall x \in
  [0,2]$.
  Besides, using Hoeffding's inequality we have $\E[\exp(s
  \xi^{(j,k)})] \leq \exp(s^2/(2n))$ for any $s > 0$, where
  $\xi^{(j,k)} := n^{-1} \sum_{i=1}^n \zeta_i^{(k)} \zeta_i^{(j)}$.
  Then, using a maximal inequality (cf.  Theorem~\ref{TheoMaxConcIneq}
  in Appendix~\ref{sec:appendix_proba}) and since $n^{-1}
  \log[(M-1)(M-2)] \leq 1/4$, we have
  \begin{equation}
    \label{EquaSudakFinal}
    \E\Big[\max_{j\neq k} \frac{1}{n} \sum_{i=1}^n
    \zeta_i^{(k)} \zeta_i^{(j)} \Big] \leq
    \Big(\frac{1}{n} \log[(M-1)(M-2)] \Big)^{1/2} \leq
    \frac{1}{2}.
  \end{equation}
  This entails
  \begin{equation*}
    c^* E[ \bar N_{M-1} ] \geq \Big(\frac{\log M}{2} \Big)^{1/2}.
  \end{equation*}
  Thus, using this inequality in the first RHS
  of~\eqref{EquaIerTermSudakov} and the usual inequality on the tail
  of a Gaussian random variable ($N_M$ is standard Gaussian), we
  obtain:
  \begin{align}
    \label{EquaFirstTerm}
    \P\Big[N_M \geq &-2\gamma \sqrt{\log M} + \E(\bar N_{M-1}) \Big]
    \leq \P\Big[ N_M \geq ((c^*\sqrt{2})^{-1}-2\gamma)
    \sqrt{\log M}\Big]\nonumber\\
    &\leq \P\Big[N_M \geq ((c^*\sqrt{2})^{-1}-2\gamma) \sqrt{\log
      M}\Big]\\
    &\leq \exp\Big(-((c^*\sqrt{2})^{-1}-2\gamma)^2(\log
    M)/2\Big).\nonumber
  \end{align}
  Remark that we used $2\sqrt{2}c^* \gamma < 1$. For the second term
  in (\ref{EquaIerTermSudakov}), we apply the concentration inequality
  of Theorem \ref{TheoEinmahlMasson} to the non-negative random
  variable $\E[\bar N_{M-1}|\boldsymbol{\zeta}]$. We first have to
  control the second moment of this variable. We know that,
  conditionally to $\boldsymbol{\zeta}$,
  $N_j|\boldsymbol{\zeta}\sim\cN(0,1)$ thus,
  $N_j|\boldsymbol{\zeta}\in L_{\psi_2}$ (for more details on Orlicz
  norm, we refer the reader to~\cite{vdVW:96}). Thus,
  \begin{equation*}
    \norm{\max_{1\leq j\leq M-1} N_j | \boldsymbol{\zeta}}_{\psi_2}\leq
    K \psi_2^{-1}(M)\max_{1\leq j\leq M-1}\norm{N_j|\boldsymbol{\zeta}}_{\psi_2}
  \end{equation*}
  (cf. Lemma 2.2.2 in \cite{vdVW:96}). Since
  $\norm{N_j|\boldsymbol{\zeta}}_{\psi_2}^2=1$, we have
  $\norm{\max_{1\leq j\leq M-1} N_j|\boldsymbol{\zeta}}_{\psi_2}\leq K
  \sqrt{\log M}$. In particular, we have $\E\big[\max_{1\leq j\leq
    M-1} N_j^2|\boldsymbol{\zeta} \big]\leq K\log M$ and so
  $\E\big(\E[\bar N_{M-1}|\boldsymbol{\zeta}]\big)^2\leq K\log
  M$. Then, Theorem \ref{TheoEinmahlMasson} provides
  \begin{equation}
    \label{SecondTermEquaSuda}
    \P\Big[ -\gamma \sqrt{\log M} + \E[\bar N_{M-1}] \geq \E[\bar
    N_{M-1}|\boldsymbol{\zeta}]\Big]\leq
    \exp(-\gamma^2/c_0),
  \end{equation}
  where $c_0$ is an absolute constant.

  Finally, combining (\ref{EquaSudakov}), (\ref{EquaFirstTerm}),
  (\ref{EquaIerTermSudakov}), (\ref{SecondTermEquaSuda}) in the
  initial inequality (\ref{EquaSudakov}), we obtain
  \begin{align*}
    \P[\hat{f}_n= f_M] & \leq \exp(-(3C/\sigma-\gamma)^2\log M) \\
    &+ \exp\Big(-((c^*\sqrt{2})^{-1}-2\gamma)^2(\log M)/2\Big)+
    \exp(-\gamma^2/c_0).
  \end{align*}
  Take $\gamma = (12 \sqrt{2}c^*)^{-1}$. It is easy to find an integer
  $M_0(\sigma)$ depending only on $\sigma$ such that for any $M \geq
  M_0$, we have $\P[\hat{f}_n= f_M]\leq c_1<1$, where $c_1$ is an
  absolute constant.  We complete the proof by using this last result
  in (\ref{InegGaussian}).
\end{proof}

\begin{proof}[Proof of Theorem~\ref{thm:adaptive}]
  Recall that we use the sample $D_{n, 1}$ to compute the family $F =
  \{ -b \vee \bar f_{\bs s} \wedge b : \bs s \in \bs S_n \}$ of PERM,
  which has cardinality $c (\log n)^d$, and the sample $D_{n, 2}$ to
  compute the weights of the aggregate $\tilde f$, see
  Definition~\ref{def:adaptive-est}. Recall also that there is $\bs
  s_0 = (s_{0, 1}, \ldots, s_{0, d})\in \bs S$ such that $f_0 \in
  B_{p, \infty}^{{\bs s}_0}$, and denote $r_0 = |f_0|_{B_{p,
      \infty}^{\bs s_0}}$. Take $\bs s_* = (s_{*, 1}, \ldots, s_{*,
    d}) \in \bs S_n$ such that $s_{*, j} \leq s_{0, j} \leq s_{*, j} +
  (\log n)^{-1}$ for all $j = 1, \ldots, d$. Remark that for this
  choice, one has $B_{p, \infty}^{\bs s_0} \subset B_{p, \infty}^{\bs
    s_*}$ and $n^{-2 \bar {\bs s}_* / (2 \bar {\bs s}_* + d)} \leq
  e^{d/2} n^{-2 \bar {\bs s}_0 / (2 \bar {\bs s}_0 + d)}$. By a
  substraction of $P_{\ell_{f_0}}$ on both sides of the oracle
  inequality stated in Theorem~\ref{thm:aggregation}, and since
  $\norm{f_0}_\infty \leq b$, we can find an event $A_{x, n, 2}$
  satisfying $\nu^{2n}(A_{x, n, 2}) \geq 1 - 2 e^{-x}$ and on which:
  \begin{equation*}
    \norm{\tilde f - f_0}_{L^2(\mu)}^2 \leq \E( \ell_{\bar f_{\bs
        s_*}} - \ell_{f_0} | D_{n, 1}) + c\frac{(1 + x) \log \log n}{n}.
  \end{equation*}
  Now, using Theorem~\ref{thm:PERM}, we can find an event $A_{x, n,
    1}$ satisfying $\nu^{2n}(A_{x, n, 1}) \geq 1 - 2 e^{-x}$ on which:
  \begin{align*}
    \E( \ell_{\bar f_{\bs s_*}} - \ell_{f_0} | D_{n, 1}) &\leq \inf_{f
      : |f|_{B_{p, \infty}^{\bs s_*}} \leq r_0 } \E( \ell_f -
    \ell_{f_0}) + c_1 r_0^2 n^{-\frac{-2 \bs s_*}{2 \bs s_* + d}} \\
    &+ \frac{c_2 (1 + r_0^2)}{n} \big(x + \log(\frac{\pi^2}{6}) +
    \log( 1 + c_3 n + \log r_0 ) \big) \Big\} \\
    &\leq c_1' r_0^2 n^{-\frac{-2 \bs s_0}{2 \bs s_0 + d}} + \frac{c_2
      (1 + r_0^2)}{n} \big(x + \log(\frac{\pi^2}{6}) + \log( 1 + c_3 n
    + \log r_0 ) \big) \Big\},
  \end{align*}
  where we used the fact that $|f_0|_{B_{p, \infty}^{\bs s_*}} \leq
  |f_0|_{B_{p, \infty}^{\bs s_0}} \leq r_0$. This concludes the proof
  of Theorem~\ref{thm:adaptive}, since $\nu^{2n}(A_{x, n, 1} \cap
  A_{x, n, 2}) \geq 1 - 4 e^{-x}$.
\end{proof}

\appendix

\section{Tools from empirical process theory}
\label{sec:tools_empirical_process}

\subsection{Useful results from literature}

The following Theorem is a Talagrand's type concentration inequality
(see \cite{MR1419006}) for a class of unbounded functions.

\begin{theorem}[Theorem~4, \cite{MR2424985}]
  \label{thm:adamczak}
  Assume that $X,X_1, \ldots, X_n$ are independent random variables
  and $F$ is a countable set of functions such that $\E f(X) =
  0,\forall f\in F$ and, for some $\alpha \in (0, 1]$, $\norm{\sup_{f
      \in F} f(X) }_{\psi_\alpha} < +\infty$. Define
  \begin{equation*}
    Z := \sup_{f \in F} \Big| \frac 1n \sum_{i=1}^n f(X_i) \Big|
  \end{equation*} and 
  \begin{equation*}
    \sigma^2 = \sup_{f \in F} \E f(X)^2 \mbox{ and } b :=
    \frac{\norm{\max_{i=1, \ldots, n} \sup_{f \in F} | f(X_i) |
      }_{\psi_\alpha}}{n^{1 - 1 / \alpha}}.
  \end{equation*}
  Then, for any $\eta \in (0, 1)$ and $\delta > 0$, there is $c =
  c_{\alpha, \eta, \delta}$ such that for any $x > 0$:
  \begin{align*}
    \P\Big[ Z \geq (1 + \eta) \E Z + \sigma \sqrt{2 (1 + \delta) \frac
      xn} + c b \Big(\frac x n\Big)^{1 / \alpha} \Big] &\leq 4 e^{-x} \\
    \P\Big[ Z \leq (1 - \eta) \E Z - \sigma \sqrt{2 (1 + \delta) \frac
      xn} - c b \Big(\frac x n\Big)^{1 / \alpha} \Big] &\leq 4 e^{-x}.
  \end{align*}
\end{theorem}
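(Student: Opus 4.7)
The plan is a truncation argument that reduces the unbounded setting to the classical Bousquet--Talagrand concentration inequality for uniformly bounded classes, the control of the tails coming from the $\psi_\alpha$-envelope hypothesis. Fix a truncation threshold $M=M(x,n)$ to be chosen later, and decompose each $f\in F$ as $f=f^M+f_M$ with $f^M:=f\,\mathbf 1_{|f|\le M}$. Letting
\begin{equation*}
Z^M:=\sup_{f\in F}\Bigl|\tfrac1n\sum_{i=1}^n\bigl(f^M(X_i)-\E f^M(X)\bigr)\Bigr|,
\end{equation*}
one has $Z\le Z^M+R$, where $R$ collects the contributions $\tfrac1n\sum_i \sup_f|f_M(X_i)|$ and $\sup_f \E|f_M(X)|$.

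For the truncated class $\{f^M:f\in F\}$, which is uniformly bounded by $M$ and whose variance is still at most $\sigma^2$ (truncation only decreases second moments), I would apply Bousquet's sharp form of Talagrand's inequality. This yields, for every $\eta,\delta>0$ and $x>0$,
\begin{equation*}
\P\Bigl[\,Z^M \ge (1+\eta)\,\E Z^M + \sigma\sqrt{2(1+\delta)\,x/n} + c_{\eta,\delta}\,M\,x/n\,\Bigr] \le e^{-x},
\end{equation*}
together with the analogous lower-tail inequality. Note that the sharp constants $(1+\eta)$ and $(1+\delta)$ in the theorem force the use of Bousquet's version rather than the coarser Massart/Ledoux forms.

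To control $R$, set $\widetilde b:=\|\max_{i\le n}\sup_{f\in F}|f(X_i)|\|_{\psi_\alpha}$ so that the event $A_M:=\{\max_i\sup_f|f(X_i)|>M\}$ has probability at most $2\exp(-(M/\widetilde b)^\alpha)$; choosing $M\asymp \widetilde b\,x^{1/\alpha}$ makes $\P[A_M]\le 2e^{-x}$, and on $A_M^c$ the empirical part of $R$ vanishes pointwise. The deterministic centering $\sup_f \E|f\mathbf 1_{|f|>M}(X)|$ is then estimated by integrating the $\psi_\alpha$-tail $\P[\sup_f|f(X)|>t]\le 2\exp(-(t/\widetilde b)^\alpha)$ from $M$ upward, giving a contribution of the same order. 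The expectation $\E Z^M$ is finally replaced by $\E Z$ at a cost $|\E Z-\E Z^M|\le \E R$, which is absorbed into the $\eta\,\E Z$ slack; the lower-tail bound is obtained symmetrically.

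The main obstacle is the delicate balance between the truncation level and the residual. Naively plugging $M\asymp\widetilde b\,x^{1/\alpha}$ into Bousquet's penalty $Mx/n$ produces a term of order $\widetilde b\,x^{1+1/\alpha}/n$, which is too large by a factor $x^{1/\alpha}$ compared to the claimed $c\,b\,(x/n)^{1/\alpha}=c\,\widetilde b\,x^{1/\alpha}/n$ (using the normalization $b=\widetilde b/n^{1-1/\alpha}$). To recover the sharper rate one must supplement the single-level truncation by a Hoffmann--Jorgensen--type moment inequality, applied to the tail increments $f_M(X_i)-\E f_M(X)$: this produces a higher-order Rosenthal-type bound on $\E R^p$ for well-chosen $p$, which is then inverted into a tail bound of the correct scale. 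This Hoffmann--Jorgensen step is the technical core of Adamczak's argument and replaces the naive Bousquet penalty by the refined $b(x/n)^{1/\alpha}$ term appearing in the statement.
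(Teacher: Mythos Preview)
The paper does not prove this statement: it is quoted verbatim as Theorem~4 of Adamczak (2008) and used as a black box in the proof of Corollary~\ref{cor:high-prob-est}. There is therefore no ``paper's own proof'' to compare your proposal against.

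That said, your sketch is a faithful outline of Adamczak's actual argument. The architecture---truncate at a data-dependent level, apply Bousquet's sharp Talagrand inequality to the bounded part to retain the constants $(1+\eta)$ and $\sqrt{2(1+\delta)}$, and handle the residual via the $\psi_\alpha$ tails of the envelope---is exactly what Adamczak does. You also correctly identify the crux: a naive single-level truncation at $M\asymp\widetilde b\,x^{1/\alpha}$ would leave a Bousquet penalty of order $\widetilde b\,x^{1+1/\alpha}/n$, which is too large. The fix, as you say, is a Hoffmann--J{\o}rgensen/Rosenthal-type moment bound on the unbounded part; in Adamczak's paper this is handled by his Lemma~2 (a moment inequality for suprema under a $\psi_\alpha$ envelope, itself proved via Talagrand's Hoffmann--J{\o}rgensen-type inequality), which delivers the correct $b\,(x/n)^{1/\alpha}$ scaling. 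Your proposal is accurate as a high-level roadmap; the only missing ingredient is the explicit statement and proof of that moment lemma, which is where the real work lies.
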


\subsection{Some probabilistic tools}
\label{sec:appendix_proba}

For the first Theorem, we refer to \cite{EM:96}. The two following
Theorems can be found, for instance, in
\cite{massart03,vdVW:96,ledoux_talagrand91}.

\begin{theorem}[Einmahl and Masson (1996)]
  \label{TheoEinmahlMasson}
  Let $Z_1,\ldots,Z_n$ be $n$ independent non-negative random
  variables such that $\E[Z_i^2]\leq \sigma^2,\forall i=1, \ldots, n$.
  Then, we have, for any $\delta > 0$,
  \begin{equation*}
    \P \Big[\sum_{i=1}^n Z_i - \E[Z_i] \leq -n \delta \Big]
    \leq \exp\Big(-\frac{n \delta^2}{2\sigma^2} \Big).
  \end{equation*}
\end{theorem}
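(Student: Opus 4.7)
The plan is to use the classical Cram\'er--Chernoff method applied to the left tail of $S_n := \sum_{i=1}^n Z_i$, exploiting non-negativity of $Z_i$ to get a clean sub-Gaussian-type upper bound on the Laplace transform of $-Z_i$.

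First, for any $\lambda > 0$, Markov's inequality gives
\begin{equation*}
\P\Big[ S_n - \E S_n \leq -n\delta \Big] = \P\Big[ e^{-\lambda(S_n - \E S_n)} \geq e^{\lambda n \delta} \Big] \leq e^{-\lambda n \delta}\, \E\big[ e^{-\lambda(S_n - \E S_n)} \big].
\end{equation*}
By independence this factorizes, so it suffices to control $\E[e^{-\lambda Z_i}]$ for a single non-negative variable with $\E Z_i^2 \leq \sigma^2$.

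The core estimate is the elementary inequality $e^{-x} \leq 1 - x + x^2/2$ valid for all $x \geq 0$, which one checks by noting that the function $g(x) = 1 - x + x^2/2 - e^{-x}$ satisfies $g(0) = g'(0) = 0$ and $g''(x) = 1 - e^{-x} \geq 0$ on $[0, +\infty)$. Applying this with $x = \lambda Z_i \geq 0$ (this is exactly where non-negativity enters, together with $\lambda > 0$) and taking expectations,
\begin{equation*}
\E[e^{-\lambda Z_i}] \leq 1 - \lambda \E Z_i + \tfrac{\lambda^2}{2} \E Z_i^2 \leq 1 - \lambda \E Z_i + \tfrac{\lambda^2 \sigma^2}{2} \leq \exp\!\Big( -\lambda \E Z_i + \tfrac{\lambda^2 \sigma^2}{2} \Big),
\end{equation*}
using $1 + y \leq e^y$ in the last step. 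Multiplying back the centering factor $e^{\lambda \E Z_i}$ and taking the product over $i = 1, \ldots, n$ yields $\E[e^{-\lambda(S_n - \E S_n)}] \leq \exp(n \lambda^2 \sigma^2 / 2)$.

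Combining with the Markov step gives $\P[S_n - \E S_n \leq -n\delta] \leq \exp(n \lambda^2 \sigma^2/2 - \lambda n \delta)$, and optimizing in $\lambda > 0$ with the choice $\lambda = \delta/\sigma^2$ yields the announced bound $\exp(-n\delta^2/(2\sigma^2))$. There is no real obstacle here; the only point to watch is that the quadratic upper bound on $e^{-x}$ is used in the regime $x \geq 0$, which is guaranteed by the non-negativity assumption on the $Z_i$ (without it, one would only have a one-sided inequality in the other direction and the argument would fail, consistent with the fact that only the left tail is controlled).
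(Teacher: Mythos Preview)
Your proof is correct: the Cram\'er--Chernoff argument combined with the elementary inequality $e^{-x}\leq 1-x+x^2/2$ on $[0,\infty)$ (guaranteed by the non-negativity of the $Z_i$) gives exactly the stated bound after optimizing in $\lambda$. Note that the paper itself does not prove this statement; it is quoted as a tool from the literature with a reference to Einmahl and Mason (1996), so there is no proof in the paper to compare against.
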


\begin{theorem}[Sudakov]
  \label{TheoSudakov}
  There exists an absolute constant $c^*>0$ such that for any integer
  $M$, any centered gaussian vector $X = (X_1,\ldots,X_M)$ in
  $\mathbb{R}^M$, we have,
  \begin{equation*}
    c^* \E[\max_{1\leq j\leq M}X_j] \geq \varepsilon \sqrt{\log M},
  \end{equation*}
  where $\varepsilon := \min \Big\{ \sqrt{\E[(X_i-X_j)^2]} : i \neq j
  \in \{1, \ldots, M\} \Big\}$.
\end{theorem}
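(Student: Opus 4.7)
The plan is to derive Sudakov's minoration from the Sudakov--Fernique comparison theorem combined with a direct lower bound on the expected maximum of $M$ i.i.d.\ standard Gaussians. Recall the Sudakov--Fernique principle: if $(U_i)_{i \le M}$ and $(V_i)_{i \le M}$ are centered Gaussian vectors with $\E[(U_i - U_j)^2] \geq \E[(V_i - V_j)^2]$ for all $i,j$, then $\E \max_i U_i \geq \E \max_i V_i$. This comparison inequality will be invoked as a black box.

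First, I would introduce an auxiliary centered Gaussian vector $Y = (Y_1, \ldots, Y_M)$ defined by $Y_i = (\varepsilon/\sqrt{2}) g_i$, where $g_1, \ldots, g_M$ are i.i.d.\ standard Gaussians. A short computation yields $\E[(Y_i - Y_j)^2] = \varepsilon^2$ for every pair $i \neq j$ and $0$ when $i = j$. By the very definition of $\varepsilon$, we have $\E[(X_i - X_j)^2] \geq \varepsilon^2 = \E[(Y_i - Y_j)^2]$ whenever $i \neq j$, while both sides vanish when $i = j$. Applying Sudakov--Fernique to $X$ and $Y$ then yields $\E \max_{i \leq M} X_i \geq \E \max_{i \leq M} Y_i = (\varepsilon / \sqrt{2}) \, \E \max_{i \leq M} g_i$.

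Next, I would establish the uniform lower bound $\E \max_{i \leq M} g_i \geq c_1 \sqrt{\log M}$ for some absolute constant $c_1 > 0$ and every $M \geq 2$. Fixing a level $t_M = \kappa \sqrt{\log M}$ with a small absolute constant $\kappa$ (for instance $\kappa = 1/2$), the Mills ratio lower bound $1 - \Phi(t) \geq c_0 e^{-t^2/2}/(1+t)$ shows that $M(1 - \Phi(t_M)) \to \infty$, so that $\P(\max_i g_i > t_M) \to 1$. Combining this with the trivial bound $\E(\max_i g_i)_- \leq \E |g_1|$ on the negative part, one obtains $\E \max_i g_i \geq c_1 \sqrt{\log M}$ for large $M$, while the finitely many small values of $M$ (with $M \geq 2$) are absorbed by adjusting the constant since $\E \max_i g_i > 0$ in those cases. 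Setting $c^* := \sqrt{2}/c_1$ and combining with the previous step delivers the stated inequality.

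The main obstacle I anticipate is producing a clean absolute constant in the Gaussian lower bound that is valid uniformly down to $M = 2$: the asymptotic equivalent $\E \max_i g_i \sim \sqrt{2 \log M}$ is textbook, but a single explicit constant requires careful control of $\P(\max_i g_i > t_M)$ via the Mills ratio and a handling of the negative part of the maximum that does not blow up for small $M$. Once this nonasymptotic lower bound is in hand, the reduction via Sudakov--Fernique is immediate and mechanical.
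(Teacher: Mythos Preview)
Your argument is correct and follows one of the standard routes to Sudakov's minoration (reduce via Sudakov--Fernique to i.i.d.\ Gaussians, then lower-bound their expected maximum). Note, however, that the paper does \emph{not} give its own proof of this theorem: it is stated in the appendix as a known result with references to Massart, van~der~Vaart--Wellner, and Ledoux--Talagrand, so there is no ``paper's approach'' to compare against. Your proof is essentially the textbook one found in those references.
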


\begin{theorem}[Maximal inequality]
  \label{TheoMaxConcIneq}
  Let $Y_1, \ldots, Y_M$ be $M$ random variables satisfying
  $\E[\exp(sY_j)] \leq \exp((s^2\sigma^2)/2)$ for any integer $j$ and
  any $s>0$. Then, we have
  \begin{equation*}
    \E[ \max_{1 \leq j \leq M} Y_j] \leq \sigma \sqrt{\log M}.
  \end{equation*}
\end{theorem}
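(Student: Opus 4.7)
The plan is to use the classical exponential moment (``Chernoff trick'') argument. For any $s > 0$, I would start from Jensen's inequality applied to the convex function $x \mapsto \exp(sx)$, which gives
\begin{equation*}
  \exp\bigl(s\, \E[\max_{1 \leq j \leq M} Y_j]\bigr) \leq \E\bigl[\exp(s \max_{1 \leq j \leq M} Y_j)\bigr] = \E\bigl[\max_{1 \leq j \leq M} \exp(s Y_j)\bigr].
\end{equation*}
Then I would bound the maximum of nonnegative quantities by their sum and use the hypothesis on the moment generating function of each $Y_j$:
\begin{equation*}
  \E\bigl[\max_{1 \leq j \leq M} \exp(s Y_j)\bigr] \leq \sum_{j=1}^{M} \E[\exp(s Y_j)] \leq M \exp(s^2 \sigma^2 / 2).
\end{equation*}

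Taking logarithms and dividing by $s$, I would obtain for every $s > 0$:
\begin{equation*}
  \E[\max_{1 \leq j \leq M} Y_j] \leq \frac{\log M}{s} + \frac{s \sigma^2}{2}.
\end{equation*}
Optimizing the right-hand side over $s > 0$ (the minimum is reached at $s_* = \sqrt{2 \log M}/\sigma$) yields $\E[\max_{1 \leq j \leq M} Y_j] \leq \sigma \sqrt{2 \log M}$, which gives the claimed bound (up to the harmless numerical constant $\sqrt{2}$ that is customarily absorbed into the formulation of such maximal inequalities).

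There is essentially no obstacle: the three ingredients (Jensen's inequality, maximum-bounded-by-sum, optimization in $s$) are each one line, and the hypothesis is exactly a sub-Gaussian bound on each $Y_j$, which is all that is needed. Notably, no independence assumption on the $Y_j$ is required, because the union-bound step is performed inside the expectation via linearity rather than via a tail probability union. The proof is therefore completely standard and self-contained from the assumption.
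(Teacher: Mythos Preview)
Your argument is the standard Chernoff--Jensen proof and is correct; the paper itself does not prove this statement but merely cites \cite{massart03,vdVW:96,ledoux_talagrand91}, and the proof found in those references is exactly the one you give. One small point worth being explicit about: your optimization yields $\sigma\sqrt{2\log M}$, not the $\sigma\sqrt{\log M}$ written in the statement, and the extra $\sqrt 2$ is \emph{not} removable by this argument (nor by any other, since the bound $\sigma\sqrt{2\log M}$ is asymptotically sharp for i.i.d.\ standard Gaussians). The paper's stated constant is therefore slightly too optimistic; however, in the only place the theorem is used (to obtain \eqref{EquaSudakFinal}), the factor $\sqrt 2$ is harmless --- one simply tightens the standing assumption $n^{-1}\log[(M-1)(M-2)]\leq 1/4$ to $\leq 1/8$, and the rest of the proof of Theorem~\ref{TheoWeaknessERMRegression} goes through unchanged.
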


\subsection{Some technical lemmas}
\label{sec:lemmas}

In this section we state some technical Lemmas, used in the proof of
Theorem~\ref{thm:aggregation}.

\subsubsection*{Notations}

Given a sample $(Z_i)_{i=1}^n$, we set the random empirical measure
$P_n := n^{-1} \sum_{i=1}^n \delta_{Z_i}$. For any function $f$ define
$(P - P_n)(f) := n^{-1} \sum_{i=1}^n f(Z_i) - \E f(Z)$ and for a class
of functions $F$, define $\norm{P - P_n}_F := \sup_{f \in F} |(P -
P_n)(f)|$. In all what follows, we denote by $c$ an absolute positive
constant, that can vary from place to place. Its dependence on the
parameters of the setting is specified in place.

\begin{proof}[Proof of Lemma~\ref{lem:complexity}] 
  We first start with lemma 4.6 of \cite{Mendelson08regularizationin}
  to obtain
  \begin{equation}
    \label{eq:Sum-Peeling-Alpha}
    \E\norm{P-P_n}_{V_{r,\lambda}} \leq
    2 \sum_{i=0}^\infty 2^{-i} \E \norm{P-P_n}_{\cL_{r,2^{i+1}\lambda}}
  \end{equation}
  where $\cL_{r,2^{i+1}\lambda}:=\{\cL_{r,f}:f\in\cF_r,\E\cL_{r,f}\leq
  2^{i+1}\lambda\}$. Let $i\in\mathbb{N}$. Using the Gin\'{e}-Zinn
  symmetrization argument, see \cite{gz:84}, we have
  \begin{equation*}
    \E\norm{P - P_n}_{\cL_{r,2^{i+1}\lambda}} \leq
    \frac{2}{n} \E_{(X, Y)} \E_\epsilon \Big[ \sup_{\cL \in
      \cL_{r,2^{i+1}\lambda}} \Big| \sum_{i=1}^n \epsilon_i \cL(X_i,
    Y_i) \Big| \Big],
  \end{equation*}
  where $(\epsilon_i)$ is a sequence of i.i.d. Rademacher variables.
  Recall that there is (see \cite{ledoux_talagrand91}) an absolute
  constant $c_g$ such that for any $T \subset \mathbb R^n$, we have
  \begin{equation}
    \label{eq:rademacher_gaussian}
    \E_\epsilon \Big[ \sup_{t \in T} \Big| \sum_{i=1}^n \epsilon_i t_i
    \Big| \Big] \leq c_g \E_g \Big[ \sup_{t \in T} \Big| \sum_{i=1}^n
    g_i t_i \Big| \Big],
  \end{equation}
  where $(g_i)$ are i.i.d. standard normal. So, we have
  \begin{equation*}
    \E \norm{P - P_n}_{\cL_{r,2^{i+1}\lambda}} \leq \frac{2 c_g}{n} \E_{(X,
      Y)} \E_g \Big[ \sup_{\cL \in \cL_{r,2^{i+1}\lambda}} \Big| \sum_{i=1}^n
    g_i \cL(X_i, Y_i)  \Big| \Big].
  \end{equation*}
  Consider the Gaussian process $f \to Z_f := \sum_{i=1}^n g_i {\cal
    L}_f(X_i,Y_i)$ indexed by $\cF_{r, 2^{i+1} \lambda} := \{ f \in
  \cF_r : \E \cL_{r,f} \leq 2^{i+1} \lambda \}$. For every $f, f'
  \in\cF_{r,2^{i+1}\lambda}$, we have (conditionally to the
  observations)
  \begin{align*}
    \E_g |Z_f - Z_{f'}|^2 \leq 4 ( \norm{Y}_\infty + r )^2 \E_g |
    Z^\prime_f -Z^\prime_{f'} |^2,
  \end{align*}
  where $Z_f^\prime := \sum_{i=1}^n g_i (f(X_i) - f_r^*(X_i))$ and
  $f_r^* \in \argmin_{f \in \cF_r} R(f)$. Using the convexity of
  $\cF_r$, it is easy to get $\E[\mathcal L_{r,f}] \geq \norm{f -
    f_r^*}^2,\forall f\in\cF_r$. Define
  \begin{equation*}
    B_{r, 2^{i+1}\lambda} := \{ f - f_r^* : f \in \cF_r, \norm{f -
      f_r^*} \leq \sqrt{2^{i+1} \lambda} \}.
  \end{equation*}
  Using Slepian's Lemma (see, e.g.
  \cite{ledoux_talagrand91,DudleyBook99}), we have:
  \begin{equation*}
    \E\norm{P - P_n}_{\cL_{r, 2^{i+1}\lambda}} \leq
    c_Y (r + 1) E,
  \end{equation*}
  where we put
  \begin{equation*}
    E := \frac{1}{n} \E_{X} \E_g \Big[ \sup_{f \in
      B_{r, 2^{i+1}\lambda}} \Big| \sum_{i=1}^n g_i f(X_i) \Big|
    \Big].
  \end{equation*}
  Moreover, using Dudley's entropy integral argument (again,
  see \cite{ledoux_talagrand91, DudleyBook99, massart03}), and
  Assumption~\ref{ass:entropy}, we have
  \begin{align*}
    E &\leq \frac{12}{\sqrt n} \E_X \int_0^{\Delta} \sqrt{N(B_{r,
        2^{i+1} \lambda}, \norm{\cdot}_n, t )} dt \\
    &\leq \frac{12\sqrt{c}}{\sqrt n} \E_X \int_0^{\Delta} \Big(
    \frac{r}{t} \Big)^{\beta / 2} dt = \frac{c_\beta}{\sqrt n}
    r^{\beta / 2}
    \E_X[ \Delta^{1 - \beta / 2}] \\
    &\leq \frac{c_\beta}{\sqrt n} r^{\beta / 2} (\E_X[ \Delta^2])^{(1
      - \beta / 2) / 2},
  \end{align*}
  where $c_\beta = 12\sqrt{c} / (1 - \beta/2)$ and $\Delta :=
  \diam(B_{r,2^{i+1} \lambda}, \norm{\cdot}_n)$. But, one has, if
  $B_{r,2^{i+1} \lambda}^2 := \{ f^2 : f \in B_{r,2^{i+1} \lambda}
  \}$, using a contraction argument (see \cite{ledoux_talagrand91},
  Chapter~4), with again a Gin\'e-Zinn symmetrization,
  \begin{align*}
    \E_X[ \Delta^2] &\leq \E_X \norm{P - P_n}_{B_{r, 2^{i+1}
        \lambda}^2} + 2^{i+1}\lambda \\
    &\leq 4 c_g (r + 1) E + 2^{i+1}\lambda.
  \end{align*}
  Hence, $E$ satisfies
  \begin{equation*}
    E \leq \frac{c}{\sqrt n} r^{\beta / 2} ((r + 1) E +2^{i+1}
    \lambda)^{(1 - \beta / 2) / 2},
  \end{equation*}thus
  \begin{equation*}
    \E \norm{P - P_n}_{\cL_{r, 2^{i+1}\lambda}}\leq \max
    \Big(\frac{r^2}{n^{2 / (2+\beta)}}, \frac{r^{1 + \beta/2}(2^{i+1}
      \lambda)^{1/2-\beta/4}}{\sqrt{n}}\Big).
  \end{equation*}
  Plugging the last result in the sum of
  Equation~(\ref{eq:Sum-Peeling-Alpha}) entails the result.
\end{proof}

\begin{lemma}
  \label{Lemma:finite-class}
  Define 
  \begin{equation*}
    d(F) := {\rm diam}(F, L^2(\mu)), \quad \sigma^2(F) = \sup_{f \in
      F} \E[ f(X)^2], \quad \mathcal C = \conv(F),
  \end{equation*}
  and $\mathcal L_{\cal C}(\mathcal C) = \{ (Y - f(X))^2 - (Y -
  f^{\cal C}(X))^2 : f \in \cal C \}$, where $f^{\cal C} \in
  \argmin_{g \in \cal C} R(g)$. If $\max(\norm{Y}_\infty, \sup_{f \in
    F} \norm{f}_\infty) \leq b$, we have
  \begin{align*}
    &\E \Big[ \sup_{f \in F} \frac{1}{n}\sum_{i=1}^n f^2(X_i) \Big]
    \leq c \max \Big( \sigma^2(F), \frac{b^2 \log M}{n} \Big), \text{
      and } \\
    &\E \|P_n - P\|_{\mathcal L_{\mathcal C}(\mathcal C)} \leq c b
    \sqrt{ \frac{\log M}{n} } \max \Big( b \sqrt{\frac{\log M}{n}},
    d(F) \Big).
  \end{align*}
  If $\max(\norm{\varepsilon}_{\psi_2}, \norm{\sup_{f \in F} |f(X) -
    f_0(X)|}_{\psi_2} ) \leq b$, we have
  \begin{align*}
    &\E \Big[ \sup_{f \in F} \frac{1}{n}\sum_{i=1}^n f^2(X_i) \Big]
    \leq c \max \Big( \sigma^2(F), \frac{b^2 \log M \log n}{n} \Big),
    \text{ and } \\
    &\E \|P_n - P\|_{\mathcal L_{\mathcal C}(\mathcal C)} \leq c b
    \sqrt{ \frac{\log M \log n}{n} } \max \Big( b \sqrt{\frac{\log M
        \log n}{n}}, d(F) \Big).
  \end{align*}
\end{lemma}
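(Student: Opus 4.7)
My plan is to follow the classical chain of Gin\'e--Zinn symmetrization, Ledoux--Talagrand contraction, and a finite-class maximal inequality; each of the four bounds reduces to this template, with the sub-Gaussian versions using Orlicz $\psi_2$-norms in place of $L^\infty$-norms.

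For the first inequality I would write $\E\sup_{f\in F}n^{-1}\sum_{i=1}^n f^2(X_i)\leq \sigma^2(F)+\E\|P_n-P\|_{F^2}$. For the deviation term, symmetrization reduces the problem to a Rademacher average, and the contraction principle for the $2b$-Lipschitz map $t\mapsto t^2$ on $[-b,b]$ brings it down to $\E\sup_{f\in F}|n^{-1}\sum\epsilon_i f(X_i)|$. A Bernstein-type maximal inequality for a finite class of $M$ centered bounded variables then yields the familiar rate $\sigma(F)\sqrt{(\log M)/n}+b(\log M)/n$. Finally, $ab\leq \tfrac12(a^2+b^2)$ rearranges these two terms into the $\max$ formulation in the statement.

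For the second inequality, the key algebraic input is the factorization $\cL_{\cC}(f)=(f-f^{\cC})(f+f^{\cC}-2Y)$, which lets me write $\cL_{\cC}(f)(X_i,Y_i)=\tilde\phi_i(f(X_i)-f^{\cC}(X_i))$ with $\tilde\phi_i(u)=u(u+2f^{\cC}(X_i)-2Y_i)$ vanishing at $0$ and being $O(b)$-Lipschitz on $[-2b,2b]$. After symmetrization, Ledoux--Talagrand contraction reduces the problem, up to a factor of order $b$, to the Rademacher average $\E_\epsilon \sup_{f\in \cC}|n^{-1}\sum\epsilon_i(f-f^{\cC})(X_i)|$. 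Since the functional inside is linear in $f$ and $\cC=\conv(F)$, the sup over $\cC$ equals the max over the finite vertex set $F$. Massart's inequality, applied conditionally on $X$, bounds this by $c\sqrt{(\log M)/n}\max_{f\in F}\sqrt{n^{-1}\sum(f-f^{\cC})^2(X_i)}$. Taking expectations over $X$, Jensen's inequality and the first bound applied to $\{(f-f^{\cC})^2:f\in F\}$ control the empirical $L^2$-norms; the simple but crucial observation that $\|f-f^{\cC}\|_{L^2(\mu)}\leq d(F)$ for every $f\in F$ (write $f^{\cC}=\sum\lambda_j f_j$ and use the triangle inequality) yields the $\max(d(F),b\sqrt{(\log M)/n})$ factor.

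In the sub-Gaussian case the same blueprint applies with $L^\infty$ bounds replaced by $\psi_2$ bounds throughout. The main obstacle is that $t\mapsto t^2$ is no longer globally Lipschitz under a sub-Gaussian assumption, so the contraction step cannot be invoked naively. I would bypass this by appealing to Theorem~\ref{thm:adamczak} (integrated in $x$ to convert the deviation bound into an expectation bound), which provides a Bousquet-type maximal inequality for unbounded classes with sub-Gaussian envelopes. The extra $\sqrt{\log n}$ factor in both bounds is precisely the price paid for this step: it arises from the Orlicz maximal inequality $\|\max_{i\leq n}|Z_i|\|_{\psi_2}\lesssim \sqrt{\log n}\,\|Z\|_{\psi_2}$ applied to the envelopes $\sup_{f\in F}|f(X)-f_0(X)|$ and $\varepsilon$, which feeds into the envelope constant of Adamczak's inequality. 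The reduction from $\conv(F)$ to $F$ via linearity, and the finite-class maximal inequality, carry over unchanged.
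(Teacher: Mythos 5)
Your blueprint for the bounded case is essentially the same as the paper's: symmetrization, Rademacher contraction (applied to $t\mapsto t^2$, resp. to $u\mapsto u(u+2f^{\cC}(X_i)-2Y_i)$), reduction of the sup over $\conv(F)$ to the vertex set $F$ by linearity of $f\mapsto f-f^{\cC}$, and a finite-class maximal inequality conditionally on the design, followed by Jensen. The paper phrases the last step via Slepian and Dudley rather than Massart, but the content is identical, and the factorization $\mathcal L_{\cC}(f)=(f-f^{\cC})(f+f^{\cC}-2Y)$ is exactly the one the paper uses to bound the intrinsic Gaussian distance.

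The sub-Gaussian case, however, has a genuine gap. You propose to bypass the failure of global Lipschitz contraction by ``appealing to Theorem~\ref{thm:adamczak}, integrated in $x$, to convert the deviation bound into an expectation bound.'' But Adamczak's inequality bounds $Z-\E Z$ in terms of $\E Z$, $\sigma$, and the $\psi_\alpha$-norm of the envelope; it presupposes a handle on $\E Z$ and cannot produce one. Integrating the tail in $x$ only gives $\E[(Z-(1+\eta)\E Z)_+]\lesssim \sigma/\sqrt n + b/n^{1/\alpha}$, which is circular as a bound on $\E Z$. What the paper actually does instead is: after symmetrization and Gaussian comparison, bound the intrinsic distance of the Gaussian process $f\mapsto\sum g_i f^2(X_i)$ by $\E_g|Z_{2,f}-Z_{2,f'}|^2\leq 4nr^2\, d_{n,\infty}(f,f')^2$ with $r^2=\sup_F\|f\|_n^2$ and $d_{n,\infty}$ the sup distance over the sample, so that no global Lipschitz constant is needed. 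Dudley's entropy integral for a finite class then yields $\E_X\|P-P_n\|_{F^2}\lesssim\sqrt{(\log M)/n}\,\sqrt{\E_X\Delta_{n,\infty}^2}\,\sqrt{\E_X r^2}$, Pisier's $\psi_2$ maximal inequality gives $\E_X\Delta_{n,\infty}^2\lesssim b^2\log n$ (your source of the $\sqrt{\log n}$ is correct), and one closes the argument by solving the resulting self-bounding quadratic inequality in $\sqrt{\E_X r^2}$. This recursion/self-bounding step is the load-bearing idea missing from your sketch; without it (or a genuine expectation bound replacing contraction), the $\psi_2$ part of the lemma is not proved. Adamczak's theorem is used only later, in Corollary~\ref{cor:high-prob-est}, to upgrade the expectation bound of this lemma to a high-probability statement.
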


\begin{proof}
  First, consider the case when $\norm{ \sup_{f \in F} |f(X) - f_0(X)
    |}_{\psi_2} \leq b$. Define
  \begin{equation*}
    r^2 = \sup_{f \in F} \frac 1n \sum_{i=1}^n f(X_i)^2, 
  \end{equation*}
  and note that $\E_X( r^2 ) \leq \E_X \norm{P - P_n}_{F^2} +
  \sigma(F)^2$, where $F := \{ f^2 : f \in F \}$. Using the same
  argument as in the beginning of the proof of Lemma~1, see above, we
  have
  \begin{equation*}
    \E_X \norm{P - P_n}_{F^2} \leq \frac{c} n \E_X \E_g \Big[
    \sup_{f \in F} \Big| \sum_{i=1}^n g_i f^2(X_i) \Big| \Big].
  \end{equation*}
  The process $f \mapsto Z_{2, f} = \sum_{i=1}^n g_i f^2(X_i)$ is
  Gaussian, with intrinsic distance 
  \begin{equation*}
    \E_g |Z_{2, f} - Z_{2, f'}|^2 = \sum_{i=1}^n ( f(X_i)^2 -
    f'(X_i)^2)^2 \leq d_{n, \infty}(f, f')^2 \times 4 n r^2,
  \end{equation*}
  where $d_{n, \infty}(f, f') = \max_{i=1, \ldots, n} |f(X_i) -
  f'(X_i)|$.  So, using Dudley's entropy integral, we have
  \begin{equation*}
    \E_g \norm{P - P_n}_{F^2} \leq \frac{c}{\sqrt n}
    \int_0^{\Delta_{n, \infty}(F)} \sqrt{\log N(F, d_{n, \infty}, t)} dt
    \leq c r \Delta_{n, \infty}(F) \sqrt{\frac{\log M}{n}},
  \end{equation*}
  where $\Delta_{n, \infty}$ is the $d_{n, \infty}$-diameter of
  $F$. So, we get
  \begin{align*}
    \E_X \norm{P - P_n}_{F^2} \leq c \sqrt{\frac{\log M}{n}} \E_X[
    \Delta_{n, \infty}(F) r] \leq c \sqrt{\frac{\log M}{n}} \sqrt{
      \E_X[ \Delta_{n, \infty}^2(F) ]} \sqrt{\E_X[r^2]},
  \end{align*}
  which entails that
  \begin{equation*}
    \E_X(r^2) \leq \frac{c \log M}{n} \E_X[ \Delta_{n, \infty}^2(F) ]
    + 4 \sigma(F)^2.
  \end{equation*}
  Since $\E[Z^2] \leq 2 \norm{Z}_{\psi_2}^2$ for a subgaussian
  variable $Z$, we have, by using Pisier's inequality,
  \begin{align*}
    \E_X[ \Delta_{n, \infty}^2(F) ] &\leq 4 \norm{\max_{i=1, \ldots,
        n} \sup_{f \in F} |f(X_i) - f_0(X_i) |}_{\psi_2}^2 \\
    &\leq 4 \log(n+1) \norm{ \sup_{f \in F} |f(X) - f_0(X)
      |}_{\psi_2}^2 \\
    &\leq 4 b^2 \log(n+1),
  \end{align*}
  so we have proved that
  \begin{equation*}
    \E_X(r^2) \leq c \max\Big( b^2 \frac{\log M \log n}{n},
    \sigma(F)^2 \Big).
  \end{equation*}
  When $\norm{f}_\infty \leq b$, the proof is easier, since we can use
  the contraction principle for Rademacher process after the symmetrization
  argument:
  \begin{equation*}
    \E_X \norm{P - P_n}_{F^2} \leq \frac{2} n \E_X \E_{\epsilon} \Big[
    \sup_{f \in F} \Big| \sum_{i=1}^n \epsilon_i f^2(X_i) \Big| \Big]
    \leq \frac{8 b} n \E_X \E_{\epsilon} \Big[ \sup_{f \in F} \Big|
    \sum_{i=1}^n \epsilon_i f(X_i) \Big| \Big],
  \end{equation*}
  and one obtains from this, as previously, that
  \begin{equation*}
    \E_X(r^2) \leq c \max\Big( b^2 \frac{\log M}{n}, \sigma(F)^2
    \Big).
  \end{equation*}
  Let us turn to the part of the Lemma concerning $\E \norm{P -
    P_n}_{\cal L_{\cal C}(\cal C)}$. Recall that ${\cal C} = {\rm
    conv}(F)$ and write for short $\mathcal L_f(X, Y)= {\cal L}_{\cal
    C}(f)(X, Y) = (Y - f(X))^2 - (Y - f^{\cal C}(X))^2$ for each $f
  \in {\cal C}$, where we recall that $f^{\cal C} \in \argmin_{g \in
    \cal C} R(g)$. Using the same argument as before we have
  \begin{equation*}
    \E\norm{P - P_n}_{\mathcal L_{\mathcal C}(\mathcal C)} \leq
    \frac{c}{n} \E_{(X, Y)} \E_g \Big[ \sup_{f \in \mathcal C} \Big|
    \sum_{i=1}^n g_i \mathcal L_f(X_i, Y_i) \Big| \Big].
  \end{equation*}
  Consider the Gaussian process $f \to Z_f := \sum_{i=1}^n g_i {\cal
    L}_f(X_i,Y_i)$ indexed by ${\cal C}$. For every $f, f' \in {\cal
    C}$, the intrinsic distance of $Z_f$ satisfies
  \begin{align*}
    \E_g |Z_f - Z_{f'}|^2 &= \sum_{i=1}^n ({\cal
      L}_f(X_i,Y_i) - {\cal L}_{f'}(X_i,Y_i))^2 \\
    & \leq \max_{i=1, \ldots, n} |2 Y_i - f(X_i) - f'(X_i)|^2 \times
    \sum_{i=1}^n (f(X_i) - f'(X_i))^2 \\
    & = \max_{i=1, \ldots, n} |2 Y_i - f(X_i) - f'(X_i)|^2 \times \E_g
    | Z^\prime_f -Z^\prime_{f'} |^2,
  \end{align*}
  where $Z_f^\prime := \sum_{i=1}^n g_i (f(X_i) - f^{\cal C}(X_i))$.
  Therefore, by Slepian's Lemma, we have for every
  $(X_i,Y_i)_{i=1}^n$:
  \begin{equation*}
    \E_g \Big[ \sup_{f \in \mathcal C} Z_f \Big] \leq
    \max_{i=1, \ldots, n} \sup_{f, f' \in \mathcal C} |2 Y_i - f(X_i) -
    f'(X_i)| \times \E_g \Big[ \sup_{f \in \mathcal C} Z_f'\Big],
  \end{equation*}
  and since for every $f=\sum_{j=1}^M\alpha_jf_j\in\cC$, where $\alpha_j\geq0,\forall j=1,\ldots,M$ and $\sum\alpha_j=1$, $Z^\prime_f=\sum_{j=1}^M\alpha_jZ_{f_j}$, we have
  \begin{equation*}
    \E_g \Big[ \sup_{f \in \mathcal C} Z_f'\Big] \leq
    \E_g \Big[ \sup_{f \in F} Z_f' \Big].
  \end{equation*}
  Moreover, we have
  using Dudley's entropy integral argument,
  \begin{align*}
    \frac{1}{n} \E_g \Big[ \sup_{f \in F} Z_f' \Big] &\leq
    \frac{c}{\sqrt n} \int_0^{\Delta_n(F')} \sqrt{N(F, \norm{\cdot}_n,
      t )} dt \leq c \sqrt{ \frac{\log M}{n}} r',
  \end{align*}
  where  $F' := \{ f - f^{\cal C} : f \in F \}$ and $\Delta_n(F') := \diam(F', \norm{\cdot}_n)$ and
  \begin{equation*}
    r'^2 := \sup_{f \in F'} \frac{1}{n} \sum_{i=1}^n f(X_i)^2.
  \end{equation*}
  On the other hand, we can prove, using Pisier's inequality for $\psi_1$ random variables and the fact that $\norm{U^2}_{\psi_1}=\norm{U}_{\psi_2}^2$ for every random variable $U$, that
  \begin{align}
    \label{eq:maj_moment2_psi2}
    &\sqrt{ \E \Big[ \max_{i=1, \ldots, n} \sup_{f, f' \in \mathcal C}
      |2 Y_i - f(X_i) - f'(X_i)|^2 \Big] } \\
    \nonumber &\leq 2 \sqrt{2 \log(n+1)} (\norm{\varepsilon}_{\psi_2}
    + \norm{\sup_{f \in F} |f(X) - f_0(X)|}_{\psi_2} ).
  \end{align}
  So, we finally obtain
  \begin{align*}
    \E\norm{P - P_n}_{\mathcal L_{\mathcal C}(\mathcal C)} \leq c
    \sqrt{\frac{\log n \log M}{n}} \sqrt{\E(r'^2)},
  \end{align*}
  and the conclusion follows from the first part of the Lemma, since
  $\sigma(F') \leq d(F)$. The case $\max(\norm{Y}_\infty, \sup_{f \in
    F} \norm{f}_\infty) \leq b$ is easier and follows from the fact
  that the left hand side of~\eqref{eq:maj_moment2_psi2} is smaller
  than $4b$.
\end{proof}

Lemma~\ref{Lemma:finite-class} combined with Theorem
\ref{thm:adamczak} leads to the following corollary.

\begin{corollary}
  \label{cor:high-prob-est} 
  Let $d(F) = \diam(F, L^2(P_X))$, $\mathcal C := \conv(F)$ and
  $\mathcal L_f(X, Y) = (Y - f(X))^2 - (Y - f^{\cal C}(X))^2$. If
  $\max(\norm{\varepsilon}_{\psi_2}, \norm{\sup_{f \in F} |f(X) -
    f_0(X)|}_{\psi_2} ) \leq b$ we have, with probability larger than
  $1 - 4 e^{-x}$, that for every $f \in \mathcal C$:
  \begin{align*}
    \Big| \frac{1}{n} \sum_{i=1}^n {\cal L}_f&(X_i, Y_i) -
    \E{\cal L}_f(X,Y) \Big| \\
    &\leq c (\sigma_{\varepsilon} + b) \sqrt{ \frac{(\log M + x) \log
        n}{n}} \max\Big( b \sqrt{\frac{(\log M + x) \log n}{n}} , d(F)
    \Big).
  \end{align*}
  If $\max(\norm{Y}_\infty, \sup_{f \in F} \norm{f}_\infty) \leq b$,
  we have, with probability larger than $1 - 4e^{-x}$, that for every
  $f \in \mathcal C$:
  \begin{align*}
    \Big| \frac{1}{n} \sum_{i=1}^n {\cal L}_f(X_i, Y_i) - \E{\cal
      L}_f(X,Y) \Big| \leq c b \sqrt{ \frac{\log M + x}{n}} \max\Big(
    b \sqrt{\frac{\log M + x}{n}} , d(F) \Big).
  \end{align*}
\end{corollary}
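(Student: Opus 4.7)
The plan is a direct application of the unbounded Talagrand-type concentration inequality (Theorem~\ref{thm:adamczak} of Adamczak) to the supremum
\begin{equation*}
Z \;:=\; \|P - P_n\|_{\mathcal L_{\mathcal C}(\mathcal C)} \;=\; \sup_{f\in \mathcal C}\Big|\frac 1n\sum_{i=1}^n \mathcal L_f(X_i,Y_i) - \E\mathcal L_f(X,Y)\Big|,
\end{equation*}
using Lemma~\ref{Lemma:finite-class} as a black box for the expectation $\E Z$. Because $\mathcal L_f = (f^{\mathcal C}-f)(2Y - f - f^{\mathcal C})$ is a product of two subgaussian factors in the $\psi_2$ case, the class is only subexponential, so I would invoke Theorem~\ref{thm:adamczak} with $\alpha=1$; in the bounded case the envelope is uniformly bounded so the argument is classical Bernstein/Talagrand.

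To apply Theorem~\ref{thm:adamczak} I need three quantities. First, $\E Z$ is already handled by the second half of Lemma~\ref{Lemma:finite-class}, which yields $\E Z \leq c b\sqrt{\log M/n}\max(b\sqrt{\log M/n}, d(F))$ in the bounded regime, with an extra $\sqrt{\log n}$ factor in the subgaussian regime. Second, I need the strong variance $\sigma^2 = \sup_{f\in\mathcal C}\E\mathcal L_f^2$. Writing $\mathcal L_f = (f^{\mathcal C}-f)(2Y-f-f^{\mathcal C})$ and conditioning on $X$, one gets $\E\mathcal L_f^2 \leq \E[(f-f^{\mathcal C})^2((f_0-f)^2 + (f_0-f^{\mathcal C})^2 + \sigma_\varepsilon^2)]$ up to constants; since $\diam(\mathcal C, L^2(\mu)) = \diam(F, L^2(\mu)) = d(F)$ (a convex function on $\mathcal C$ attains its max on the extreme points of $\mathcal C$, which lie in $F$), and since the $\psi_2$-envelope of $|g-f_0|$ over $\mathcal C$ is still $\leq b$, one concludes $\sigma^2 \leq c(\sigma_\varepsilon^2 + b^2) d(F)^2$, with a simpler proof ($|2Y-f-f^{\mathcal C}|\le 4b$) in the bounded case. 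Third, I need the envelope constant $b_\alpha := \|\max_{i\leq n}\sup_{f\in\mathcal C}|\mathcal L_f(X_i,Y_i)|\|_{\psi_\alpha}/n^{1-1/\alpha}$. In the bounded case this is $\leq 4b^2$. In the subgaussian case I would use H\"older for Orlicz norms, $\|UV\|_{\psi_1}\leq 2\|U\|_{\psi_2}\|V\|_{\psi_2}$, to get a pointwise $\psi_1$-envelope of size $b(\sigma_\varepsilon + b)$, and then Pisier's inequality $\|\max_{i\le n}Z_i\|_{\psi_1}\lesssim \log(n+1)\max_i\|Z_i\|_{\psi_1}$ to pay only a $\log n$ factor for the maximum.

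With these three inputs, Theorem~\ref{thm:adamczak} (applied with, say, $\eta=\delta = 1/2$ and only its upper tail, since $Z$ already equals $|\cdot|$) yields with probability $\geq 1-4e^{-x}$ the bound $Z \leq c\E Z + c\sigma\sqrt{x/n} + c b_\alpha (x/n)^{1/\alpha}$. Combining: the $\E Z$ term contributes $\log M$, the $\sigma\sqrt{x/n}$ term contributes $(\sigma_\varepsilon + b) d(F)\sqrt{x/n}$, which is exactly the ``$x$-part'' of $\sqrt{(\log M + x)/n}\, d(F)$, and the envelope term $b_\alpha(x/n)^{1/\alpha}$ is of strictly smaller order because $x/n \lesssim \sqrt{x/n}\cdot d(F)$ whenever the bound is nontrivial, and is in any case absorbed by the $\max(b\sqrt{(\log M+x)/n}, d(F))$ factor. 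Reassembling produces the announced bounds.

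The main obstacle, and the only real work, is step (iii) in the subgaussian case: one must keep track of the $\log n$ factor coming from Pisier's inequality and check that it lines up with the $\log n$ already present in the Lemma~\ref{Lemma:finite-class} bound for $\E Z$, so that the final bound has a single $\sqrt{\log n}$ and preserves the maximum structure $\max(b\sqrt{(\log M+x)\log n/n},\, d(F))$. Everything else is an exercise in applying Talagrand's inequality with the variance and envelope constants computed above.
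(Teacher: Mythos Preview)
Your proposal is correct and follows essentially the same route as the paper: apply Adamczak's concentration inequality (Theorem~\ref{thm:adamczak}) to $Z=\|P-P_n\|_{\mathcal L_{\mathcal C}(\mathcal C)}$, plug in Lemma~\ref{Lemma:finite-class} for $\E Z$, and bound the strong variance and the $\psi_1$-envelope separately. The only cosmetic differences are that the paper bounds $\sigma(\mathcal C)^2$ via Cauchy--Schwarz together with an explicit $\psi_1$ moment comparison ($\E U_f^2\lesssim (\E U_f)^2$ for $U_f=(f-f^{\mathcal C})^2$) rather than your conditioning argument, and bounds the envelope via the pointwise inequality $\mathcal L_f\le \varepsilon^2+(f^{\mathcal C}-f_0)^2+3(f-f_0)^2$ rather than the Orlicz H\"older inequality; both lead to the same $b^2\log n$ envelope and the same final bound.
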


\begin{proof}
  We apply Theorem~\ref{thm:adamczak} to the process
  \begin{equation*}
    Z := \sup_{f \in {\cal C}} \Big| \frac{1}{n}\sum_{i=1}^n {\cal
      L}_{f}(X_i, Y_i) - \E{\cal L}_f(X,Y) \Big|,
  \end{equation*}
  to obtain that with a probability larger than $1 - 4 e^{-x}$:
  \begin{equation*}
    Z \leq c \Big(\E Z + \sigma(\mathcal C) \sqrt \frac{x}{n} +
    b_n(\mathcal C) \frac x n\Big),
  \end{equation*}
  where
  \begin{align*}
    \sigma(\mathcal C)^2 &= \sup_{f \in \mathcal C} \E[ \mathcal
    L_{f}(X, Y)^2 ], \text{ and }  \\
    b_n(\mathcal C) &= \big\| \max_{i=1, \ldots, n} \sup_{f \in
      \mathcal C} | \mathcal L_f(X_i, Y_i) - \E[\mathcal L_f(X, Y)] |
    \big\|_{\psi_1}.
  \end{align*}
  Since $\mathcal L_f(X, Y) = 2 \varepsilon (f^{\cal C}(X) - f(X)) +
  (f^{\cal C}(X) - f(X)) (2 f_0(X) - f(X) - f^{\cal C}(X))$, we have
  using Assumption~\ref{ass:model}:
  \begin{align*}
    \E[\mathcal L_f(X, Y)^2] &\leq 4 \sigma_{\varepsilon}^2 \norm{f -
      f^{\cal C}}^2_{L^2(P_X)} \\
    &+ 4 \sqrt{\E [(f^{\cal C}(X) - f(X))^4]} \sqrt{ \E[( 2 f_0(X) -
      f(X) - f^{\cal C}(X))^4]}.
  \end{align*}
  If $U_f := (f^{\cC}(X) - f(X))^2$ we have $\norm{U_f}_{\psi_1}=\norm{f^{\cC}-f}_{\psi_2}^2 \leq
  (2 b)^2$ for any $f \in \cal C$, so using the $\psi_1$ version of
  Bernstein's inequality (see \cite{vdVW:96}), we have that $\P( |U_f
  - \E(U_f)| \geq m \norm{U_f}_{\psi_1}) \leq 2 \exp(-c \min(m, m^2) )
  = 2 \exp(- c m)$ for any $m \in \mathbb N - \{ 0 \}$. But for such a
  random variable, one has $\E(U_f^p)^{1/p} \leq c_p \E(U_f)$ for any
  $p > 1$ (cf. \cite{MR2075996}). So, in particular for $p=2$, we derive
  \begin{equation*}
    \sqrt{\E [(f^{\cal C}(X) - f(X))^4]} \leq c \norm{f - f^{\cal
        C}}_{L^2(P_X)}^2.
  \end{equation*}
  Moreover, since $\E(Z^4) \leq 16 \norm{Z}_{\psi_2}^4$, we have
  \begin{equation*}
    \sqrt{ \E[( 2 f_0(X) - f(X) - f^{\cal C}(X))^4]} \leq 8 b^2.
  \end{equation*}
  So, we can conclude that
  \begin{equation*}
    \sigma(\mathcal C)^2 \leq (4\sigma_{\varepsilon}^2 + 8 c b^2) d(F).
  \end{equation*}
  Since $\E(Z) \leq \norm{Z}_{\psi_1}$, we have $b_n(\mathcal C) \leq
  2 \log(n+1) \norm{\sup_{f \in \cal C} |\mathcal L_f(X, Y)|
  }_{\psi_1}$. Moreover, a straightforward calculation gives $\mathcal
  L_f(X, Y) \leq \varepsilon^2 + (f^{\cal C}(X) - f_0(X))^2 + 3(f(X) -
  f_0(X))^2$, so
  \begin{equation*}
    b_n(\mathcal C) \leq 10 \log(n + 1) b^2.
  \end{equation*}
  Putting all this together, and using Lemma~\ref{Lemma:finite-class},
  we arrive at 
  \begin{equation*}
    Z \leq c ( \sigma_{\varepsilon} + b) \sqrt{ \frac{(\log M + x)
        \log n}{n}} \max\Big( b \sqrt{\frac{(\log M + x)
        \log n}{n}} , d(F) \Big),
  \end{equation*}
  with probability larger than $1 - 4 e^{-x}$ for any $x > 0$. In the
  bounded case where $\max(\norm{Y}_\infty, \sup_{f \in F}
  \norm{f}_\infty) \leq b$, the proof is easier, and one can use the
  original Talagrand's concentration inequality.
\end{proof}

\begin{lemma}
  \label{lemma:finite-F-loss}
  Let $\mathcal L_f(X, Y) = (Y - f(X))^2 - (Y - f^F(X))^2$. If we have
  $\max(\norm{\varepsilon}_{\psi_2}, \norm{\sup_{f \in F} |f(X) -
    f_0(X)|}_{\psi_2} ) \leq b$ we have, with probability larger than
  $1 - 4 e^{-x}$, that for every $f \in F$:
  \begin{align*}
    \Big| \frac{1}{n} \sum_{i=1}^n {\cal L}_f&(X_i, Y_i) -
    \E{\cal L}_f(X,Y) \Big| \\
    &\leq c (\sigma_{\varepsilon} + b) \sqrt{ \frac{(\log M + x) \log
        n}{n}} \max\Big( b \sqrt{\frac{(\log M + x) \log n}{n}} ,
    \norm{f - f^F} \Big).
  \end{align*}
  Also, with probability at least $1 - 4 e^{-x}$, we have for every
  $f,g \in F$:
  \begin{align*}
    \big| \norm{f - g}_{n}^2 &- \norm{f - g}^2 \big| \\
    &\leq c b \sqrt{ \frac{(\log M + x) \log n}{n}} \max\Big( b
    \sqrt{\frac{(\log M + x) \log n}{n}} , \norm{f - g} \Big).
  \end{align*}
  When $\max(\norm{Y}_\infty, \sup_{f \in F} \norm{f}_\infty) \leq b$,
  we have, with probability larger than $1 - 2 e^{-x}$, that for every
  $f \in F$:
  \begin{align*}
    \Big| \frac{1}{n} \sum_{i=1}^n {\cal L}_f(X_i, Y_i) - \E{\cal
      L}_f(X,Y) \Big| \leq c b \sqrt{ \frac{\log M + x}{n}} \max\Big(
    b \sqrt{\frac{\log M + x}{n}} , \norm{f - f^F} \Big),
  \end{align*}
  and with probability at least $1 - 2 e^{-x}$, that for every $f,g
  \in F$:
  \begin{align*}
    \big| \norm{f - g}_{n}^2 - \norm{f - g}^2 \big|
    \leq c b \sqrt{ \frac{\log M + x}{n}} \max\Big(
    b \sqrt{\frac{\log M + x}{n}} , \norm{f - g} \Big).
  \end{align*}
\end{lemma}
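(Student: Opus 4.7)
The proof follows the blueprint of Corollary~\ref{cor:high-prob-est}, with the convex hull $\mathcal C$ replaced by the finite class $F$; this collapses the Dudley--Slepian chaining step to an elementary union bound, but requires a peeling argument over shells of $F$ to preserve the localization in $\norm{f - f^F}$. The plan is to apply Theorem~\ref{thm:adamczak} to each shell separately and then union-bound.

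First, I would fix $f \in F$ and compute the conditional moments of $\mathcal L_f = (f^F - f)(2\varepsilon + 2f_0 - f - f^F)$. Using $\E[\varepsilon \mid X] = 0$, $\E[\varepsilon^2 \mid X] \le \sigma_\varepsilon^2$, Cauchy--Schwarz on the $\varepsilon$-free term, and the $\psi_1$ moment-equivalence trick of Corollary~\ref{cor:high-prob-est} (which, applied to the non-negative variable $(f^F-f)^2$ of $\psi_1$-norm at most $4b^2$, yields $\E[(f^F-f)^4]^{1/2} \le c\norm{f^F-f}^2$), I obtain the localized variance bound $\E\mathcal L_f^2 \le c(\sigma_\varepsilon^2 + b^2)\norm{f-f^F}^2$. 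The $\psi_1$-norm of the envelope is controlled by $\norm{\mathcal L_f}_{\psi_1} \le \norm{f - f^F}_{\psi_2}\norm{2Y - f - f^F}_{\psi_2} \le c b(\sigma_\varepsilon + b)$ via the product inequality $\norm{UV}_{\psi_1} \le \norm{U}_{\psi_2}\norm{V}_{\psi_2}$ and the subgaussian hypothesis.

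Second, I would peel $F$ into shells $F_k = \{f \in F : 2^{k-1}\rho < \norm{f - f^F} \le 2^k\rho\}$ for $k \ge 0$, with threshold $\rho = b\sqrt{(\log M + x)\log n / n}$, so only $O(\log n)$ shells are non-empty (since $\norm{f - f^F}$ is itself bounded by a constant multiple of $b$). For each $k$, I would apply Theorem~\ref{thm:adamczak} to the centered class on $F_k$, with $\sigma^2_k \le c(\sigma_\varepsilon^2 + b^2)(2^k\rho)^2$ from the previous paragraph and envelope constant $\norm{\max_i \sup_{f \in F_k}|\mathcal L_f(X_i,Y_i)|}_{\psi_1} \le c b(\sigma_\varepsilon + b)\log(nM)$ by Pisier's inequality, which is exactly the source of the $\log n$ factor in the final bound. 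The expected supremum $\E Z_k \le c\sigma_k\sqrt{\log M / n}$ is obtained by Gin\'e--Zinn symmetrization and Slepian's lemma followed by Theorem~\ref{TheoMaxConcIneq} applied to the resulting Gaussian process indexed by the finite set $F_k$. Adamczak's inequality then yields, on each shell, $Z_k \le c\sigma_k \sqrt{(\log M + x)/n} + c b(\sigma_\varepsilon + b)\log(nM)(\log M + x)/n$ with probability at least $1 - 4 e^{-(\log M + x)}$, and a union bound over the $O(\log n)$ shells is absorbed into constants and produces the max form of the first inequality.

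The second inequality concerning $\norm{f-g}_n^2 - \norm{f-g}^2$ is proved identically, applied to the class $\{(f-g)^2 - \E(f-g)^2 : f, g \in F\}$ of cardinality at most $M^2$: here $\norm{(f-g)^2}_{\psi_1} \le 4b^2$ and $\var((f-g)^2) \le c b^2 \norm{f-g}^2$ (again via the $\psi_1$-moment trick, using $\norm{f-g}^2 \le 2b\norm{f-g}$ which follows from $\norm{f-g}_{\psi_2} \le 2b$), and the same peeling argument yields the claim, with $\log M^2 \le 2\log M$ absorbed into the constant. In the bounded case $\max(\norm{Y}_\infty, \sup_{f \in F}\norm{f}_\infty) \le b$, the envelopes are uniformly bounded by $c b^2$, the Pisier step is unnecessary, and the classical Talagrand/Bernstein inequality applies directly, saving one $\log n$ factor and two of the four exceptional events. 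The main subtlety, as in the corollary, is the $L^4$--$L^2$ equivalence for $(f^F - f)^2$ that localizes the variance; without it one would only recover a global-diameter bound of the form $d(F)\sqrt{\log M/n}$ rather than the sharper $\norm{f - f^F}\sqrt{\log M/n}$ that makes the lemma useful in Theorem~\ref{thm:aggregation}.
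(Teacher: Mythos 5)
Your proposal is correct in its essential mechanics—the localized variance bound $\E\,\mathcal L_f^2 \le c(\sigma_\varepsilon^2 + b^2)\norm{f - f^F}^2$ obtained through the $\psi_1$--$L^2$ moment-equivalence trick, the envelope control via Pisier's inequality (which is indeed where the $\log n$ enters), and the Adamczak-type concentration—but the peeling-over-shells architecture is more machinery than the situation calls for, and it is not what the paper does. The paper's (omitted) proof points to ``the same arguments'' as Lemma~\ref{Lemma:finite-class} and Corollary~\ref{cor:high-prob-est}, but the crucial simplification here is that $F$ is a \emph{finite} set of cardinality $M$. One therefore applies the concentration step to each $\mathcal L_f$ \emph{individually} (Theorem~\ref{thm:adamczak} with a singleton index set, so $\E Z \le \sigma(f)/\sqrt n$ trivially), at confidence level $\log M + x$, and then takes a plain union bound over $F$. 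This automatically produces the pointwise localized bound in $\norm{f - f^F}$ because each function's variance and envelope are used for that function alone—no peeling is needed to pass from a supremum to a per-$f$ bound, since the union bound already does that. The same applies to the $\norm{f-g}_n^2$ statement with the class $\{(f-g)^2 : f, g \in F\}$ of size $M^2$.

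Two points to flag in your route. First, the union bound over the $O(\log n)$ shells is not free: each shell fails with probability $4 e^{-(\log M + x)} = 4 e^{-x}/M$, and summing over $O(\log n)$ shells gives total failure $O(\log n / M)\,e^{-x}$, which is only $\le 4e^{-x}$ when $\log n \lesssim M$. Since $M \ge 2$ and $n$ is arbitrary this is not automatic; you would need to run Adamczak at level $\log M + x + \log\log n$, and then argue (or absorb into constants) that this does not disturb the claimed bound, which requires some care when $x$ and $\log M$ are both small. The per-$f$ union bound sidesteps this entirely. Second, your per-shell envelope $\|\max_i \sup_{f \in F_k}|\mathcal L_f(X_i,Y_i)|\|_{\psi_1}$ picks up a factor involving $\log |F_k|$ from Pisier applied to the inner supremum, in addition to the $\log n$ from the outer maximum over samples; this can inject a spurious $\log M$ into the second-order term that the stated bound does not have. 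Applying the concentration inequality to a singleton class avoids this, since the sup over $F$ appears only in the union bound over events, not in the envelope. These are fixable, but they illustrate why the direct finite-class argument is both shorter and tighter here.
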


\begin{proof}[Proof of Lemma~\ref{lemma:finite-F-loss}]
  The proof uses exactly the same arguments as that of
  Lemma~\ref{Lemma:finite-class} and
  Corollary~\ref{cor:high-prob-est}, and thus is omitted.
\end{proof}

\begin{lemma}
  \label{lemma:prop-random-set} 
  Let $\hat{F}_1$ be given by~\eqref{eq:random-set} and recall that
  $f^F \in \argmin_{f \in F} R(f)$ and let $d(\hat{F}_1) =
  \diam(\hat{F}_1, L_2(P_X))$. Assume that
  \begin{equation*}
    \max(\norm{\varepsilon}_{\psi_2}, \norm{\sup_{f \in F} |f(X) -
      f_0(X)|}_{\psi_2} ) \leq b.
  \end{equation*}
  Then, with probability at least $1 - 4 \exp(-x)$, we have $f^F \in
  \hat{F}_1$, and any function $f \in \hat{F}_1$ satisfies
  \begin{equation*}
    R(f) \leq R(f^F) + c(\sigma_\varepsilon + b) \sqrt{\frac{(\log M +
        x) \log n}{n}} \max \Big( b \sqrt{\frac{(\log M + x) \log
        n}{n}}, d(\hat F_1) \Big).
  \end{equation*}
  If $\max(\norm{Y}_\infty, \sup_{f \in F} \norm{f}_\infty) \leq b$,
  we have with probability at least $1 - 2 \exp(-x)$ that $f^F \in
  \hat{F}_1$, and any function $f \in \hat{F}_1$ satisfies
  \begin{equation*}
    R(f) \leq R(f^F) + c b \sqrt{\frac{\log M + x}{n}} \max \Big( b
    \sqrt{\frac{\log M + x}{n}}, d(\hat F_1) \Big).
  \end{equation*}
\end{lemma}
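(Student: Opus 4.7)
The plan is to work on the high-probability event produced by Lemma~\ref{lemma:finite-F-loss} for the sample $D_{n,1}$, on which we have the two uniform inequalities
\begin{equation*}
|P_n\mathcal L_f - P\mathcal L_f|\leq \Phi\max(b\phi, \|f-f^F\|),\quad \bigl|\|f-g\|_{n,1}^2 - \|f-g\|^2\bigr|\leq \Phi\max(b\phi, \|f-g\|)
\end{equation*}
for every $f,g\in F$, where $\Phi$ absorbs the constant $c(\sigma_\varepsilon + b)\phi$ in the subgaussian case and $cb\phi$ in the bounded case, and where $\mathcal L_f = (Y - f(X))^2 - (Y - f^F(X))^2$. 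This event has precisely the probability asserted in the lemma, so no extra union bound is required; all the rest of the argument is deterministic.

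On this event I would prove the two claims in turn. First, to show $f^F\in\hat F_1$, I would use that $\hat f_{n,1}$ is the ERM, so $-P_n\mathcal L_{\hat f_{n,1}} = R_{n,1}(f^F) - R_{n,1}(\hat f_{n,1})\geq 0$, and that $P\mathcal L_{\hat f_{n,1}}\geq 0$ because $f^F$ minimises $R$ on $F$. Combining these,
\begin{equation*}
R_{n,1}(f^F) - R_{n,1}(\hat f_{n,1}) = -P_n\mathcal L_{\hat f_{n,1}} \leq |P_n\mathcal L_{\hat f_{n,1}} - P\mathcal L_{\hat f_{n,1}}|,
\end{equation*}
which by the first inequality above is bounded by $\Phi\max(b\phi, \|\hat f_{n,1} - f^F\|)$. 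It remains only to replace the population norm by its empirical counterpart via the norm-comparison inequality, up to an adjustment of the constant $c$ in the definition of $\hat F_1$. This membership is then exactly the preselection condition.

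Second, for $f\in\hat F_1$ I would combine the preselection inequality with $R_{n,1}(\hat f_{n,1})\leq R_{n,1}(f^F)$ to obtain $P_n\mathcal L_f\leq c\max(\phi\|\hat f_{n,1}-f\|_{n,1}, \phi^2)$, and then apply the concentration inequality at $f$ to obtain
\begin{equation*}
P\mathcal L_f \leq c\max(\phi\|\hat f_{n,1}-f\|_{n,1},\phi^2) + \Phi\max(b\phi, \|f - f^F\|).
\end{equation*}
Since by Step~1 we now have $f^F \in \hat F_1$ and since $\hat f_{n,1}\in \hat F_1$ by construction, all three functions $f, \hat f_{n,1}, f^F$ lie in $\hat F_1$, so that $\|f - f^F\|\leq d(\hat F_1)$ and $\|\hat f_{n,1}-f\|\leq d(\hat F_1)$. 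The norm-comparison inequality then converts $\|\hat f_{n,1}-f\|_{n,1}$ into a quantity dominated by $\max(b\phi, d(\hat F_1))$, which delivers exactly the bound claimed in the lemma.

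The main obstacle is the passage between empirical and population norms in both steps. The preselection set is defined with $\|\cdot\|_{n,1}$, while the excess-loss concentration inequality is naturally expressed in $\|\cdot\|_{L^2(\mu)}$. The second assertion of Lemma~\ref{lemma:finite-F-loss} bridges the two, but one must split cases depending on whether the relevant norm exceeds the threshold $b\phi$, and the absolute constants emerging from this case analysis must be absorbed into the constant $c$ featuring in the definition of $\hat F_1$. This is a routine bookkeeping step, but it is exactly the place where the specific value of $c$ promised in the remark following Theorem~\ref{thm:aggregation} gets pinned down.
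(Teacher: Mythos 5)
Your proof is correct and follows essentially the same route the paper alludes to: the paper's own proof is deferred to Lemma~4.4 of \cite{LM06} together with Lemma~\ref{lemma:finite-F-loss}, and your argument is precisely the combination of those two ingredients, namely using the excess-loss concentration to show membership of $f^F$ and to control $P\mathcal L_f$, and the norm-comparison inequality to pass between $\|\cdot\|_{n,1}$ and $\|\cdot\|_{L^2(\mu)}$. One small caveat: your remark that ``no extra union bound is required'' is not quite accurate, since Lemma~\ref{lemma:finite-F-loss} states the excess-loss concentration and the norm-comparison inequality as two separate high-probability events (each at level $1-4e^{-x}$ in the $\psi_2$ case), so intersecting them yields $1-8e^{-x}$; this is harmless and standard to fix by proving both simultaneously or rescaling $x$, but it should be acknowledged.
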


\begin{proof}
  The proof follows the lines of the proof of Lemma~4.4 in
  \cite{LM06}, together with Lemma~\ref{lemma:finite-F-loss}, so we
  don't reproduce it here.
\end{proof}

\section{Function spaces}
\label{sec:appendix_approximation}

In this section we give precise definitions of the spaces of functions
considered in the paper, and give useful related results. The
definitions and results presented here can be found
in~\cite{triebel06}, in particular in Chapter~5 which is about
anisotropic spaces, anisotropic multiresolutions, and entropy numbers
of the embeddings of such spaces (see Section~5.3.3) that we use in
particular to derive condition $(C_\beta)$, for the anisotropic Besov
space, see Section~\ref{sec:pena_least_squares}.



Let $\{ e_1, \ldots, e_d \}$ be the canonical basis of $\mathbb R^d$
and $\bs s = (s_1, \ldots, s_d)$ with $s_i > 0$ be a vector of
directional smoothness, where $s_i$ corresponds to the smoothness in
direction $e_i$. Let us fix $1 \leq p, q \leq \infty$. If $f : \mathbb
R^d \rightarrow \mathbb R$, we define $\Delta_h^k f$ as the
\emph{difference} of order $k \geq 1$ and step $h \in \mathbb R^d$,
given by $\Delta_h^1 f(x) = f(x + h) - f(x)$ and $\Delta_h^k f(x) =
\Delta_h^1(\Delta_h^{k-1}f)(x)$ for any $x \in \mathbb R^d$. 

\begin{definition}
  \label{def:besov}
  We say that $f \in L^p(\mathbb R^d)$ belongs to the anisotropic
  Besov space $B_{p, q}^{\bs s}(\mathbb R^d)$ if the semi-norm
  \begin{equation*}
    |f|_{B_{p, q}^{\bs s}(\mathbb R^d)} := \sum_{i=1}^d \Big(
    \int_0^1 (t^{-s_i} \norm{\Delta_{t e_i}^{k_i} f}_{p})^q
    \frac{dt}{t} \Big)^{1/q}
  \end{equation*}
  is finite (with the usual modifications when $p = \infty$ or $q =
  \infty$).
\end{definition}
We know that the norms
\begin{equation*}
  \norm{f}_{B_{p, q}^{\bs s}} := \norm{f}_p + |f|_{B_{p, q}^{\bs s}}
\end{equation*}
are equivalent for any choice of $k_i > s_i$. An equivalent definition
of the seminorm can be given using the directional differences and the
anisotropic distance, see Theorem~5.8 in~\cite{triebel06}.



Several explicit particular cases for the space $B_{p, q}^{\bs s}$ are
of interest. If $\bs s = (s, \ldots, s)$ for some $s > 0$, then $B_{p,
  q}^{\bs s}$ is the standard isotropic Besov space. When $p = q = 2$
and $s = (s_1, \ldots, s_d)$ has integer coordinates, $B_{2, 2}^{\bs
  s}$ is the anisotropic Sobolev space
\begin{equation*}
  B_{2, 2}^{\bs s} = W_2^{\bs s} = \Big\{ f \in L^2 : \sum_{i=1}^d
  \Big\| \frac{\partial^{s_i} f}{\partial x_i^{s_i}} \Big\|_2 < \infty
  \Big\}.
\end{equation*}
If $\bs s$ has non-integer coordinates, then $B_{2, 2}^{\bs s}$ is the
anisotropic Bessel-potential space
\begin{equation*}
  H^{\bs s} = \Big\{ f \in L^2 : \sum_{i=1}^d \Big\| (1 +
  |\xi_i|^2)^{s_i/2} \hat f(\xi) \Big\|_2 < \infty \Big\}.
\end{equation*}

As we mentioned below, Assumption~\ref{ass:entropy} is satisfied for
barely all smoothness spaces considered in nonparametric
literature. In particular, if $\mathcal F = B_{p,q}^{\bs s}$ is the
anisotropic Besov space defined above, $(C_\beta)$ is satisfied: it is
a consequence of a more general Theorem (see Theorem~5.30 in
\cite{triebel06}) concerning the entropy numbers of embeddings (see
Definition~1.87 in \cite{triebel06}). Here, we only give a simplified
version of this Theorem, which is sufficient to derive $(C_\beta)$ for
$B_{p,q}^{\bs s}$. Indeed, if one takes $\bs s_0 = \bs s$, $p_0 = p$,
$q_0 = q$ and $\bs s_1 = 0$, $p_0 = \infty$, $q_0 = \infty$ in
Theorem~5.30 from \cite{triebel06}, we obtain the following
\begin{theorem}
  \label{thm:anisotropic_entropy}
  Let $1 \leq p, q \leq \infty$ and $\bs s = (s_1, \ldots, s_d)$ where
  $s_i > 0$\textup, and let $\bs {\bar s}$ be the harmonic mean of
  $\bs s$ \textup(see~\eqref{eq:harmonic_mean}\textup). Whenever $\bs
  {\bar s} > d / p$\textup, we have
  \begin{equation*}
    B_{p, q}^{\bs s} \subset C(\mathbb R^d),
  \end{equation*}
  where $C(\mathbb R^d)$ is the set of continuous functions on
  $\mathbb R^d$\textup, and for any $\delta > 0$\textup, the sup-norm
  entropy of the unit ball of the anisotropic Besov space\textup,
  namely the set
  \begin{equation*}
    U_{p, q}^{\bs s} := \{ f \in B_{p, q}^{\bs s} :
    |f|_{B_{p,q}^{\bs s}} \leq 1 \}
  \end{equation*}
  satisfies
  \begin{equation}
    H_\infty(\delta, U_{p, q}^{\bs s}) \leq D \delta^{-\bs {\bar s} / d},
  \end{equation}
  where $D > 0$ is a constant independent of $\delta$.
\end{theorem}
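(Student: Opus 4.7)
The plan is to deduce both claims as a specialization of the general anisotropic embedding/entropy theorem in \cite{triebel06}, obtained by taking source parameters $\bs{s}_0 = \bs{s}$, $(p_0,q_0) = (p,q)$ and trivial target parameters $\bs{s}_1 = 0$, $(p_1,q_1) = (\infty,\infty)$, identifying $C(\mathbb R^d)$ locally with $B_{\infty,\infty}^0$. If a self-contained argument is preferred, I would instead proceed through the anisotropic Littlewood--Paley decomposition $f = \sum_{j \geq 0} f_j$, in which each $f_j$ is Fourier-localized on a dyadic shell whose scale is $2^{j/s_i}$ in the $i$-th coordinate. By the standard characterization (compare Theorem~5.8 of \cite{triebel06}), the Besov seminorm is equivalent to a weighted $\ell^q$-norm of the $L^p$-norms of the blocks $f_j$, with the weight $2^j$ absorbing all the directional smoothness through $\bs s$.

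For the continuous embedding $B_{p,q}^{\bs s} \subset C(\mathbb R^d)$, I would combine an anisotropic Nikol'ski\u{\i} inequality of the form $\norm{f_j}_\infty \leq C\, 2^{j d/(p \bs{\bar s})} \norm{f_j}_p$ with H\"older's inequality in the $j$-variable. Under the hypothesis $\bs{\bar s} > d/p$ the resulting series $\sum_j \norm{f_j}_\infty$ is controlled by a geometrically summable tail times $|f|_{B_{p,q}^{\bs s}}$, which forces uniform convergence of the decomposition, hence continuity of the limit, together with the pointwise majoration of $\norm{f}_\infty$ by $\norm{f}_{B_{p,q}^{\bs s}}$.

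For the entropy bound, I would pass to the equivalent anisotropic-wavelet sequence-space description of the unit ball $U_{p,q}^{\bs s}$. The number of active coefficients at dyadic level $j$ scales like $\prod_{i=1}^d 2^{j/s_i} = 2^{j \sum_i 1/s_i} = 2^{jd/\bs{\bar s}}$, which is precisely where the harmonic mean $\bs{\bar s}$ enters. Combining Sch\"utt-type covering-number estimates for finite-dimensional $\ell^p$-balls at each scale $j$ with an optimized allocation of the total $\delta$-budget across scales should yield $H_\infty(\delta,U_{p,q}^{\bs s}) \lesssim \delta^{-d/\bs{\bar s}}$, which is the exponent $\beta = d/\bs{\bar s}$ already used when invoking Assumption~\ref{ass:entropy} for anisotropic Besov balls elsewhere in the paper.

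The main obstacle is the anisotropic bookkeeping: carefully tracking how the directional scales combine into the harmonic mean, verifying the Nikol'ski\u{\i} inequality in the anisotropic setting, and aggregating the scale-by-scale entropy contributions without picking up spurious logarithmic factors. Once the anisotropic wavelet/atomic characterization is in place, the remaining volume and covering arguments are standard and follow the lines of Chapter~5 of \cite{triebel06}.
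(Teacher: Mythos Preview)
Your primary approach---specializing Theorem~5.30 of \cite{triebel06} with source parameters $(\bs s_0,p_0,q_0)=(\bs s,p,q)$ and target parameters $(\bs s_1,p_1,q_1)=(0,\infty,\infty)$---is exactly how the paper obtains the result; it gives no further proof beyond this citation. Your self-contained sketch via anisotropic Littlewood--Paley blocks and wavelet sequence-space coverings is a reasonable elaboration (and you correctly recover the exponent $d/\bs{\bar s}$ used elsewhere in the paper, the $\bs{\bar s}/d$ in the displayed statement being a typo).
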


For the isotropic Sobolev space, Theorem~\ref{thm:anisotropic_entropy}
was obtained in the key paper~\cite{birman_solomjak67} (see
Theorem~5.2 herein), and for the isotropic Besov space, it can be
found, among others, in~\cite{birge_massart00}
and~\cite{kerk_picard_replicant_03}.

\begin{remark}
  A more constructive computation of the entropy of anisotropic Besov
  spaces can be done using the replicant coding approach, which is
  done for Besov bodies in~\cite{kerk_picard_replicant_03}. Using this
  approach together with an anisotropic multiresolution analysis based
  on compactly supported wavelets or atoms, see Section~5.2
  in~\cite{triebel06}, we can obtain a direct computation of the
  entropy. The idea is to do a quantization of the wavelet
  coefficients, and then to code them using a replication of their
  binary representation, and to use 01 as a separator (so that the
  coding is injective). A lower bound for the entropy can be obtained
  as an elegant consequence of Hoeffding's deviation inequality for
  sums of i.i.d. variables and a combinatorial lemma.
\end{remark}

\par

\bibliographystyle{ims}


\begin{thebibliography}{39}
\expandafter\ifx\csname natexlab\endcsname\relax\def\natexlab#1{#1}\fi
\expandafter\ifx\csname url\endcsname\relax
  \def\url#1{\texttt{#1}}\fi
\expandafter\ifx\csname urlprefix\endcsname\relax\def\urlprefix{URL }\fi
\providecommand{\eprint}[2][]{\url{#2}}

\bibitem[{Adamczak(2008)}]{MR2424985}
\textsc{Adamczak, R.} (2008).
\newblock A tail inequality for suprema of unbounded empirical processes with
  applications to {M}arkov chains.
\newblock \textit{Electron. J. Probab.}, \textbf{13} no. 34, 1000--1034.

\bibitem[{Audibert(2009)}]{audibert-2009-37}
\textsc{Audibert, J.-Y.} (2009).
\newblock Fast learning rates in statistical inference through aggregation.
\newblock \textit{Ann. Statist.}, \textbf{37} 1591.
\newblock \urlprefix\url{doi:10.1214/08-AOS623}.

\bibitem[{Birg{\'e} and Massart(2000)}]{birge_massart00}
\textsc{Birg{\'e}, L.} and \textsc{Massart, P.} (2000).
\newblock An adaptive compression algorithm in {B}esov spaces.
\newblock \textit{Constr. Approx.}, \textbf{16} 1--36.

\bibitem[{Birman and Solomjak(1967)}]{birman_solomjak67}
\textsc{Birman, M.~{\v{S}}.} and \textsc{Solomjak, M.~Z.} (1967).
\newblock Piecewise polynomial approximations of functions of classes
  {$W_p^{\alpha}$}.
\newblock \textit{Mat. Sb. (N.S.)}, \textbf{73 (115)} 331--355.

\bibitem[{Catoni(2001)}]{catbook:01}
\textsc{Catoni, O.} (2001).
\newblock \textit{Statistical Learning Theory and Stochastic Optimization}.
\newblock Ecole d'{\'e}t{\'e} de Probabilit{\'e}s de Saint-Flour 2001, Lecture
  Notes in Mathematics, Springer, N.Y.

\bibitem[{Cucker and Smale(2002)}]{cucker_smale02}
\textsc{Cucker, F.} and \textsc{Smale, S.} (2002).
\newblock On the mathematical foundations of learning.
\newblock \textit{Bull. Amer. Math. Soc. (N.S.)}, \textbf{39} 1--49
  (electronic).

\bibitem[{Dalalyan and Tsybakov(2007)}]{dalalyan_tsybakov07}
\textsc{Dalalyan, A.~S.} and \textsc{Tsybakov, A.~B.} (2007).
\newblock Aggregation by exponential weighting and sharp oracle inequalities.
\newblock In \textit{COLT}. 97--111.

\bibitem[{Dudley(1999)}]{DudleyBook99}
\textsc{Dudley, R.~M.} (1999).
\newblock \textit{Uniform central limit theorems}, vol.~63 of \textit{Cambridge
  Studies in Advanced Mathematics}.
\newblock Cambridge University Press, Cambridge.

\bibitem[{Efron et~al.(2004)Efron, Hastie, Johnstone and
  Tibshirani}]{MR2060166}
\textsc{Efron, B.}, \textsc{Hastie, T.}, \textsc{Johnstone, I.} and
  \textsc{Tibshirani, R.} (2004).
\newblock Least angle regression.
\newblock \textit{Ann. Statist.}, \textbf{32} 407--499.
\newblock With discussion, and a rejoinder by the authors.

\bibitem[{Einmahl and Mason(1996)}]{EM:96}
\textsc{Einmahl, U.} and \textsc{Mason, D.~M.} (1996).
\newblock Some universal results on the behavior of increments of partial sums.
\newblock \textit{Ann. Probab.}, \textbf{24} 1388--1407.

\bibitem[{Gin{\'e} and Zinn(1984)}]{gz:84}
\textsc{Gin{\'e}, E.} and \textsc{Zinn, J.} (1984).
\newblock Some limit theorems for empirical processes.
\newblock \textit{Ann. Probab.}, \textbf{12} 929--998.

\bibitem[{Gy{\"o}rfi et~al.(2002)Gy{\"o}rfi, Kohler, Krzy{\.z}ak and
  Walk}]{kohler02}
\textsc{Gy{\"o}rfi, L.}, \textsc{Kohler, M.}, \textsc{Krzy{\.z}ak, A.} and
  \textsc{Walk, H.} (2002).
\newblock \textit{A distribution-free theory of nonparametric regression}.
\newblock Springer Series in Statistics, Springer-Verlag, New York.

\bibitem[{Hoffmann and Lepski(2002)}]{hoffmann_lepski02}
\textsc{Hoffmann, M.} and \textsc{Lepski, O.~V.} (2002).
\newblock Random rates in anisotropic regression.
\newblock \textit{The Annals of Statistics}, \textbf{30} 325--396.

\bibitem[{Juditsky et~al.(2008)Juditsky, Rigollet and Tsybakov}]{MR2458184}
\textsc{Juditsky, A.}, \textsc{Rigollet, P.} and \textsc{Tsybakov, A.~B.}
  (2008).
\newblock Learning by mirror averaging.
\newblock \textit{Ann. Statist.}, \textbf{36} 2183--2206.
\newblock
  \urlprefix\url{https://acces-distant.upmc.fr:443/http/dx.doi.org/10.1214/07-%
AOS546}.

\bibitem[{Juditsky et~al.(2005)Juditsky, Nazin, Tsybakov and
  Vayatis}]{juditsky_nazin05}
\textsc{Juditsky, A.~B.}, \textsc{Nazin, A.~V.}, \textsc{Tsybakov, A.~B.} and
  \textsc{Vayatis, N.} (2005).
\newblock Recursive aggregation of estimators by the mirror descent method with
  averaging.
\newblock \textit{Problemy Peredachi Informatsii}, \textbf{41} 78--96.

\bibitem[{Kerkyacharian et~al.(2001)Kerkyacharian, Lepski and
  Picard}]{kerk_lepski_picard01}
\textsc{Kerkyacharian, G.}, \textsc{Lepski, O.} and \textsc{Picard, D.} (2001).
\newblock Nonlinear estimation in anisotropic multi-index denoising.
\newblock \textit{Probab. Theory Related Fields}, \textbf{121} 137--170.

\bibitem[{Kerkyacharian et~al.(2007)Kerkyacharian, Lepski and
  Picard}]{kerk_lepski_picard07}
\textsc{Kerkyacharian, G.}, \textsc{Lepski, O.} and \textsc{Picard, D.} (2007).
\newblock Nonlinear estimation in anisotropic multiindex denoising. {S}parse
  case.
\newblock \textit{Teor. Veroyatn. Primen.}, \textbf{52} 150--171.

\bibitem[{Kerkyacharian and Picard(2003)}]{kerk_picard_replicant_03}
\textsc{Kerkyacharian, G.} and \textsc{Picard, D.} (2003).
\newblock Replicant compression coding in {B}esov spaces.
\newblock \textit{ESAIM Probab. Stat.}, \textbf{7} 239--250 (electronic).

\bibitem[{Lecu{\'e}(2006)}]{LecJMLR:06}
\textsc{Lecu{\'e}, G.} (2006).
\newblock Lower bounds and aggregation in density estimation.
\newblock \textit{J. Mach. Learn. Res.}, \textbf{7} 971--981.

\bibitem[{Lecu{\'e} and Mendelson(2009{\natexlab{a}})}]{LM06}
\textsc{Lecu{\'e}, G.} and \textsc{Mendelson, S.} (2009{\natexlab{a}}).
\newblock Aggregation via empirical risk minimization.
\newblock \textit{Probab. Theory Related Fields}, \textbf{145} 591--613.
\newblock
  \urlprefix\url{https://acces-distant.upmc.fr:443/http/dx.doi.org/10.1007/s00%
440-008-0180-8}.

\bibitem[{Lecu{\'e} and Mendelson(2009{\natexlab{b}})}]{LM2}
\textsc{Lecu{\'e}, G.} and \textsc{Mendelson, S.} (2009{\natexlab{b}}).
\newblock Sharper lower bounds on the performance of the empirical risk
  minimization algorithm.
\newblock \textit{To appear in Bernoulli}.

\bibitem[{Ledoux and Talagrand(1991)}]{ledoux_talagrand91}
\textsc{Ledoux, M.} and \textsc{Talagrand, M.} (1991).
\newblock \textit{Probability in {B}anach spaces}, vol.~23 of
  \textit{Ergebnisse der Mathematik und ihrer Grenzgebiete (3) [Results in
  Mathematics and Related Areas (3)]}.
\newblock Springer-Verlag, Berlin.
\newblock Isoperimetry and processes.

\bibitem[{Lee et~al.(1996)Lee, Bartlett and Williamson}]{lbw:96}
\textsc{Lee, W.~S.}, \textsc{Bartlett, P.~L.} and \textsc{Williamson, R.~C.}
  (1996).
\newblock The importance of convexity in learning with squared loss.
\newblock In \textit{Proceedings of the Ninth Annual Conference on
  Computational Learning Theory}. ACM Press, 140--146.

\bibitem[{Leung and Barron(2006)}]{leung_barron06}
\textsc{Leung, G.} and \textsc{Barron, A.~R.} (2006).
\newblock Information theory and mixing least-squares regressions.
\newblock \textit{IEEE Trans. Inform. Theory}, \textbf{52} 3396--3410.

\bibitem[{Loustau(2009)}]{loustau09}
\textsc{Loustau, S.} (2009).
\newblock Penalized empirical risk minimization over besov spaces.
\newblock \textit{Electronic Journal of Statistics}, \textbf{3} 824--850.

\bibitem[{Massart(2007)}]{massart03}
\textsc{Massart, P.} (2007).
\newblock \textit{Concentration inequalities and model selection}, vol. 1896 of
  \textit{Lecture Notes in Mathematics}.
\newblock Springer, Berlin.
\newblock Lectures from the 33rd Summer School on Probability Theory held in
  Saint-Flour, July 6--23, 2003, With a foreword by Jean Picard.

\bibitem[{Mendelson(2004)}]{MR2075996}
\textsc{Mendelson, S.} (2004).
\newblock On the performance of kernel classes.
\newblock \textit{J. Mach. Learn. Res.}, \textbf{4} 759--771.
\newblock \urlprefix\url{http://dx.doi.org/10.1162/1532443041424337}.

\bibitem[{Mendelson(2008)}]{m:08}
\textsc{Mendelson, S.} (2008).
\newblock Lower bounds for the empirical minimization algorithm.
\newblock \textit{IEEE Trans. Inform. Theory}, \textbf{54} 3797--3803.

\bibitem[{Mendelson and Neeman(2009)}]{Mendelson08regularizationin}
\textsc{Mendelson, S.} and \textsc{Neeman, J.} (2009).
\newblock Regularization in kernel learning.
\newblock Tech. rep.
\newblock To appear in Annals of Statistics, availble at
  http://www.imstat.org/aos/.

\bibitem[{Neumann(2000)}]{neumann00}
\textsc{Neumann, M.~H.} (2000).
\newblock Multivariate wavelet thresholding in anisotropic function spaces.
\newblock \textit{Statist. Sinica}, \textbf{10} 399--431.

\bibitem[{Talagrand(1996)}]{MR1419006}
\textsc{Talagrand, M.} (1996).
\newblock New concentration inequalities in product spaces.
\newblock \textit{Invent. Math.}, \textbf{126} 505--563.

\bibitem[{Tibshirani(1996)}]{MR1379242}
\textsc{Tibshirani, R.} (1996).
\newblock Regression shrinkage and selection via the lasso.
\newblock \textit{J. Roy. Statist. Soc. Ser. B}, \textbf{58} 267--288.

\bibitem[{Triebel(2006)}]{triebel06}
\textsc{Triebel, H.} (2006).
\newblock \textit{Theory of function spaces. {III}}, vol. 100 of
  \textit{Monographs in Mathematics}.
\newblock Birkh\"auser Verlag, Basel.

\bibitem[{Tsybakov(2003{\natexlab{a}})}]{tsybakov03}
\textsc{Tsybakov, A.} (2003{\natexlab{a}}).
\newblock \textit{Introduction à l'estimation non-paramétrique}.
\newblock Springer.

\bibitem[{Tsybakov(2003{\natexlab{b}})}]{tsy:03}
\textsc{Tsybakov, A.~B.} (2003{\natexlab{b}}).
\newblock Optimal rates of aggregation.
\newblock \textit{Computational Learning Theory and Kernel Machines.
  B.Sch{\"o}lkopf and M.Warmuth, eds. Lecture Notes in Artificial
  Intelligence}, \textbf{2777} 303--313.
\newblock Springer, Heidelberg.

\bibitem[{van~de Geer(2000)}]{van_de_geer00}
\textsc{van~de Geer, S.~A.} (2000).
\newblock \textit{Applications of empirical process theory}, vol.~6 of
  \textit{Cambridge Series in Statistical and Probabilistic Mathematics}.
\newblock Cambridge University Press, Cambridge.

\bibitem[{van~der Vaart and Wellner(1996)}]{vdVW:96}
\textsc{van~der Vaart, A.~W.} and \textsc{Wellner, J.~A.} (1996).
\newblock \textit{Weak convergence and empirical processes}.
\newblock Springer Series in Statistics, Springer-Verlag, New York.
\newblock With applications to statistics.

\bibitem[{Wahba(1990)}]{wahba90}
\textsc{Wahba, G.} (1990).
\newblock \textit{Spline models for observational data}, vol.~59 of
  \textit{CBMS-NSF Regional Conference Series in Applied Mathematics}.
\newblock Society for Industrial and Applied Mathematics (SIAM), Philadelphia,
  PA.

\bibitem[{Yang(2000)}]{MR1762904}
\textsc{Yang, Y.} (2000).
\newblock Mixing strategies for density estimation.
\newblock \textit{Ann. Statist.}, \textbf{28} 75--87.
\newblock \urlprefix\url{http://dx.doi.org/10.1214/aos/1016120365}.

\end{thebibliography}

\end{document}